\theoremstyle{plain}
\newtheorem{theorem}{Theorem}[section]
\newtheorem{proposition}[theorem]{Proposition}
\newtheorem{lemma}[theorem]{Lemma}
\newtheorem{corollary}[theorem]{Corollary}
\theoremstyle{definition}
\newtheorem{definition}[theorem]{Definition}
\newtheorem{assumption}[theorem]{Assumption}
\theoremstyle{remark}
\newtheorem{remark}[theorem]{Remark}
\theoremstyle{claim}
\crefname{claim}{claim}{claims}
\crefname{assumption}{assumption}{assumptions}
\newcommand{\Eb}{\mathbb{E}}
\newcommand{\Pb}{\mathbb{P}}
\newcommand{\Qb}{\mathbb{Q}}
\newcommand{\Dc}{\mathcal{D}}
\newcommand{\Tc}{\mathcal{T}}
\newcommand{\Ec}{\mathcal{E}}
\newcommand{\Pc}{\mathcal{P}}
\providecommand{\keywords}[1]
{
  \small	
  \textbf{\textit{Keywords---}} #1
}
\newcommand{\Db}{\mathbb{D}}
\newcommand{\mbf}{\mathbf{m}}
\newcommand{\Mbf}{\mathbf{M}}
\newcommand{\btau}{\boldsymbol{\tau}}
\newcommand{\KL}{\mathrm{kl}}
\newcommand{\TV}{\mathtt{TV}}
\title{Robust Offline Reinforcement Learning for Non-Markovian Decision Processes}
\author{
 Ruiquan Huang \\
 {\small Department of Electrical Engineering}\\
 \small The Pennsylvania State University\\
 \small University Park, PA 16801\\
 \small \texttt{rzh5514@psu.edu} 
 \and
 Yingbin Liang \\
 \small Department of Electrical and Computer Engineering\\
 \small The Ohio State University\\
 \small Columbus, OH 43210\\
 \small \texttt{liang.889@osu.edu}
 \and 
 Jing Yang \\
 \small Department of Electrical Engineering\\
 \small The Pennsylvania State University\\
 \small University Park, PA 16801\\
 \small \texttt{yangjing@psu.edu}
}
\date{}
\begin{document}

\maketitle

\begin{abstract}
Distributionally robust offline reinforcement learning (RL) aims to find a policy that performs the best under the worst environment within an uncertainty set using an offline dataset collected from a nominal model. While recent advances in robust RL focus on Markov decision processes (MDPs), robust non-Markovian RL is limited to planning problem where the transitions in the uncertainty set are known. In this paper, we study the learning problem of robust offline non-Markovian RL. Specifically, when the nominal model admits a low-rank structure, we propose a new algorithm, featuring a novel dataset distillation and a lower confidence bound (LCB) design for robust values under different types of the uncertainty set. We also derive new dual forms for these robust values in non-Markovian RL, making our algorithm more amenable to practical implementation. By further introducing a novel type-I concentrability coefficient tailored for offline low-rank non-Markovian decision processes, we prove that our algorithm can find an $\epsilon$-optimal robust policy using $O(1/\epsilon^2)$ offline samples. Moreover, we extend our algorithm to the case when the nominal model does not have specific structure. With a new type-II concentrability coefficient, the extended algorithm also enjoys polynomial sample efficiency under all different types of the uncertainty set.

\end{abstract}

\keywords{Robust RL, Offline RL, POMDP, Non-markovian Decision Processes}

\section{Introduction}

In reinforcement learning (RL)~\citep{sutton2018reinforcement}, an agent learns to make decisions by interacting with an unknown environment and maximizes the total received reward. This learning framework has found wide applications such as video games~\citep{ye2020mastering}, robotics~\citep{ibarz2021train}, and language model fine-tuning~\citep{ouyang2022training}. However, in many real-world scenarios such as autonomous driving and industrial control, direct training in the actual environment and generating data samples in an online manner is often unrealistic or even harmful. Offline RL~\citep{lange2012batch,levine2020offline}, which leverages dataset collected in the past for learning, thus becomes important. Yet, pre-collected data suffers from either noisy simulation or out-of-date measurements, and causes the so-called {\it Sim-to-real} problem~\citep{peng2018sim,zhao2020sim}.

Distributionally robust RL~\citep{iyengar2005robust} was proposed to address the aforementioned issue by learning an optimal policy 
under the worst environment in an {\it uncertainty set}. Robust RL has been studied under Markov decision processes (MDPs) with known transition probabilities in~\citet{xu2010distributionally,wiesemann2013robust,yu2015distributionally}. It has also been studied for MDPs with unknown transitions~\citep{panaganti2022sample,wang2021online,dong2022online,panaganti2022robust,blanchet2023double}. 
Regarding robust RL under non-Markovian decision processes, only robust POMDP with known transitions in the uncertainty set has been studied in ~\citet{osogami2015robust,rasouli2018robust,nakao2021distributionally}. To the best of our knowledge, there has not been any study on robust non-Markovian RL with unknown transitions.

Several challenges need to be handled for exploring the robust offline non-Markovian RL with unknown transitions. (i) Considering non-Markovian processes in the entire uncertainty set can cause dramatic increase of sample complexity. The challenge then lies in how to exploit the structure conditions of these non-Markovian processes for sample efficient design. (ii) Existing coverage condition for offline non-Markovian RL \citep{huang2023provably} is strong and can incur more stringent coverage conditions for {\it robust offline RL}. It is thus critical to relax such a condition to design desirable offline algorithms. These challenges pose the following fundamental open question for us to address:

{ \it Q: Can we design a provably efficient algorithm for learning {\bf robust non-Markovian} decision processes using {\bf offline data}, which (i) effectively exploit the structure of non-Markovian decision processes, and (ii) relaxes the coverage condition for an offline dataset?}

{\bf Our contributions.} In this paper, we provide the {\bf first study} of robust non-Markovian RL with unknown transitions, i.e., unknown uncertainty set. We focus on the setting where an offline dataset is provided and is collected from one (unkown) {\it nominal} environment in an uncertainty set, and the environments are non-Markovian. We summarize our contributions as follows.
\begin{itemize}[topsep=0pt, leftmargin=10pt]

\item  {\bf New algorithm design.}  We propose a novel algorithm for robust offline RL when the nominal model admits a low-rank structure of predictive state representations (PSRs).  In particular, we consider two types of uncertainty sets, namely $\Tc$-type and $\Pc$-type, for non-Markovian decision processes. Our algorithm features two new elements: (i) a novel {\it dataset distillation,} where the distilled dataset ensures that the estimated model from vanilla maximum likelihood estimation performs well on the distilled data; and (ii) {\it lower confidence bound design for the robust value}, which is derived by leveraging the low-rank structure and the newly developed dual forms for robust values under both $\Tc$-type and $\Pc$-type uncertainty sets.
\vspace{-0.05in}

\item {\bf Novel coverage condition and theory for low-rank non-Markovian RL.} We characterize the near-optimality guarantee of our algorithm based on novel {\it type-I concentrability coefficient} for offline learning in low-rank non-Markovian decision processes. This coefficient is much more relaxed than that proposed for offline low-rank non-Markovian RL in~\citet{huang2023provably}. This relaxation is due to refined analysis on the distributional shift. We also identify a wellness condition number of the nominal model. Our results suggests that as long as the concentrability coefficient and the condition number are finite, our algorithm can find a robust policy with suboptimal gap vanishing at a rate of $O(1/\sqrt{N})$, where $N$ is the number of offline samples.
\vspace{-0.05in}

\item {\bf Algorithm and theory for general non-Markovian RL.} We develop a robust algorithm for general offline non-Markovian RL where the transition probabilities may not have any structure. We show that our algorithm can learn a near-optimal robust policy efficiently provided that the offline dataset has finite {\it Type-II concentrability coefficient}, which is also a much weaker condition than that proposed in \citet{huang2023provably}.
\end{itemize}

\vspace{-3mm}
\section{Related Work}
\vspace{-0.05in}

{\bf Offline RL.} 
Offline RL problem has been investigated extensively in the literature \citep{antos2008learning,bertsekas2011approximate,lange2012batch,chen2019information,xie2020q,levine2020offline,xie2021bellman}. Many methods such as Q-learning~\citep{jin2021pessimism}, its variant Fitted Q-Iteration (FQI)~\citep{gordon1995stable,ernst2005tree,munos2008finite,farahmand2010error,lazaric2012finite,liu2020provably,xie2021bellman}, value iteration~\citep{nguyen2023instance,xiong2022nearly}, policy gradient~\citep{zhou2017end}, and model-based approaches~\citep{uehara2021pessimistic,uehara2021representation} have been shown to be effective in offline RL either theoretically or empirically.   
Among them, the pessimism principle is a popular approach to handle the distribution shift caused by an offline policy. Empirically, pessimistic approaches perform well on simulation control tasks \citep{kidambi2020morel,yu2020mopo,kumar2020conservative,liu2020provably,chang2021mitigating}. 
On the theoretical side, pessimism allows us to obtain the Probably Approximately Correct (PAC) guarantee on various models under a coverage condition~\citep{jin2021pessimism,yin2021near,rashidinejad2021bridging,zanette2021provable,uehara2021pessimistic,uehara2021representation}. In this work, we also adopt the pessimism principle and is the first to establish its efficiency for robust offline non-Markovian RL. We develop new coverage conditions in order to guarantee the provable efficiency.

{\bf Robust RL.} Robust MDP (RMDP) was first introduced by~\citet{iyengar2005robust,nilim2005robust}. 
Most of RMDP research in the literature focuses on the {\it planning problem}~\citep{xu2010distributionally,wiesemann2013robust,tamar2014scaling,yu2015distributionally,mannor2016robust,petrik2019beyond,wang2023robust,wang2023model}, providing computationally efficient algorithms. Based on the data generation process, studies on the {\it learning problem} in RMDP can be primarily divided into three categories. RMDP with generative model has been studied in~\citet{zhou2021finite,yang2022toward,panaganti2022sample,shi2024curious}, 
where a generative model can uniformly collect the offline data corresponding to each and every state-action pair.
Online RMDP has been studied in~\citet{roy2017reinforcement,lim2013reinforcement,wang2021online,dong2022online}, which allows allows the agent adaptively sample states and actions in the nominal model. Offline RMDP has been studied in~\citet{panaganti2022robust,blanchet2023double,shi2024distributionally,wang2024sample}, where only a pre-collected dataset is provided, which makes the problem more difficult due to distributional shift~\citep{levine2020offline}.

In addition to RMDP, several works have studied robust POMDPs with assumptions such as known transitions in the uncertainty set~\citep{osogami2015robust,rasouli2018robust,nakao2021distributionally}. In particular, they focus on the planning problem in robust POMDPs, i.e., finding an optimal policy with full information of the uncertainty set, while our work focus on the learning problem that characterizes the sample complexity 
with unknown uncertainty sets.

{\bf Non-Markovian RL.} Learning in non-Markovian decision processes is both computationally and statistically intractable in general~\citep{mossel2005learning, mundhenk2000complexity, papadimitriou1987complexity,vlassis2012computational,krishnamurthy2016pac}. Recently, a line of research \citep{boots2011closing,jiang2018completing,zhang2022reinforcement,zhan2022pac,uehara2022provably,liu2022optimistic,chen2022partially,zhong2022posterior,huang2023provably} has obtained polynomial sample complexity with various structural conditions.  Among them, low-rank structure~\citep{uehara2022provably,liu2022optimistic} has been proved to admit predictive state representations (PSR)~\citep{littman2001predictive,singh2012predictive}, and captures and
generalizes a rich subclass of non-Markov decision-making problems such as MDPs and POMDPs. However, none of these works studies robust offline non-Markovian RL.

\vspace{-3mm}
\section{Preliminaries}\label{sec:pre}
\vspace{-0.05in}
{\bf Notations.}  For a vector $x$, $\|x\|_A$ stands for $\sqrt{x^{\top}Ax}$, and its $i$-th coordinate is represented as $[x]_i$. For functions $\mathbb{P}$ and $\mathbb{Q}$ (not necessarily probability measures) over a set $\mathcal{X}$, the $\ell_1$ distance between them is $ \left\|\Pb -\Qb \right\|_1 = \sum_x|\Pb(x)-\Qb(x)|$, while the hellinger-squared distance is defined as $\mathtt{D}_{\mathtt{H}}^2 (\Pb(x),\Qb(x)) = \frac{1}{2}\sum_x (\sqrt{\Pb(x)} - \sqrt{\Qb(x)} )^2$. For a sequence \( \{x_n, x_{n+1}, \ldots, x_m\} \) with \( n < m \), we denote it shortly as \( x_{n:m} \). For a sequence of sets \( \mathcal{S}_1, \mathcal{S}_2, \ldots, \mathcal{S}_n \), the Cartesian product is represented by \( \mathcal{S}_1 \times \mathcal{S}_2 \times \cdots \times \mathcal{S}_n \). When these sets are identical, this product simplifies to \( \mathcal{S}_1^n \). $\Delta(\mathcal{S})$ consists of all probability distributions over the set $\mathcal{S}$.

\vspace{-3mm}
\subsection{Non-Markovian Decision Processes} 

In this work, we consider finite-horizon episodic (non-Markovian) decision processes with an observation space $\mathcal{O}$, an action space $\mathcal{A}$ and a horizon $H$. The dynamics of the process is determined by a set of distributions $ \{ \Tc_h(\cdot|o_{1:h}, a_{1:h}) \in \Delta(\mathcal{O}) \}_{h=1}^{H-1}$, where $o_{1:h}\in\mathcal{O}^h$ denotes a sequence of observations up to time $h$, $ a_{1:h}\in\mathcal{A}^{h}$ denotes a sequence of actions up to time $h$, and $\Tc_h(o_{h+1} |o_{1:h},a_{1:h})$ denotes the probability of observing $o_{h+1}$ given history $o_{1:h},a_{1:h}$.  A policy $\pi$ is defined by a collection of distributions $\pi = \{\pi_h: \mathcal{O}^{h}\times\mathcal{A}^{h-1} \rightarrow \Delta(\mathcal{A}) \}_{h=1}^H$ chosen by an agent, with each $\pi_h(\cdot|\tau_{h-1},o_h)$ being a distribution over $\mathcal{A}$.  
Specifically, at the beginning of each episode, the process starts at a fixed observation $o_1\in\mathcal{O}$ at time step 1.
After observing $o_1$, the agent samples (takes) an action $a_1\in\mathcal{A}$ according to the distribution (policy) $\pi_1(\cdot|o_1)$, and the process transits to $o_2\in\mathcal{O}$, which is sampled according to the distribution $\Tc_1(\cdot|o_1,a_1)$. 
Then, at any time step $h\geq 2$, due to the non-Markovian nature, the agent samples an action $a_h$ based on all past information $(o_{1:h}, a_{1:h-1})$, and the process transits to $o_{h+1}$, sampled from $\Tc_h(o_{h+1}|o_{1:h}, a_{1:h})$. The interaction terminates after time step $H$.

Notably, the general non-Markovian decision-making problem subsumes not only fully observable MDPs but also POMDPs as special cases. In MDPs, $\Tc_{h-1}(o_h|\tau_{h-1}) = \Tc_{h-1}(o_h|o_{h-1},a_{h-1})$, and in POMDPs, $\Tc_{h-1}(o_h|\tau_{h-1})$ can be factorized as $\Tc_{h-1}(o_h|\tau_{h-1}) = \sum_s\mathbb{O}_h(o_h|s_h)\mathbb{T}_{h-1}(s_h|s_{h-1},a_{h-1})\cdots\mathbb{T}_1(s_2|s_1,a_1)$, where $\{s_t\}_{t=1}^H$ represents unobserved states, $\{\mathbb{O}_t\}_{t=1}^H$ and $\{\mathbb{T}_t\}_{t=1}^H$ are called emission distributions and transition distributions, respectively.

We note that an alternative equivalent description of the process dynamics is the set of probabilities $\{\Pc(o_{1:H}|a_{1:H-1}) = \Tc_{H-1}(o_H|o_{1:H-1}, a_{1:H-1})\cdots\Tc_1(o_2|o_1,a_1), \forall o_{1:H},a_{1:H-1} \}$,  which is also frequently used in this paper. In fact, both $\Pc$ and $\Tc$ fully determine the dynamics with an additional constraint for $\Pc$, that is, $\sum_{o_{h+1:H}}\Pc(o_{1:H}|a_{1:H-1})$ is constant with respect to $a_{h:H-1}$ for all action sequence $a_{h:H-1}$. Hence, we use $\mathbb{D} = \{\Tc_h,\Pc\}$ to represent the model dynamics. To make the presentation more explicit, we introduce a unified parameterization $\theta\in \Theta\subset \mathbb{R}^D$ for the process dynamics, written as $\mathbb{D}_{\theta} = \{\Tc_h^{\theta}, \Pc^{\theta}\}$. Here, $\Theta$ is the complete set of model parameters and $D$ is usually a large integer. We also use $\theta$ to refer to a model.

For a given reward function $R:(\mathcal{O}\times\mathcal{A})^H\rightarrow [0,1]$, and any policy $\pi$, the value of the policy $\pi$ under the dynamics $\mathbb{D}_{\theta}$ is denoted by $V_{\theta, R}^{\pi} = \Eb_{\tau_H\sim \Db_{\theta}^{\pi} } [R(o_{1:H},a_{1:H})]$. We remark here that the goal of an agent in standard RL is to find an $\epsilon$-optimal policy $\hat{\pi}$ such that $\max_{\pi}V_{\theta,R}^{\pi} - V_{\theta,R}^{\hat{\pi}} \leq \epsilon$ for some small error $\epsilon>0$ under a single model $\theta$.

\vspace{-2mm}
\subsection{Robust RL} While standard RL assumes that the system dynamics remains the same, robust RL diverges from this setting by focusing on the `worst-case' or {\it robust value} in situations where the parameter $\theta$ can change within a defined uncertainty set $B\subset \mathbb{R}^D$ (where $B$ will be defined below). Mathematically, given a policy $\pi$, the robust value under the uncertainty set $B$ is defined by
$V_{B, R}^\pi = \min_{\theta\in B} V_{\theta, R}^{\pi}$.

In this paper, we consider two types of uncertainty sets defined respectively on $\Tc$ and $\Pc$ via $f$-divergence \(\mathtt{D}_f(P, P') = \int f(dP/dP')dP'\), where $f$ is a convex function for measuring the uncertainty.  Given a {\it nominal} model $\theta^*$,  a $\Tc$-type uncertainty set under $f$-divergence with center $\theta^*$ and radius 
$\xi$ is defined as 
\begin{align}
&B_{\Tc}^f(\theta^*) :=   \left\{ \theta: \mathtt{D}_f \left(\Tc_h^{\theta}(\cdot|\tau_h), \Tc_h^{\theta^*}(\cdot|\tau_h) \right) \leq \xi, \forall h, \tau_h \right\}. \label{eqn: T uncertain set}
\end{align}

Similarly, a $\Pc$-type uncertainty set under $f$-divergence with center $\theta^*$ and radius $\xi$ is defined as  
\begin{align} 
&B_{\Pc}^f(\theta^*) :=   \left\{ \theta:  \mathtt{D}_{f}\left(\Pc^{\theta}(\cdot|a_{1:H}), \Pc^{\theta^*}(\cdot|a_{1:H})\right) \leq \xi, \forall a_{1:H} \right\}. \label{eqn: P uncertain set}
\end{align}

For simplicity, we set $\xi$ to be a constant throughout the paper. Our proposed algorithm and analysis can be easily extended to the case when $\xi$ is a function of $\tau_h$ or $a_{1:H}$. Notably, the $\Tc$-type uncertainty set reduces to the $\mathcal{S}\times \mathcal{A}$-rectangular uncertainty set~\citep{iyengar2005robust} if $\Db$ reduces to a Markov decision process, i.e., $\Tc_h(\cdot|\tau_h) = \Tc_h(\cdot|o_h,a_h)$. However, due to the non-Markovian nature of our problem, we also consider the $\Pc$-type uncertainty set as $\Pc$ also fully determines the process. 

We study two example types of $f$-divergence, which corresponds to two classical divergences. (i) If $f_1(t) = |t-1|/2$, then $\mathtt{D}_{f_1}$ is called {\it total variation distance}; (ii) If $f_2(t) = t\log t$, then $\mathtt{D}_{f_2}$ is called {\it Kullback–Leibler (KL) divergence}.  We also use $B^i$ to represent $B^{f_i}$ for simplicity.

We focus on  {\bf robust offline RL}, where the agent is provided with an offline dataset $\mathcal{D} = \{\tau_H^n\}_{n=1}^N$ that contains $N$ sample trajectories collected via a {\it behavior} policy $\rho$ under the nominal model $\theta^*$. Given the type of the uncertain set $B\in\{B_{\Tc}^i, B_{\Pc}^i\}_{i=1}^2$ (without knowing the center $\theta^*$), the learning goal of the agent is to find a near-optimal robust policy $\hat{\pi}$ using this offline dataset $\mathcal{D}$ such that
\[\max_{\pi}\min_{\theta\in B} V_{\theta, R}^{\pi}  - \min_{\theta\in B} V_{\theta, R}^{\hat{\pi}} \leq \epsilon. \]

\subsection{Low-rank Non-Markovian Decision Process }\label{sec: psr define}

We first introduce some additional notations that are helpful to describe low-rank non-Markovian decision processes. Given a historical trajectory at time step $h$ as $\tau_h:= (o_{1:h},a_{1:h})$, we denote a future trajectory as $\omega_h:= (o_{h+1:H},a_{h+1:H})$. The set of all $\tau_h$ is denoted by $\mathcal{H}_h = (\mathcal{O}\times\mathcal{A})^{h}$ and the set of all future trajectories is denoted by $\Omega_h = (\mathcal{O}\times\mathcal{A})^{H-h}$. In addition, we add a superscript $o$ or $\mathrm{a}$ to either historical trajectory $\tau$ or a future trajectory to represent the observation sequence and the action sequence contained in that trajectory, respectively. For example, $\omega_h^o = o_{h+1:H}$ and $\omega_h^a = a_{h+1:H}$. To better describe a policy, we also use the notation $x_h = (\tau_{h-1},o_h) = (o_{1:h},a_{1:h-1})$ as the available information for the learning agent at time step $h$. We remark that, having $\tau_h, x_h$ in the same equation implies that $\tau_h = (x_h,a_h).$

Recall that a non-Markov decision process $\theta$ can be fully determined by $\Pc^{\theta}$. We introduce a collection of dynamic matrices $\{\mathbb{M}_h^{\theta} \in \mathbb{R}^{|\mathcal{H}_h|\times |\Omega_h|} \}_{h=1}^H$. The entry at the $\tau_h$-th row and $\omega_h$-th column of $\mathbb{M}_h^{\theta}$ is $\Pc^{\theta}(\omega_h^o, \tau_h^o | \tau_h^a, \omega_h^a)$.

\begin{definition}[Rank-$r$ non-Markov decision process]
    A non-Markov decision process $\theta$ is rank $r$, if for any $h$, the model dynamic matrix $\mathbb{M}_h^{\theta}$ has rank $r$. 
\end{definition}

 It has been proved that any low-rank sequential decision-making problem $\theta$ admits Predictive State Representations (PSR) \citep{liu2022optimistic} if for each $h$, there exists a set of future trajectories, namely, core tests known to the agent, $\mathcal{Q}_h = \{\mathbf{q}_{h,1},\ldots, \mathbf{q}_{h,d}\} \subset \Omega_h$,
such that the submatrix restricted to these tests (columns) $\mathbb{M}_h^{\theta}[\mathcal{Q}_h]$ has rank $r$, where $d\geq r$ is a positive integer\footnote{Setting the same $d$ for different $h$ is for notation simplicity.}. 
We consider all low-rank problems with the same set of core tests $\{\mathcal{Q}_h\}_h^H$, and have the following representations. 
\begin{align*}
    &\Pc^{\theta}(o_H,\ldots,o_1 | a_1,\dots, a_H) = \mbf(\omega_h)^{\top} \psi(\tau_h), \quad \phi_h = \sum_{o_H,\ldots,o_{h+1}}\mbf(\omega_h).
\end{align*}
where $\mbf(\omega_h)\in\mathbb{R}^{d }$, and $\psi(\tau_h)\in\mathbb{R}^{d }$. We denote  $\theta=(\phi_h,\psi(\tau_h))$ to be the representation.
In particular, $\Theta$ is the set of all low-rank problems with the same core tests $\{\mathcal{Q}_h\}_{h=1}^H$.

We assume $\psi_0$ is known to the agent~\citep{huang2023provably}, and employ a {\it bar} over a feature $\psi(\tau_h)$ to denote its normalization $\bar{\psi}(\tau_h) = \psi(\tau_h)/\phi_h^{\top}\psi(\tau_h)$. $\bar{\psi}(\tau_h)$ is also known as the prediction vector \citep{littman2001predictive} or prediction feature of $\tau_h$, since $[\bar{\psi}(\tau_h)]_{\ell}$ is the probability of observing $\mathbf{q}_{h,\ell}^{\mathrm{o}}$ given the history $\tau_h$ and taking action sequence $\mathbf{q}_{h,\ell}^{\mathrm{a}}$.

Moreover, let $\mathcal{Q}_h^A = \{\mathbf{q}_{h,\ell}^{\mathrm{a}}\}_{\ell=1}^{d}$ be the set of action sequences that are part of core tests, constructed by eliminating any repeated action sequence. $\mathcal{Q}_h^A$ known as the set of core action sequences.

We further assume that PSRs studied in this paper are well-conditioned, as specified in \Cref{assmp:well-condition}. Such an assumption and its variants are commonly adopted in the study of PSRs \citep{liu2022optimistic,chen2022partially,zhong2022posterior,huang2023provably}.

\begin{assumption}[$\gamma$-well-conditioned PSR]\label{assmp:well-condition}
A PSR $\theta$ is said to be $\gamma$-well-conditioned if
\begin{align*}
\textstyle    \forall h,~~ \max_{x\in\mathbb{R}^{d_h}:\|x\|_1\leq 1} \max_{\pi,\tau_h} \sum_{\omega_h}\pi(\omega_h|\tau_h)|\mbf(\omega_h)^{\top}x|\leq \frac{1}{\gamma},
\end{align*}
where $\pi(\omega_h|\tau_h) = \pi(a_H|x_H)\cdots\pi(a_{h+1}|x_{h+1}).$
\end{assumption}
\Cref{assmp:well-condition} requires that the error of estimating $\theta$ does not significantly blow up when the estimation error $x$ of estimating the probability of core tests is small.

 \section{Robust Offline RL with Low-rank Nominal Model}\label{sec:main psr} 

In this section, we study the setting when the nominal model has {\em low-rank} structure (with rank $r$)~\citep{zhan2022pac,liu2022optimistic,chen2022partially,huang2023provably}. 
To unify the notations, we use $B_{\mathrm{u}}^i(\theta)$ to denote the uncertainty set, where $\mathrm{u}$ is either symbol $\Tc$ or $\Pc$.

\vspace{-3mm}
\subsection{Algorithm Design}\label{sec:psr alg}

We present our algorithm for robust offline RL when the nominal model enjoys low-rank structure. Our algorithm features {\bf two main novel elements:} (a) dataset distillation, which allows the estimated prediction features of distilled data being close to the true features, and thus enables us to construct (b) a lower confidence bound design for the robust value $V_{B_{\mathrm{u}}^i(\theta^*), R}^{\pi}$. In particular, we develop a new dual form for the robust value $V_{B_{\Pc}^i(\theta), R}^{\pi}$ under $\Pc$-type uncertainty set, that is more amenable to practical implementation. The main steps of the algorithm are explained in detail as follows, and the pseudo-code is presented in \Cref{alg: robust PSR}.

\begin{algorithm}[h]
\caption{Robust Offline RL with low-rank nominal model}
\label{alg: robust PSR}
\begin{algorithmic}[1]
\STATE {\bf Input:} offline dataset $\mathcal{D}$, divergence type $i$
 \STATE Estimate a model $\hat{\theta}$ according to \Cref{eqn: loss}.
 \STATE Obtain distilled dataset $\Dc^{\mathrm{g}}$ according to \Cref{eqn: good dataset}.
 \STATE Construct $\hat{b}$ according to \Cref{eqn: LCB bonus}.
\STATE Output $\hat{\pi}$ such that
$\hat{\pi} = \arg\max_{ \pi \in \Pi}   \left(  V_{ B_{\mathrm{u}}^i(\hat{\theta}), R}^{\pi} - C_{\mathrm{u}}^iV_{\hat{\theta},\hat{b}}^{\pi}\right).$

\end{algorithmic}
\end{algorithm}

{\bf Model estimation and dataset reform.} First, we perform maximum likelihood estimation of the nominal model $\theta^*$ by minimizing the following negative log-likelihood loss function
\begin{align}
\textstyle    \mathcal{L}(\theta|\Dc) =  \frac{1}{N} \sum_{n=1}^N - \log \Db_{\theta}^{\rho}(\tau_H^n), \label{eqn: loss}
\end{align}
which can be done using a supervised learning oracle, e.g. stochastic gradient descent. Let $\hat{\theta} = \arg\min_{\theta} \mathcal{L}(\theta|\Dc)$.

{\bf Novel dataset distillation.} Motivated by~\citet{huang2023provably}, to construct a desirable lower confidence bound for any value function under the nominal model, the estimated model $\hat{\theta}$ needs to assign non-negligible probabilities on every sample trajectory. Differently from \citet{huang2023provably} that introduces additional constraints to the optimization oracle, we propose a novel idea that only keeps the ``good'' sample trajectories whose estimated probability is greater than a pre-defined small value $p_{\min}$ under the estimated model $\hat{\theta}$ and collect those sample trajectories in set $ \Dc^{\mathrm{g}}$, defined as follows.
\begin{align}
    \Dc^{\mathrm{g}} =  \left\{\tau_H \in \Dc,~~ \Db_{\hat{\theta}}^{\rho}(\tau_H) \geq    p_{\min} \right\}. \label{eqn: good dataset}
\end{align}
We prove in \Cref{lemma:good samples enough} that, with high probability  $|\Dc^{\mathrm{g}}| = \Omega(N) $, which still guarantees sufficient number of samples for learning. Then, $\Dc^{\mathrm{g}}$ is randomly and evenly divided into $H$ datasets $\Dc_0^{\mathrm{g}},\ldots, \Dc_{H-1}^{\mathrm{g}}$, in order to separately learn $\bar{\psi}^*(\tau_h)$ for each $h\in\{0,1,\ldots,H-1\}$.

{\bf LCB of robust value. } Given the estimated model $\hat{\theta}$ and divergence type $i$, \Cref{alg: robust PSR} constructs a lower confidence bound (LCB) of $V_{B_{\mathrm{u}}^i(\theta^*), R}^{\pi}$ for all policy $\pi$ through a bonus function $\hat{b}$ defined as:  
\begin{align}\label{eqn: LCB bonus}
    &\textstyle \hat{U}_{h}  = \lambda I + \sum_{\tau_h\in\Dc_h^{\mathrm{g}} } \bar{\hat{\psi}} (\tau_h) \bar{\hat{\psi}} (\tau_h)^{\top},\qquad
    \hat{b} (\tau_H) = \min\Big\{ \alpha \sqrt{ \sum_h \left\| \bar{\hat{\psi}} (\tau_h) \right\|^2_{(\hat{U}_{h} )^{-1}}}, 1 \Big\},
\end{align}
where $ \lambda $ and  $\alpha_i$ are pre-defined parameters. Recall that $\bar{\hat{\psi}}(\tau_h)$ is the prediction feature described in \Cref{sec: psr define}. Then, we can show that $V_{ B_{\mathrm{u}}^i(\hat{\theta}), R}^{\pi} - C_{\mathrm{u}}^iV_{\hat{\theta},\hat{b}}^{\pi}$ is an LCB of $V_{B_{\mathrm{u}}^i(\theta^*), R}^{\pi}$, where $C_{\mathrm{u}}^i$ (defined in \Cref{sec:psr thm}) is a {\it scaling parameter} that depends on the type of uncertainty set.

{\bf Novel dual form of robust value.} For the $\Tc$-type uncertainty sets, define robust value function as:
\begin{align*}
    &\textstyle V_{B(\theta^*),R,h}^{\pi}(x_h) = \inf_{\theta\in B(\theta^*)}\Eb_{\theta}^{\pi} \left[  \sum_{h' = h}^H R(\btau_{h'}) \bigg| x_h\right],\\
    &\textstyle Q_{B(\theta^*),R,h}^{\pi}(\tau_{h}) = \inf_{\theta\in B(\theta^*)}\Eb_{\theta}^{\pi} \left[  \sum_{h' = h}^H R(\btau_{h'}) \bigg| \tau_{h}\right],
\end{align*}
where $\tau_h = (x_h,a_h)$. Then, the following Bellman-type equations hold for robust value functions.
\begin{lemma}\label{lemma:bellman main paper}
    For any $\theta$, let $V_{B(\theta), R,H+1}^{\pi}(\tau_H) =0$. Then, we have
\begin{align*}
        &\textstyle V_{B(\theta),R,h}^{\pi}(x_h) = \sum_{a_h}\pi(a_h|x_h)Q_{B(\theta),R,h}^{\pi}(\tau_h)\\
        &\textstyle Q_{B(\theta),R,h}^{\pi}(\tau_h) = R(x_h) + \inf_{\theta'\in B(\theta^*)} \sum_{o_{h+1}}\Tc_h^{\theta'}(o_{h+1}|\tau_h) V_{B(\theta), R, h+1}^{\pi}(x_{h+1}).
\end{align*}
\end{lemma}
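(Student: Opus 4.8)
The plan is to prove the two Bellman-type recursions by unrolling the definition of the robust value functions and exploiting the $\mathrm{u}$-rectangular structure of the uncertainty set $B(\theta)$ (either $\Tc$-type or $\Pc$-type), which is exactly what makes the inner $\inf$ decompose across time steps. First I would establish the $V$-from-$Q$ identity, which is the easy half: since the agent's action $a_h$ at $x_h$ is drawn from $\pi(\cdot\mid x_h)$ \emph{independently} of the choice of $\theta\in B(\theta)$ (the policy is fixed and does not depend on the adversary), we can pull the sum over $a_h$ outside the $\inf$ only after noting that the continuation after $\tau_h=(x_h,a_h)$ can be optimized separately for each $a_h$. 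Concretely, $V_{B(\theta),R,h}^{\pi}(x_h)=\inf_{\theta'\in B(\theta)}\sum_{a_h}\pi(a_h\mid x_h)\,\Eb_{\theta'}^{\pi}[\sum_{h'\ge h}R(\btau_{h'})\mid \tau_h]$, and because the constraints defining $B(\theta)$ factorize over histories (the $\Tc$-type set constrains each $\Tc_h^{\theta'}(\cdot\mid\tau_h)$ separately for every $\tau_h$; the $\Pc$-type set likewise decomposes conditionally), the adversary's choice affecting the subtree rooted at $(x_h,a_h)$ is unconstrained by its choices on sibling subtrees, so the $\inf$ commutes with $\sum_{a_h}\pi(a_h\mid x_h)(\cdot)$, giving $\sum_{a_h}\pi(a_h\mid x_h)Q_{B(\theta),R,h}^{\pi}(\tau_h)$.

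Next I would prove the $Q$-recursion by a one-step expansion. Write $Q_{B(\theta),R,h}^{\pi}(\tau_h)=\inf_{\theta'\in B(\theta)}\Eb_{\theta'}^{\pi}[R(\btau_h)+\sum_{h'=h+1}^{H}R(\btau_{h'})\mid\tau_h]$. The reward term $R(\btau_h)=R(x_h)$ is deterministic given $\tau_h$ (using the paper's convention that $R(x_h)$ denotes the per-step reward revealed at step $h$), so it comes out of both the expectation and the $\inf$. For the remainder, condition on $o_{h+1}\sim\Tc_h^{\theta'}(\cdot\mid\tau_h)$ and on $a_{h+1}\sim\pi(\cdot\mid x_{h+1})$ to obtain $\sum_{o_{h+1}}\Tc_h^{\theta'}(o_{h+1}\mid\tau_h)\,\Eb_{\theta'}^{\pi}[\sum_{h'=h+1}^H R(\btau_{h'})\mid x_{h+1}]$. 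The key structural step is that the constraint on $\Tc_h^{\theta'}(\cdot\mid\tau_h)$ at this particular $\tau_h$ is \emph{separate} from the constraints governing the dynamics at deeper histories $x_{h+1},\tau_{h+1},\dots$; hence the joint $\inf$ over $\theta'$ splits into an outer $\inf$ over the choice of the conditional $\Tc_h^{\theta'}(\cdot\mid\tau_h)$ and, inside each term of the resulting sum, an independent $\inf$ over the deeper parameters, which by definition is precisely $V_{B(\theta),R,h+1}^{\pi}(x_{h+1})$. This yields $Q_{B(\theta),R,h}^{\pi}(\tau_h)=R(x_h)+\inf_{\theta'\in B(\theta^*)}\sum_{o_{h+1}}\Tc_h^{\theta'}(o_{h+1}\mid\tau_h)V_{B(\theta),R,h+1}^{\pi}(x_{h+1})$. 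Induct downward from $h=H$, where $V_{B(\theta),R,H+1}^{\pi}\equiv 0$ handles the base case.

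The main obstacle is rigorously justifying that the $\inf$ over the high-dimensional parameter $\theta'\in B(\theta)$ genuinely decomposes into a nested/pointwise family of independent infima over the conditional kernels at different histories — i.e., that the uncertainty set is ``rectangular'' in the appropriate history-indexed sense. For the $\Tc$-type set this is essentially immediate from \eqref{eqn: T uncertain set} since the defining constraint $\mathtt{D}_f(\Tc_h^{\theta'}(\cdot\mid\tau_h),\Tc_h^{\theta^*}(\cdot\mid\tau_h))\le\xi$ is imposed for each $(h,\tau_h)$ separately, and one can build a feasible $\theta'$ by freely stitching together admissible conditionals (modulo the consistency constraint on $\Pc$, which is automatically respected when one specifies the $\Tc_h$'s directly). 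For the $\Pc$-type set \eqref{eqn: P uncertain set} the argument is more delicate: the constraint is on the full trajectory law $\Pc^{\theta'}(\cdot\mid a_{1:H})$, not on one-step conditionals, so one must verify that restricting to histories passing through $\tau_h$ and then further conditioning still leaves the remaining degrees of freedom unconstrained by sibling subtrees — this uses that $f$-divergence between product-structured distributions decomposes, and that any per-history perturbation keeping each conditional divergence within budget yields a global $\Pc^{\theta'}$ within budget. I would handle the $\Tc$-type case in the main text and defer the $\Pc$-type verification (and the attendant measurability/attainment-of-$\inf$ technicalities, which follow from compactness of $\Theta$ together with continuity of $V_{\theta,R}^{\pi}$ in $\theta$) to the appendix.
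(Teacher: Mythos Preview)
Your plan for the $\Tc$-type case is essentially the paper's argument: exploit history-indexed rectangularity of $B_{\Tc}^f$ so the joint infimum over $\theta'$ factors into an outer infimum over $\Tc_h^{\theta'}(\cdot\mid\tau_h)$ and independent inner infima over deeper conditionals. The paper formalizes this slightly differently from your sketch: rather than asserting that the $\inf$ ``splits,'' it proves the $Q$-recursion as two inequalities using $\iota$-near-optimal parameters, and in the harder direction ($\leq$) explicitly constructs a feasible $\theta_3\in B(\theta^*)$ by \emph{stitching} together the near-optimal one-step transition at $\tau_h$ with the near-optimal deeper dynamics --- this stitched model is exactly the concrete content of rectangularity. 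Your induction on $h$ is unnecessary (each identity is proved pointwise), and your treatment of the $V$-from-$Q$ identity is actually more careful than the paper's, which simply declares it ``straightforward.''

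Where you go wrong is scope: the lemma, as the paper's context and appendix heading make clear, is stated and proved \emph{only} for $\Tc$-type uncertainty sets. Your attempt to extend it to $B_{\Pc}^f$ is both unnecessary and incorrect. The $\Pc$-type constraint bounds $\mathtt{D}_f\big(\Pc^{\theta'}(\cdot\mid a_{1:H}),\Pc^{\theta^*}(\cdot\mid a_{1:H})\big)$ as a single divergence on the full trajectory law; it does \emph{not} impose separate per-history budgets on the conditionals $\Tc_h^{\theta'}(\cdot\mid\tau_h)$, and general $f$-divergences (TV in particular) do not decompose over product or chain structure the way you suggest. Even for KL, the chain rule gives a \emph{sum} of conditional divergences equal to the joint divergence, so per-history perturbations within some fixed budget need not keep the joint within $\xi$. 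This lack of conditional rectangularity is precisely why the paper abandons Bellman recursion for $\Pc$-type sets and instead derives the global Lagrangian dual forms in \Cref{eqn: P type TV dual}--\Cref{eqn: P type KL dual}. Drop the $\Pc$-type discussion and your proof is fine.
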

Therefore, robust value under $\Tc$-type uncertainty set can be calculated using the recursive formula above. Notably, the calculation of $Q_{B(\theta),R,h}^{\pi}(\tau_h)$ involves solving an optimization problem in the form of $\inf_{P: \mathtt{D}_{f}(P, P^*)}\Eb_P[g(X)]$, where $P$ and $P^*$ are two probability distributions over a set $\mathcal{X}$. This optimization problem can be solved efficiently as shown in the literature~\citep{panaganti2022robust,blanchet2023double}. We also provide its dual form in \Cref{sec:tech lemmas}.

On the other hand, when the uncertainty set is $\Pc$-type, we develop a new dual form of the robust value $V_{B_{\Pc}^1(\theta^*), R}^{\pi}$ and $V_{B_{\Pc}^2(\theta^*), R}^{\pi}$ in \Cref{eqn: P type TV dual} and \Cref{eqn: P type KL dual}, respectively. Note that, in those equations, $f(x_H) = \sum_{a_H}R(\tau_H)\pi(\tau_{H-1})$. 
\begin{align} 
    V_{B_{\Pc}^1(\theta^*), R}^{\pi} =  \max_{ \boldsymbol{\gamma}\geq 0, \boldsymbol{\lambda}  } \Big\{&   \sum_{x_H}(f(x_H) - \gamma_{x_H})\Pc^{\theta^*}(x_H)   - \sum_{a_{1:H-1}} \max_{o_{1:H}} \left|\gamma_{x_H} - f(x_H) - \lambda_{a_{H-1},x_{H-1}}\right| \xi \Big\} \label{eqn: P type TV dual}\\
    V_{B_{\Pc}^2(\theta^*), R}^{\pi} = \max_{\boldsymbol{\eta}\geq 0 } \Big\{&  -\sum_{\tau_{H-1}}\eta_{a_{1:H-1}}\Pc^{\theta^*}(\tau_{H-1})\log {\Eb}_{o_H\sim \Tc^{\theta^*}_{H-1}}\left[ e^{ - \frac{f(x_H)}{ \eta_{a_{1:H-1}}} } \right]   - \sum_{a_{1:H-1}}\eta_{a_{1:H-1}}\xi\Big\}\label{eqn: P type KL dual}
\end{align}

{\bf Output policy design.} Finally, \Cref{alg: robust PSR} outputs a policy $\hat{\pi} = \arg\max_{ \pi \in \Pi}   \left( V_{ B_{\mathrm{u}}^i(\hat{\theta}), R}^{\pi} - C_{\mathrm{u}}^iV_{\hat{\theta},\hat{b}}^{\pi} \right)$, which maximizes a {\it lower confidence bound} of $V_{B_{\mathrm{u}}^i(\theta^*),R}^{\pi}$.

\vspace{-3mm}
\subsection{Theoretical Guarantee}\label{sec:psr thm}

In this section, we present the theoretical results for \Cref{alg: robust PSR}. Before we state the theorem, we need to introduce several definitions and condition coefficients.

{\bf Bracketing number.} First, we introduce a complexity measure of the model class $\Theta$, which is standard in the literature~\citep{liu2022optimistic,huang2023provably}.

\begin{definition}[$\varepsilon$-Bracketing number of $\Theta$]\label{def:optimistic net}
Suppose for each $\theta\in\Theta$, there exists a function $F_{\theta}:\mathcal{X}\rightarrow\mathbb{R}_+$ parameterized by $\theta$.
   Given two functions $l$ and $g: \mathcal{X} \to \mathbb{R}_{+}$, the bracket $[l,g]$ is the set of all $\theta\in\Theta$ satisfying $l\leq F_{\theta}\leq g$.  An $\varepsilon$-bracket is a bracket $[l,g]$ with $\|g-l\|_1 < \eta$. The bracketing number $\mathcal{N}_{\varepsilon}(\Theta)$ is the minimum number of $\eta$-brackets needed to cover $\Theta$.
\end{definition}

Note that if $\Theta$ is the parameter space for tabular PSRs with rank $r$, we have $\log\mathcal{N}_{\varepsilon}(\Theta)\leq r^2|\mathcal{O}||\mathcal{A}|H^2\log\frac{H|\mathcal{O}||\mathcal{A}|}{\varepsilon}$ (see Theorem 4.7 in \citet{liu2022optimistic}).

{\bf Novel concentrability coefficient.} 
Under the low-rank structure and PSRs, we introduce the following novel concentrability coefficient $C_{\mathrm{p}}(\pi|\rho)$, namely {\bf Type-I concentrability coefficient}, of a behavior policy $\rho$ against a robust policy $\pi$.
\begin{align*}
\textstyle    C_{\mathrm{p}}(\pi|\rho)  = \max_h \max_{x} \frac{ x^{\top} \mathop{\Eb}_{\tau_h\sim \Db_{\theta^*}^{\pi}}[ \bar{\psi}^*(\tau_h)\bar{\psi}^*(\tau_h)^{\top}  ] x }{ x^{\top} \mathop{\Eb}_{\tau_h\sim \Db_{\theta^*}^{\rho }}[ \bar{\psi}^*(\tau_h)\bar{\psi}^*(\tau_h)^{\top}  ] x }.
\end{align*}

\begin{remark}
    Requiring a finite Type-I concentrability coefficient is much weaker than requiring a finite concentrability coefficient $C$ defined in~\citet{huang2023provably}, in which a point-to-point probability ratio is used, i.e., $C \geq \Db_{\theta^*}^{\pi}(\tau_h)/ \Db_{\theta^*}^{\rho}(\tau_h)$. It is obvious that $C_{\mathrm{p}}(\pi|\rho)\leq C$. Moreover, even if $\Db_{\theta^*}^{\rho}(\tau_h)$ is exponentially small, indicating $C$ is exponentially large, the minimum eigenvalue of the covariance matrix of $\bar{\psi}^*(\tau_h)$ under $\rho$ (i.e., $\mathop{\Eb}_{\tau_h\sim \Db_{\theta^*}^{\rho }}[ \bar{\psi}^*(\tau_h)\bar{\psi}^*(\tau_h)^{\top}  ]$) could be bounded way from zero. By \Cref{assmp:well-condition}, we have $\|\bar{\psi}^*(\tau_h)\|_1 /\gamma \geq \sum_{\omega_h}\pi(\omega_h|\tau_h)\mathbf{m}(\omega_h)^{\top}\bar{\psi}^*(\tau_h) = 1$, which implies that the inverse of the norm of the prediction feature scales in Poly$(d,1/\gamma)$. Thus, with a good behavior policy $\rho$, the scaling of $C_{\mathrm{p}}(\pi|\rho)$ is likely to be polynomial.
\end{remark}

{\bf Wellness condition of the nominal model.} When the uncertainty set is $\Tc$-type, we further introduce a wellness condition number $C_B$ of the nominal model $\theta^*$ defined as follows.
\[\textstyle C_B = \max_{\theta\in B(\theta^*) }   \max_h  \frac{ \Db_{\theta}^{\pi}(\tau_h) }{\Db_{\theta^*}^{\pi}(\tau_h)}  =\max_{\theta\in B(\theta^*) } \max_h \frac{ \Pc^{\theta}(\tau_h) }{ \Pc^{\theta^*}(\tau_h) }.  \]

Intuitively, this condition number is called the maximum likelihood ratio between the distributions $\Pc^{\theta^*}(\cdot|a_{1:h})$ and $\Pc^{\theta}(\cdot|a_{1:h})$ and only depends on the property of the nominal model and the shape of the uncertainty set. In particular, when $B= B_{\Tc}^2$, we must have 
\begin{align*}
\textstyle    \Eb_{\theta}^{\pi}\left[\log \frac{\Pc^{\theta}(\tau_h)}{ \Pc^{\theta^*}(\tau_h)} \right] = \Eb_{\theta}^{\pi}\left[\sum_{h'\leq h}\log \frac{\Tc^{\theta}(o_{h'}|\tau_{h'-1})}{ \Tc^{\theta^*}(o_{h'}|\tau_{h'-1})} \right]\leq \xi h,
\end{align*}
which should hold for all policy $\pi$, and thus implies that $C_B$ is highly likely to be small.

{\bf Scaling parameters.}  Finally, we briefly introduce the scaling parameter $C_{\mathrm{u}}^i$ that is used to construct the LCB of robust value. Equipped with the dual forms, we define $C_{\Pc}^1=1$, $C_{\Tc}^1 = C_B$. When uncertainty set is determined by KL divergence, we make the following assumption.

\begin{assumption}\label{assm:dual P eta}
     Suppose that $\boldsymbol{\eta}^*\in\mathbb{R}^{|\mathcal{A}|^{H-1}}$ is an optimal solution of \Cref{eqn: P type KL dual}, and $\boldsymbol{\eta}^*_{\hat{\theta}}\in\mathbb{R}^{|\mathcal{A}|^{H-1}}$ is an optimal solution of \Cref{eqn: P type KL dual} when $\theta^*$ is replaced by $\hat{\theta}$. We assume that the agent has an estimate of a lower bound $\eta^*$ of the coordinates of both $\boldsymbol{\eta}^*$  and $\boldsymbol{\eta}^*_{\hat{\theta}}$, i.e., $\eta^*\leq \min_{a_{1:H-1}}\{\eta_{a_{1:H-1}}^*, \eta_{\hat{\theta},a_{1:H-1}}^*\}$.
\end{assumption}

\begin{assumption}\label{assm:dual T lambda}
$\lambda_{\theta\|\theta^*}^*(\tau_h)$ is an optimal solution to the following optimization problem:
\begin{align*}
    \sup_{\lambda\geq 0} \left\{ -\lambda \log \Eb_{\Tc_h^{\theta}}[\exp(-V_{B_{\Tc}^2(\theta^*), R, h+1}^{\pi}(\tau_{h},o)/\lambda ) ] - \lambda \xi \right\}.
\end{align*}
We assume that the agent has an estimate of a lower bound $\lambda^*$ of all $\lambda_{\theta^*\|\hat{\theta}}^*(\tau_h)$ and $\lambda_{\hat{\theta}\|\theta^*}^*(\tau_h)$, i.e., 
$\lambda^*\leq \min_{h,\tau_h}\{\lambda_{\theta^*\|\hat{\theta}}^*(\tau_h), \lambda_{\hat{\theta}\|\theta^*}^*(\tau_h)\}.$
\end{assumption}

\begin{remark}
    The optimization problem mentioned in \Cref{assm:dual T lambda} is the dual problem that need to solve in the Bellman-type equation (\Cref{lemma:bellman main paper}). In the algorithm implementation, it suffices to estimate $\min_{a_{1:H-1}}\{\eta^*_{\hat{\theta},a_{1:H-1}}\}$ and $\min_{h,\tau_h}\{\lambda_{\hat{\theta}\|\theta^*}^*(\tau_h)\}$, which is not difficult to obtain such value since $\hat{\theta}$ is known. We make these two assumptions for notation simplicity. Such assumptions also appear in \citet{blanchet2023double}.
\end{remark}

Define the scaling parameters $C_{\Pc}^2 = 3\exp(1/\eta^*)$, and $C_{\Tc}^2 = C_B\max\left\{\exp(\xi)/\xi, \lambda^*\exp(1/\lambda^*) \right\}.$  

We are now ready to present the theorem characterizing the performance of \Cref{alg: robust PSR}.

\begin{theorem}\label{thm:robust PSR}
    Suppose \Cref{assmp:well-condition}, \Cref{assm:dual P eta}, and \Cref{assm:dual T lambda} hold. Let  $Q_A = \max_h|\mathcal{Q}_h^A|$ be the maximum number of core action sequences. Let $\iota = \min_{\mathbf{q}_{h,\ell}^{\mathrm{a}},\tau_h} \rho(\mathbf{q}_{h,\ell}^{\mathrm{a}}|\tau_h) $, $p_{\min} = \frac{\delta}{N(|\mathcal{O}||\mathcal{A}|)^{2H}}$, $\varepsilon= \frac{p_{\min}}{NH}$, $ \beta = O(\log(\mathcal{N}_{\varepsilon}(\Theta)/\delta))$, $ \lambda = H^2Q_A^2$, and $ \alpha  = O\left(\frac{  Q_A\sqrt{dH} }{\gamma^2}\sqrt{\lambda} + \frac{  \sqrt{\beta} }{ \iota \gamma} \right)$. Then, with probability at least $1-\delta$, we have the following four results for the suboptimal gap of the policy $\hat{\pi}$ output by \Cref{alg: robust PSR} under four settings.
    
    (i) $\Pc$-type uncertainty set under TV distance: $
    V_{B_{\Pc}^1(\theta^*), R}^{\pi^*} - V_{B_{\Pc}^1(\theta^*), R}^{ \hat{\pi}} \lesssim C_0 \sqrt{ \frac{C_{\mathrm{p}}(\pi^*|\rho)}{N} }$,
    
    (ii) $\Pc$-type uncertainty set under KL divergence:
    $V_{B_{\Pc}^2(\theta^*), R}^{\pi^*} - V_{B_{\Pc}^2(\theta^*), R}^{ \hat{\pi}} \lesssim C_0 C_{\Pc}^{\KL}\sqrt{ \frac{C_{\mathrm{p}}(\pi^*|\rho)}{N} }$,
    
    (iii) $\Tc$-type uncertainty set under TV distance: 
    $V_{B_{\Tc}^1(\theta^*), R}^{\pi^*} - V_{B_{\Tc}^1(\theta^*), R}^{ \hat{\pi}} \lesssim C_0 C_B\sqrt{ \frac{C_{\mathrm{p}}(\pi^*|\rho)}{N} }$,
    
    (iv) $\Tc$-type uncertainty set under KL divergence,
    $V_{B_{\Tc}^2(\theta^*), R}^{\pi^*} - V_{B_{\Tc}^2(\theta^*), R}^{ \hat{\pi}} \lesssim C_0 C_BC_{\Tc}^{\KL}\sqrt{ \frac{C_{\mathrm{p}}(\pi^*|\rho)}{N} }$,
    
where $\pi^*$ is the optimal robust policy under each setting, $C_{\mathrm{p}}(\pi^*|\rho)$ is the type-I coefficient of the behavior policy $\rho$ against $\pi^*$,  $C_0 = \frac{H^2 Q_A\sqrt{rd\beta}}{\iota \gamma^2} \left(\sqrt{r} + \frac{Q_A\sqrt{H}}{\gamma} \right)$, $C_B$ is the wellness condition number of $\theta^*$,  $C_{\Pc}^{\KL} = 3\exp(1/\eta^*)$, $C_{\Tc}^{\KL} = \max\left\{\exp(\xi)/\xi, \lambda^*\exp(1/\lambda^*) \right\}$, and $\eta^*,\lambda^*$ are defined in \Cref{assm:dual P eta} and \Cref{assm:dual T lambda}.
\end{theorem}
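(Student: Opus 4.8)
The plan is to establish the standard pessimism decomposition: since $\hat\pi$ maximizes the LCB $V_{B_{\mathrm{u}}^i(\hat\theta),R}^{\pi} - C_{\mathrm{u}}^i V_{\hat\theta,\hat b}^{\pi}$ and (to be shown) this quantity is a valid lower bound of the true robust value $V_{B_{\mathrm{u}}^i(\theta^*),R}^{\pi}$ that is also not too loose, we get
\begin{align*}
V_{B_{\mathrm{u}}^i(\theta^*),R}^{\pi^*} - V_{B_{\mathrm{u}}^i(\theta^*),R}^{\hat\pi}
&\le \underbrace{\left(V_{B_{\mathrm{u}}^i(\theta^*),R}^{\pi^*} - V_{B_{\mathrm{u}}^i(\hat\theta),R}^{\pi^*} + C_{\mathrm{u}}^i V_{\hat\theta,\hat b}^{\pi^*}\right)}_{\text{(A): optimism slack at }\pi^*}
+ \underbrace{\left(V_{B_{\mathrm{u}}^i(\hat\theta),R}^{\hat\pi} - C_{\mathrm{u}}^i V_{\hat\theta,\hat b}^{\hat\pi} - V_{B_{\mathrm{u}}^i(\theta^*),R}^{\hat\pi}\right)}_{\text{(B): lower-bound validity at }\hat\pi},
\end{align*}
where we used $V_{B_{\mathrm{u}}^i(\hat\theta),R}^{\pi^*} - C_{\mathrm{u}}^i V_{\hat\theta,\hat b}^{\pi^*} \le V_{B_{\mathrm{u}}^i(\hat\theta),R}^{\hat\pi} - C_{\mathrm{u}}^i V_{\hat\theta,\hat b}^{\hat\pi}$. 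Thus it suffices to bound, for an arbitrary policy $\pi$, the difference $|V_{B_{\mathrm{u}}^i(\theta^*),R}^{\pi} - V_{B_{\mathrm{u}}^i(\hat\theta),R}^{\pi}|$ by $C_{\mathrm{u}}^i V_{\hat\theta,\hat b}^{\pi}$ up to lower-order terms, which controls both (A) and (B).

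The key steps, in order, are as follows. \emph{Step 1 (MLE guarantee and the distilled dataset).} Using the bracketing-number bound and a standard MLE-in-Hellinger argument, $\mathtt{D}_{\mathtt H}^2(\Db_{\hat\theta}^\rho, \Db_{\theta^*}^\rho) \lesssim \beta/N$ with probability $1-\delta$. Then invoke \Cref{lemma:good samples enough} to conclude $|\Dc^{\mathrm g}| = \Omega(N)$, so the per-$h$ splits each contain $\Omega(N/H)$ trajectories, and on $\Dc^{\mathrm g}$ every trajectory has $\Db_{\hat\theta}^\rho(\tau_H)\ge p_{\min}$, which is what lets us transfer the likelihood-ratio error to the prediction features $\bar{\hat\psi}(\tau_h)$. \emph{Step 2 (feature estimation error).} Show that on $\Dc_h^{\mathrm g}$ the estimated prediction feature is accurate in the appropriate empirical-covariance norm: roughly $\|\bar{\hat\psi}(\tau_h) - \bar\psi^*(\tau_h)\|$ is controlled after weighting by $\hat U_h^{-1}$, exploiting the $\gamma$-well-conditioned assumption and $\iota$ (minimum behavior probability on core action sequences) to invert the relation between observable test probabilities and the feature. \emph{Step 3 (value difference via PSR decomposition).} Express $V_{\theta,R}^\pi$ (and the robust versions, via the Bellman-type recursion of \Cref{lemma:bellman main paper} for $\Tc$-type, and via the dual forms \Cref{eqn: P type TV dual}--\Cref{eqn: P type KL dual} for $\Pc$-type) as a telescoping sum over $h$ of inner products $\mbf(\omega_h)^\top(\bar{\hat\psi}(\tau_h) - \bar\psi^*(\tau_h))$-type terms; bound each by the elliptical-potential/bonus term $\|\bar{\hat\psi}(\tau_h)\|_{\hat U_h^{-1}}$ and sum using the elliptical potential lemma to produce $\alpha\sqrt{\sum_h\|\cdot\|^2_{\hat U_h^{-1}}} = \hat b$. \emph{Step 4 (from empirical to population, via Type-I concentrability).} Convert $\sum_{\tau_h\in\Dc_h^{\mathrm g}}$-based covariances to the population covariance $\Eb_{\tau_h\sim\Db_{\theta^*}^\rho}[\bar\psi^*\bar\psi^{*\top}]$ (matrix Bernstein), then relate $V_{\hat\theta,\hat b}^{\pi^*}$ — which averages the bonus under $\Db^{\pi^*}$ — to the same quantity under $\rho$, at the cost of the factor $C_{\mathrm p}(\pi^*|\rho)$; this is exactly where the Type-I coefficient enters and yields the $\sqrt{C_{\mathrm p}(\pi^*|\rho)/N}$ rate. \emph{Step 5 (the scaling constants).} For the robust value differences, the passage from $\theta^*$ to $\hat\theta$ inside the $\min_\theta$ / dual maximization costs a multiplicative constant: $C_B$ for $\Tc$-type (likelihood ratio over $B(\theta^*)$, to move the inner expectation from $\Db_\theta^\pi$ to $\Db_{\theta^*}^\pi$), and the $\exp(1/\eta^*)$ / $\lambda^*\exp(1/\lambda^*)$, $\exp(\xi)/\xi$ factors under KL, obtained by differentiating the dual objectives and using \Cref{assm:dual P eta}, \Cref{assm:dual T lambda} to bound the Lipschitz constant of the dual value in the relevant perturbation. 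Collecting the constants into $C_0$ and the $C_{\mathrm{u}}^i$ gives the four stated bounds.

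The main obstacle I expect is Step 3 combined with Step 5: unlike the MDP case, the robust value is not a simple per-step rectangular $\min$, so one cannot directly apply the known duality results step by step. For the $\Pc$-type set the robustness acts on the \emph{whole-trajectory} distribution $\Pc^\theta(\cdot|a_{1:H})$, so the newly derived dual forms must be massaged into a form where the only model-dependent object is $\Pc^{\theta^*}(\tau_{H-1})$ and $\Tc^{\theta^*}_{H-1}$, which can then be written as $\mbf(\cdot)^\top\psi^*(\cdot)$ and handled by the PSR machinery; controlling the sensitivity of the $\max$ over dual variables $(\boldsymbol\gamma,\boldsymbol\lambda)$ or $\boldsymbol\eta$ to replacing $\theta^*$ by $\hat\theta$ is delicate and is where \Cref{assm:dual P eta} does the work. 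For the $\Tc$-type set, the recursion in \Cref{lemma:bellman main paper} must be unrolled while carrying the $\inf_{\theta'\in B(\theta^*)}$ at every level; the error from using $\hat\theta$'s dual optimizer rather than $\theta^*$'s propagates through $H$ levels, and keeping this from blowing up exponentially in $H$ — rather than polynomially — requires the well-conditioning of the PSR and the boundedness of the likelihood ratio $C_B$, plus care that the bonus $\hat b$ is itself a valid robust-value-type object (so that $V_{\hat\theta,\hat b}^\pi$ can be analyzed with the same non-robust PSR tools). Everything else (MLE concentration, elliptical potential, matrix Bernstein) is routine.
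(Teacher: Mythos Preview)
Your high-level plan (pessimism decomposition, MLE guarantee on the distilled set, bonus-based LCB, Type-I concentrability to get the $\sqrt{C_{\mathrm p}(\pi^*|\rho)/N}$ rate) is correct and matches the paper. But you are making Step~3 harder than necessary, and this is exactly the ``main obstacle'' you flag.

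The paper does \emph{not} telescope the robust value through the PSR representation. Instead it factors the argument into two independent pieces. First, \emph{simulation lemmas for the robust value} (your Step~5, done first): using the dual forms \eqref{eqn: P type TV dual}--\eqref{eqn: P type KL dual} for $\Pc$-type and the Bellman recursion of \Cref{lemma:bellman main paper} for $\Tc$-type, one shows the purely analytic bound
\[
\bigl|V_{B_{\mathrm u}^i(\theta^*),R}^{\pi} - V_{B_{\mathrm u}^i(\hat\theta),R}^{\pi}\bigr| \;\le\; C_{\mathrm u}^i\,\bigl\|\Db_{\theta^*}^{\pi} - \Db_{\hat\theta}^{\pi}\bigr\|_1,
\]
with $C_{\Pc}^1=1$, $C_{\Pc}^2=3\exp(1/\eta^*)$, $C_{\Tc}^1=2C_B$, $C_{\Tc}^2=2C_BC_{\Tc}^{\KL}$. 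No PSR structure is used here at all; the proof just fixes the dual optimizer for one model and evaluates the dual objective at the other, then bounds the difference by an $\ell_1$ distance (for $\Tc$-type, the recursion is unrolled and the per-step error $\mathcal E_{B_{\Tc}^i}(\theta,\tau_h)$ is bounded by $\|\Tc_h^{\theta^*}(\cdot|\tau_h)-\Tc_h^{\hat\theta}(\cdot|\tau_h)\|_1$, with the $C_B$ factor absorbing the change of measure from the worst-case $\theta'\in B(\theta^*)$ back to $\theta^*$). Second, the \emph{non-robust} bound $\|\Db_{\theta^*}^{\pi} - \Db_{\hat\theta}^{\pi}\|_1 \le V_{\hat\theta,\hat b}^{\pi}$, which is exactly the PSR bonus machinery already established in \citet{huang2023provably} (their Lemma~9 and Corollary~2). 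Combining gives the LCB validity $|V_{B_{\mathrm u}^i(\theta^*),R}^{\pi} - V_{B_{\mathrm u}^i(\hat\theta),R}^{\pi}| \le C_{\mathrm u}^i V_{\hat\theta,\hat b}^{\pi}$, so both your terms (A) and (B) collapse and the suboptimality is at most $2C_{\mathrm u}^i V_{\hat\theta,\hat b}^{\pi^*}$. The remaining bound on $V_{\hat\theta,\hat b}^{\pi^*}$ is then entirely non-robust and proceeds as you describe in Step~4.

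In short: your worry about ``massaging the dual forms into $\mbf(\cdot)^\top\psi^*(\cdot)$'' and about the $\Tc$-type recursion ``propagating through $H$ levels'' with PSR errors is an artifact of interleaving the robustness and the PSR telescoping. Separate them---robust~$\to$~$\ell_1$ first (no PSR), then $\ell_1\to$~bonus (no robustness)---and the obstacle disappears.
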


The theorem states that as long as the type-I concentrability coefficient of $\rho$ against the optimal robust policy $\pi^* = \arg\max_{\pi}V_{B(\theta), R}^{\pi}$ is bounded, and the nominal model $\theta^*$ has finite wellness condition number $C_B$, \Cref{alg: robust PSR} finds an $\epsilon$-optimal robust policy provided that the offline dataset has $\Omega(1/\epsilon^2)$ sample trajectories.

{\it Proof sketch of \Cref{thm:robust PSR}.} The proof consists of three main steps. (i) {\bf A novel MLE guarantee.} We show that the estimator $\hat{\theta}$ produced by {\it vanilla} MLE already performs well on a large portion of offline data samples, namely, $\sum_{ \tau_h \in\Dc_h^{\mathrm{g}} }  \|\Dc_{\hat{\theta}}^{\rho}(\cdot|\tau_h) - \Dc_{\theta^*}^{\rho}( \cdot | \tau_h) \|_1\leq O(\beta)$. This result is different from the MLE guarantee proved in~\citet{huang2023provably}, where a constrained MLE oracle is required. (ii) {\bf Simulation lemma for robust value.} By utilizing the newly proposed dual forms in \Cref{eqn: P type TV dual} and \Cref{eqn: P type KL dual}, we show that for any model $\theta$ (not necessarily low-rank), the difference $|V_{B(\theta^*), R}^{\pi} - V_{B(\theta), R}^{\pi}|$ is generally upper bounded by $O(c_B\|\Db_{\theta^*}^{\pi} - \Db_{\theta}^{\pi}\|_1)$, where $c_B$ depends on the uncertainty type. (iii) {\bf Applying concentrability coefficient.} In low-rank non-Markovian decision processes, $\ell_1$ distance $\|\Db_{\theta^*}^{\pi} - \Db_{\theta}^{\pi}\|_1$ can be upper bounded by $\Eb_{\theta^*}^{\pi}[\|\bar{\psi}^*(\tau_h)\|_{\Lambda^{-1}}]$~\citep{liu2022optimistic,huang2023provably}. By Cauchy's inequality, this quantity is determined by the covariance matrix $\Eb_{\theta^*}^{\pi}[\bar{\psi}(\tau_h)\bar{\psi}^*(\tau_h)^{\top}]$, which can thus be controlled by the type-I concentrability coefficient $C_{\mathrm{p}}(\pi|\rho)$ and the covariance matrix under the behavior policy $\rho$. It then yields the final result when combined with the MLE guarantee. The full proof can be found in \Cref{sec:proof of thm1}. 

\section{Robust Offline RL under General Non-Markovian Decision Processes}\label{sec:online PSR}

In this section, we first introduce a more general algorithm (see \Cref{alg: robust}) for robust offline learning in non-Markovian decision processes, and then present its theoretical guarantee.

\subsection{Algorithm Design}\label{sec:gen alg}
The general algorithm utilizes double pessimism for robust offline RL. The algorithm consists of two parts: {\it model estimation} and {\it robust policy planning}. We note that similar double pessimism has been adopted in \citet{blanchet2023double}. However, due to the non-Markovian nature, our robust value and its dual form as well as the corresponding analysis are different from theirs.

{\bf Model estimation.} We construct the confidence set $\mathcal{C}$ for nominal model $\theta^*$ using offline dataset $\mathcal{D}$. 
\begin{align}
\textstyle    \mathcal{C} = \left\{\theta: \sum_{n=1}^N\Pc^{\theta}(\tau_H^n) \geq \max_{\theta}\sum_{n=1}^N \Pc^{\theta}(\tau_H^n) - \beta \right\}, \label{eqn: confidence set}
\end{align}
where $\beta $ is a pre-defined parameter. This confidence set ensures that $\theta^*\in \mathcal{C}$ and any model $\theta\in\mathcal{C}$ is ``close'' to the nominal model $\theta^*$ under the behavior policy $\rho$.

{\bf Robust policy planning.} Then, we output a policy under the double pessimism principle. Specifically, for each model $\hat{\theta}\in\mathcal{C}$, since it is close to the nominal model $\theta^*$, we examine its robust value function $V_{B(\hat{\theta}), R}^{\pi}$ under a policy $\pi$. By minimizing the robust value $V_{B(\hat{\theta}), R}^{\pi}$ over the confidence set $\mathcal{C}$, we find a {\it lower confidence bound} for the true robust value $V_{B(\theta^*), R}^{\pi}$ due to $\theta^*\in\mathcal{C}$. Finally, we output an optimal policy $\hat{\pi}$ with respect to the LCB of $\theta^*\in\mathcal{C}$. Mathematically, $\hat{\pi} = \arg\max_{ \pi \in \Pi} \min_{ \hat{\theta}  \in \mathcal{C} }   V_{B(\hat{\theta}), R}^{\pi}$

\vspace{-3mm}
\subsection{Theoretical Guarantee}\label{sec:gen thm}

In this section, we present the theoretical guarantees of \Cref{alg: robust}. Since the nominal model does not have specific structure or representations, type-I concentrability coefficient is not suitable for this case. Therefore, we introduce the {\bf Type-II concentrability coefficient} $C(\pi|\rho)$ of a behavior policy $\rho$ against a robust policy $\pi$ defined as $C(\pi|\rho) = \sum_{h=1}^H \Eb_{\theta^*}^{\rho} \left[ \left( \frac{ \Db_{\theta^*}^{\pi}(\tau_h) }{\Db_{\theta^*}^{\rho}(\tau_h)} \right)^2 \right]$.
\begin{remark}
    In general, requiring a finite type-II concentrability coefficient is stronger than requiring a finite type-I concentrability coefficient defined in \Cref{sec:psr thm}. However, it is still a relaxed coverage assumption than that in~\citet{huang2023provably}. This is because in the analysis, we rescale the estimation error across the distribution induced by the behavior policy instead of using a point-wise bound. 
\end{remark}

We are now ready to present the main result for \Cref{alg: robust}.
\begin{theorem}\label{thm:general robust RL}
Let $\varepsilon= \frac{1}{NH}$, $ \beta = O(\log(\mathcal{N}_{\varepsilon}(\Theta)/\delta))$. Then, with probability at least $1-\delta$, we have the following four results for the suboptimal gap of the policy $\hat{\pi}$ output by \Cref{alg: robust}. 

(i) $\Pc$-type uncertainty set under TV distance
$V_{B_{\Pc}^1(\theta^*), R}^{\pi^*} - V_{B_{\Pc}^1(\theta^*), R}^{\hat{\pi}}\lesssim H\sqrt{  \frac{C(\pi^*|\rho) \beta }{N}}, $

(ii) $\Pc$-type uncertainty set under KL divergence:
$V_{B_{\Pc}^2(\theta^*), R}^{\pi^*} - V_{B_{\Pc}^2(\theta^*), R}^{\hat{\pi}}\lesssim HC_{\Pc}^{\KL}\sqrt{  \frac{C(\pi^*|\rho) \beta }{N}},$

(iii) $\Tc$-type uncertainty set under TV distance
$ V_{B_{\Tc}^1(\theta^*), R}^{\pi^*} - V_{B_{\Tc}^1(\theta^*), R}^{\hat{\pi}}\lesssim HC_B\sqrt{  \frac{C(\pi^*|\rho) \beta }{N}},$

(iv) $\Tc$-type uncertainty set under KL divergence
$V_{B_{\Tc}^2(\theta^*), R}^{\pi^*} - V_{B_{\Tc}^2(\theta^*), R}^{\hat{\pi}}\lesssim HC_B C_{\Pc}^{\KL}\sqrt{  \frac{C(\pi^*|\rho) \beta }{N}},$

where $\pi^*$ is the optimal robust policy under each setting, $C(\pi^*|\rho)$  is the type-II coefficient of the behavior policy $\rho$ against $\pi^*$,  $C_B$ is the wellness condition number defined in \Cref{sec:psr thm}, and $C_{\Pc}^{\KL}, C_{\Tc}^{\KL}$ are defined in \Cref{thm:robust PSR}.
\end{theorem}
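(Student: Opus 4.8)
\textbf{Proof proposal for \Cref{thm:general robust RL}.}

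The plan is to follow the standard double-pessimism decomposition, but with the non-Markovian robust-value machinery in place of the MDP one. The starting point is that, by \Cref{eqn: confidence set} and a bracketing/MLE concentration argument (as in \citet{liu2022optimistic,huang2023provably}), with probability at least $1-\delta$ the nominal model satisfies $\theta^* \in \mathcal{C}$ and every $\theta \in \mathcal{C}$ is close to $\theta^*$ along the behavior policy: $\sum_{h=1}^H \Eb_{\theta^*}^\rho \big[ \| \Db_{\theta}^\rho(\cdot|\tau_h) - \Db_{\theta^*}^\rho(\cdot|\tau_h) \|_1 \big] \lesssim \sqrt{\beta/N}$, or more precisely a squared-Hellinger bound of order $\beta/N$ aggregated over $h$. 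This is the only place the dataset enters; everything afterwards is deterministic given this event.

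Next I would set up the suboptimal-gap decomposition. Let $\hat{\pi} = \arg\max_\pi \min_{\hat\theta \in \mathcal{C}} V_{B(\hat\theta),R}^\pi$ and let $\tilde\theta \in \mathcal{C}$ be the minimizer for $\hat\pi$. Since $\theta^* \in \mathcal{C}$,
\begin{align*}
 V_{B(\theta^*),R}^{\pi^*} - V_{B(\theta^*),R}^{\hat\pi}
 &\le V_{B(\theta^*),R}^{\pi^*} - \min_{\hat\theta\in\mathcal{C}} V_{B(\hat\theta),R}^{\hat\pi}
 \le \Big( V_{B(\theta^*),R}^{\pi^*} - \min_{\hat\theta\in\mathcal{C}} V_{B(\hat\theta),R}^{\pi^*} \Big),
\end{align*}
where the last inequality uses optimality of $\hat\pi$ for the inner-minimized objective. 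So it suffices to bound $V_{B(\theta^*),R}^{\pi^*} - V_{B(\theta),R}^{\pi^*}$ uniformly over $\theta \in \mathcal{C}$; i.e., the whole theorem reduces to a \emph{robust simulation lemma} plus the coverage transfer. For the simulation lemma I would reuse step (ii) of the \Cref{thm:robust PSR} proof sketch: using the dual forms \Cref{eqn: P type TV dual}, \Cref{eqn: P type KL dual} for $\Pc$-type sets and the Bellman recursion of \Cref{lemma:bellman main paper} for $\Tc$-type sets, one shows $|V_{B(\theta^*),R}^\pi - V_{B(\theta),R}^\pi| \lesssim c_B \,\|\Db_{\theta^*}^\pi - \Db_{\theta}^\pi\|_1$, where $c_B = 1$ for $\Pc$-TV, $c_B = C_{\Pc}^{\KL}$ for $\Pc$-KL, $c_B = C_B$ for $\Tc$-TV, and $c_B = C_B C_{\Tc}^{\KL}$ for $\Tc$-KL — this is exactly where the four cases and their prefactors come from, and no low-rank structure is used.

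Finally I would transfer the behavior-policy error bound to the target policy $\pi^*$ via the type-II coefficient. Writing $\|\Db_{\theta^*}^{\pi^*} - \Db_{\theta}^{\pi^*}\|_1$ in telescoping form over $h$ as $\sum_h \Eb_{\theta^*}^{\pi^*}\big[\|\Db_{\theta^*}^\rho(\cdot|\tau_h) - \Db_\theta^\rho(\cdot|\tau_h)\|_1\big]$ (the conditional one-step distributions $\Tc_h$ depend only on the model, not the policy, so importance weighting by $\Db_{\theta^*}^{\pi^*}(\tau_h)/\Db_{\theta^*}^\rho(\tau_h)$ is valid), a Cauchy–Schwarz split gives
\begin{align*}
 \sum_h \Eb_{\theta^*}^\rho\!\Big[ \tfrac{\Db_{\theta^*}^{\pi^*}(\tau_h)}{\Db_{\theta^*}^\rho(\tau_h)} \, g_h(\tau_h)\Big]
 \le \sqrt{\textstyle\sum_h \Eb_{\theta^*}^\rho\big[(\Db_{\theta^*}^{\pi^*}(\tau_h)/\Db_{\theta^*}^\rho(\tau_h))^2\big]}\;\cdot\;\sqrt{\textstyle\sum_h \Eb_{\theta^*}^\rho[g_h(\tau_h)^2]},
\end{align*}
where $g_h$ is the one-step $\ell_1$ (or Hellinger) discrepancy; the first factor is $\sqrt{C(\pi^*|\rho)}$ and the second is $\lesssim \sqrt{\beta/N}$ by the MLE event (after converting the $\ell_1$ bound to an $\ell_2/$Hellinger bound, picking up the extra $H$). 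Combining with the simulation lemma yields the four stated rates $H c_B \sqrt{C(\pi^*|\rho)\beta/N}$. The main obstacle I anticipate is the robust simulation lemma for the $\Tc$-type KL case: one must propagate the perturbation through the nested $\sup_{\lambda\ge 0}$ dual problems in the Bellman recursion of \Cref{lemma:bellman main paper} and control the Lipschitz constant of the map $P \mapsto \sup_{\lambda\ge0}\{-\lambda\log\Eb_P[e^{-V/\lambda}] - \lambda\xi\}$, which is where the factors $C_B$ and $\max\{\exp(\xi)/\xi,\lambda^*\exp(1/\lambda^*)\}$ (hence \Cref{assm:dual T lambda}) are needed; the $\ell_1$-to-Hellinger conversion and the telescoping of model errors across $h$ are routine by comparison.
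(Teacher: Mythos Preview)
Your proposal is correct and follows essentially the same route as the paper's proof in \Cref{sec:proof alg 2}: the double-pessimism decomposition reduces the gap to $V_{B(\theta^*),R}^{\pi^*} - V_{B(\hat\theta),R}^{\pi^*}$ for some $\hat\theta\in\mathcal{C}$, the simulation lemmas (\Cref{lemma:Sim P TV}, \Cref{lemma:Sim P KL}, \Cref{lemma:Sim T}) supply the four prefactors, and the coverage transfer is exactly the telescoping in \Cref{lemma: E TV < TV} followed by Cauchy--Schwarz against $C(\pi^*|\rho)$ and the $\ell_1\!\to$\,Hellinger conversion of \Cref{lemma:TV and hellinger}, with the MLE event furnishing $\mathtt{D}_{\mathtt{H}}^2(\Db_{\theta^*}^\rho,\Db_{\hat\theta}^\rho)\lesssim \beta/N$. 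The only cosmetic difference is that the paper applies Cauchy--Schwarz per step $h$ (bounding each step-$h$ ratio second moment by the full sum $C(\pi^*|\rho)$) rather than jointly over $h$ as you do; both are valid and give the same $H\sqrt{C(\pi^*|\rho)\beta/N}$ rate.
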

The theorem suggests that \Cref{alg: robust} is more sample efficient than \Cref{alg: robust PSR}, due to the general double pessimism design and the type-II concentrability coefficient, while \Cref{alg: robust PSR} is more amenable to practical implementation due to low-rank structure and computationally tractable bonus function design and dual forms development.

\vspace{-3mm}
\section{Conclusion}
In this work, we studied robust offline non-Markovian RL with unkown transitions. We developed the first sample efficient algorithm when the nominal model admits a low-rank structure. 
To characterize the theoretical guarantee of our algorithm, we propose a novel type-I concentrability coefficient for PSRs, which can be of independent interest in the study of offline low-rank non-Markovian RL. 
In addition, we extended our algorithm to a more general case when the nominal model does not have any specific structure. With the notion of type-II concentrability coefficient, we proved that the extended algorithm also enjoys sample efficiency under all different types of the uncertainty set.

\bibliography{arxiv_main}
\bibliographystyle{apalike}

\newpage

\appendix

\noindent{\LARGE \bf Appendices} 

\section{Technical Lemmas}\label{sec:tech lemmas}
In this section, we present the technical lemmas for proving our main results. It includes three subsections. First, \Cref{sec:dual form} primirily describes the dual forms for robust values under $\Pc$-type uncertainty sets. Second, \Cref{sec:bellman type} verifies that robust value functions satisfies Bellman-type equations. Finally, \Cref{sec:sim} provides various simulation lemmas for robust values under different uncertainty sets.

\subsection{Dual Form of Robust Values}\label{sec:dual form}

We first present two classical lemmas for total variation distance and KL divergence, which can be found in \citet{panaganti2022robust,blanchet2023double}.

\begin{proposition}\label{prop: TV} 
Let \( \mathtt{D}_{f_1} \) be defined with the convex function \( f_1(t) = |t - 1|/2 \) corresponding to the total variation distance. Then,
\[
\inf_{ \mathtt{D}_{f_1}(P \| P_o) \leq \xi} \mathbb{E}_P [\ell(X)] = \max_{\lambda \in \mathbb{R}} \left\{\lambda - \Eb_{P_o}[(\lambda - \ell(X))_{+}] - \xi\max_x(\lambda - \ell(x))_+ \right\}.
\]
\end{proposition}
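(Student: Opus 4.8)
\textbf{Proof plan for Proposition~\ref{prop: TV}.} The statement is a standard Lagrangian duality result for the distributionally robust optimization problem over a total-variation ball, and the plan is to derive it directly from the definition of $\mathtt{D}_{f_1}$. First I would write the constraint set explicitly: $\mathtt{D}_{f_1}(P\|P_o)\le\xi$ means $\frac{1}{2}\sum_x|P(x)-P_o(x)|\le\xi$, together with the implicit constraints that $P$ is a probability measure, i.e. $P(x)\ge 0$ for all $x$ and $\sum_x P(x)=1$. So the primal problem is $\inf_P \sum_x P(x)\ell(x)$ subject to these constraints. I would then form the Lagrangian, dualizing the TV constraint with a multiplier $\mu\ge 0$ and the normalization constraint $\sum_x P(x)=1$ with a free multiplier $\lambda\in\Rb$, while keeping $P\ge 0$ as an explicit feasibility constraint on the inner infimum.

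The key computational step is to evaluate the inner minimization over $P\ge0$ (not necessarily normalized, since that constraint has been dualized) of the expression $\sum_x P(x)\ell(x) + \mu\big(\tfrac12\sum_x|P(x)-P_o(x)| - \xi\big) + \lambda\big(1-\sum_x P(x)\big)$. Because the objective decomposes coordinatewise in $P(x)$, this reduces to solving, for each $x$, $\min_{p\ge 0}\{p(\ell(x)-\lambda) + \tfrac{\mu}{2}|p-P_o(x)|\}$. A routine case analysis on this one-dimensional convex piecewise-linear problem shows the minimizer is $p=P_o(x)$ when $|\ell(x)-\lambda|\le\mu/2$, is $p=0$ when $\ell(x)-\lambda\ge\mu/2$, and is $+\infty$ (the infimum is $-\infty$) when $\ell(x)-\lambda\le-\mu/2$; one then reads off that the optimal value of the $x$-th term is $\min\{P_o(x)(\ell(x)-\lambda),\, 0\} + \tfrac{\mu}{2}|\ell(x)-\lambda - (\text{clip})|$... more cleanly, after collecting terms the inner value equals $\lambda - \Eb_{P_o}[(\lambda-\ell(X))_+] - \mu\xi$ provided $\mu \ge 2\max_x(\lambda-\ell(x))_+$ (otherwise the inner infimum is $-\infty$). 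Maximizing over $\mu\ge 2\max_x(\lambda-\ell(x))_+$ drives $\mu$ to its lower bound, yielding $\lambda - \Eb_{P_o}[(\lambda-\ell(X))_+] - \xi\max_x(\lambda-\ell(x))_+$, and then taking the supremum over $\lambda\in\Rb$ gives the claimed dual.

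The remaining issue is to justify strong duality, i.e. that the primal infimum equals the dual supremum with no gap. Since the primal objective is linear and the feasible set is a nonempty (it contains $P_o$ itself, as $\mathtt{D}_{f_1}(P_o\|P_o)=0\le\xi$), convex, compact subset of the simplex, and the constraints satisfy Slater's condition (take $P=P_o$, which satisfies the TV constraint strictly when $\xi>0$), standard Lagrangian duality for convex programs applies and closes the gap. I expect the one-dimensional case analysis in the inner minimization to be the only genuinely fiddly part; everything else is bookkeeping. Alternatively, and more briefly, one may simply cite \citet{panaganti2022robust} or \citet{blanchet2023double} where this exact identity is established, since the proposition is stated as a known classical result.
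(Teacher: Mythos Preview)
The paper does not prove this proposition at all; it simply records it as a ``classical lemma'' and cites \citet{panaganti2022robust,blanchet2023double}, exactly the references you mention in your alternative. So your final suggestion of citing the literature is precisely what the paper does.

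Your direct Lagrangian derivation is the standard route and is structurally correct, but there is a genuine arithmetic slip in the intermediate step. The inner infimum over $P\ge 0$ does \emph{not} equal $\lambda - \Eb_{P_o}[(\lambda-\ell(X))_+] - \mu\xi$; carrying out the coordinatewise minimization carefully gives
\[
\inf_{P\ge 0} L \;=\; \lambda - \mu\xi + \Eb_{P_o}\!\big[\min(\ell(X)-\lambda,\ \mu/2)\big]
\;=\; \Eb_{P_o}[\ell(X)] - \mu\xi - \Eb_{P_o}\!\big[(\ell(X)-\lambda-\mu/2)_+\big],
\]
valid when $\mu \ge 2\max_x(\lambda-\ell(x))_+$ and $-\infty$ otherwise. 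Notice this depends on $\lambda$ only through $\nu:=\lambda+\mu/2$; after that change of variable the feasibility constraint becomes $\mu\ge \max_x(\nu-\ell(x))_+$ (the factor of $2$ disappears), optimizing $\mu$ to this lower bound and using $\min(\ell,\nu)=\nu-(\nu-\ell)_+$ yields exactly the claimed formula with $\nu$ in place of $\lambda$. As you wrote it, plugging $\mu=2\max_x(\lambda-\ell(x))_+$ into $-\mu\xi$ would produce an extraneous factor of $2$, so the reparametrization is not optional bookkeeping but the step that makes the coefficient come out right. With that correction, your argument is complete; Slater's condition via $P=P_o$ is the right justification for strong duality.
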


\begin{proposition}\label{prop: KL}
Let \( \mathtt{D}_{f_2} \) be defined with the convex function \( f_2(t) = t\log t \) corresponding to the KL-divergence. Then,
\[
\inf_{ \mathtt{D}_{f_2}(P \|  P_o) \leq \xi} \mathbb{E}_P [l(X)] =   \sup_{\lambda\geq 0 } \left\{ -\lambda\log\Eb_{P_o}\left[\exp(-\ell(X)/\lambda )\right] - \lambda \xi \right\}.
\]

\end{proposition}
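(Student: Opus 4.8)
The plan is to derive the dual by applying convex duality (Lagrangian/Fenchel) to the constrained infimum, exploiting that the only active constraint after dualizing the divergence bound is the normalization of $P$. First I would write the primal problem over the likelihood ratio $L = dP/dP_o \geq 0$: minimize $\Eb_{P_o}[L(X)\ell(X)]$ subject to $\Eb_{P_o}[f_2(L(X))] = \Eb_{P_o}[L(X)\log L(X)] \le \xi$ and $\Eb_{P_o}[L(X)] = 1$. Both the objective and the divergence constraint are convex in $L$, and Slater's condition holds (take $L \equiv 1$, which gives divergence $0 < \xi$ when $\xi>0$; the boundary case $\xi = 0$ forces $P = P_o$ and can be checked directly, with the RHS reducing to $\Eb_{P_o}[\ell]$ as $\lambda \to \infty$). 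Hence strong duality applies and the primal equals
\[
\sup_{\lambda \ge 0,\ \nu \in \mathbb{R}} \inf_{L \ge 0} \Big\{ \Eb_{P_o}[L\ell] + \lambda\big(\Eb_{P_o}[L\log L] - \xi\big) + \nu\big(1 - \Eb_{P_o}[L]\big) \Big\}.
\]

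Next I would solve the inner unconstrained minimization over $L$ pointwise. For fixed $x$, minimizing $L\ell(x) + \lambda L\log L - \nu L$ over $L \ge 0$ (with $\lambda > 0$) gives first-order condition $\ell(x) + \lambda(\log L + 1) - \nu = 0$, i.e. $L^*(x) = \exp\big((\nu - \ell(x))/\lambda - 1\big)$, and the minimized value is $-\lambda L^*(x) = -\lambda \exp\big((\nu-\ell(x))/\lambda - 1\big)$. Substituting back, the dual becomes
\[
\sup_{\lambda \ge 0,\ \nu} \Big\{ \nu - \lambda\xi - \lambda e^{\nu/\lambda - 1}\,\Eb_{P_o}\big[e^{-\ell(X)/\lambda}\big] \Big\}.
\]
Then I would optimize over $\nu$ in closed form: differentiating in $\nu$ gives $1 - e^{\nu/\lambda - 1}\Eb_{P_o}[e^{-\ell/\lambda}] = 0$, hence $e^{\nu/\lambda - 1} = 1/\Eb_{P_o}[e^{-\ell/\lambda}]$ and $\nu^* = \lambda\big(1 + \log \Eb_{P_o}[e^{-\ell/\lambda}]\big)$. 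Plugging in, the $\lambda e^{\nu/\lambda-1}\Eb_{P_o}[\cdots]$ term equals $\lambda$, which cancels the $\lambda$ inside $\nu^*$, leaving exactly $-\lambda \log \Eb_{P_o}[e^{-\ell(X)/\lambda}] - \lambda\xi$. Taking the supremum over $\lambda \ge 0$ yields the claimed identity. (The $\lambda = 0$ endpoint is handled as a limit: $-\lambda\log\Eb_{P_o}[e^{-\ell/\lambda}] \to \min_x \ell(x) \cdot 0$ — more precisely the term vanishes or recovers $\inf \ell$ depending on the sign convention — and is dominated by interior $\lambda$, so including it does not change the sup.)

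I expect the main obstacle to be the careful justification of strong duality and the interchange of $\inf$ and the pointwise minimization — specifically verifying Slater's condition and that the infimum over $L$ is attained in the relevant range (handling $L \to 0^+$ where $L\log L \to 0$, and ruling out $L \to \infty$ via the $\lambda\log L$ penalty when $\lambda > 0$), plus the degenerate cases $\xi = 0$ and $\lambda = 0$. Since this is a classical result, I would simply cite \citet{panaganti2022robust,blanchet2023double} for the rigorous measure-theoretic version and present the Lagrangian computation above as the derivation. The analogous argument for Proposition \ref{prop: TV} proceeds identically, with $f_1(t) = |t-1|/2$ producing the conjugate $(\lambda - \ell(x))_+$ terms in place of the exponential.
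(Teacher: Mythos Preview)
Your Lagrangian derivation is correct and is the standard argument for this classical identity; the paper itself does not prove this proposition at all but simply states it and cites \citet{panaganti2022robust,blanchet2023double}, exactly as you propose doing. One minor algebraic slip: from $e^{\nu/\lambda-1}=1/\Eb_{P_o}[e^{-\ell/\lambda}]$ you get $\nu^*=\lambda\big(1-\log\Eb_{P_o}[e^{-\ell/\lambda}]\big)$, not $1+\log(\cdots)$; your final expression $-\lambda\log\Eb_{P_o}[e^{-\ell/\lambda}]-\lambda\xi$ is nonetheless correct.
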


Then, we show the dual form for the  newly proposed $\Pc$-type uncertainty sets.

\begin{proposition}\label{prop: TV P type}
If $B(\theta^*) = B_{\Pc}^1(\theta^*)$, then, we have
\begin{align*}
    &V_{B(\theta^*), R}^{\pi}  =  \max_{ \boldsymbol{\gamma}\geq 0, \boldsymbol{\lambda}  } \left\{ \sum_{x_H}(f(x_H) - \gamma_{x_H})\Pc^*(x_H) - \sum_{a_{1:H-1}} \max_{o_{1:H}} \left|\gamma_{x_H} - f(x_H) - \lambda_{(a_{H-1},x_{H-1})}\right| \xi \right\},
\end{align*}

where $f(x_H) = R(x_H)\pi(\tau_{H-1})$, and the maximum must be achieved when 
\[ -f(x_H)\leq \lambda_{(a_{H-1},x_{H-1})}\leq 0,\quad  0\leq \gamma_{x_H} \leq \max\left\{f(x_H) + \lambda_{a_{H-1},x_{H-1}}, 0\right\}\leq f(x_H).\]

\end{proposition}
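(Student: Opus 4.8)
\textbf{Proof plan for Proposition~\ref{prop: TV P type}.} The plan is to reduce the robust value $V_{B_\Pc^1(\theta^*),R}^\pi$ to a one-dimensional-per-constraint convex program and then dualize. First I would rewrite $V_{\theta,R}^\pi = \Eb_{\tau_H\sim\Db_\theta^\pi}[R(\tau_H)]$ in terms of $\Pc^\theta$ and the policy: since $\Db_\theta^\pi(\tau_H) = \Pc^\theta(o_{1:H}|a_{1:H-1})\,\pi(\tau_{H-1})$ where $\pi(\tau_{H-1}) = \prod_{h=1}^{H}\pi(a_h|x_h)$, and the reward depends on the full trajectory, summing out $a_H$ gives $V_{\theta,R}^\pi = \sum_{x_H}\Pc^\theta(x_H)\,f(x_H)$ with $f(x_H) = \sum_{a_H}R(\tau_H)\pi(\tau_{H-1})$ as defined before the proposition (note $\Pc^\theta(x_H) = \Pc^\theta(o_{1:H}|a_{1:H-1})$ is independent of $a_H$ by the consistency constraint on $\Pc$). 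So the robust value is $\min_{\theta\in B_\Pc^1(\theta^*)}\sum_{x_H}\Pc^\theta(x_H) f(x_H)$. The key observation is that $B_\Pc^1(\theta^*)$ constrains $\Pc^\theta(\cdot|a_{1:H-1})$, for each fixed action sequence $a_{1:H-1}$, to lie within total-variation radius $\xi$ of $\Pc^{\theta^*}(\cdot|a_{1:H-1})$ — and crucially, the only coupling among different action sequences is through the consistency constraint that the marginal $\sum_{o_{h+1:H}}\Pc^\theta(o_{1:H}|a_{1:H-1})$ not depend on the tail actions.

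Next I would set up the optimization over the vector $P = (P_{a_{1:H-1}}(o_{1:H}))$ of candidate $\Pc$-values: minimize $\sum_{o_{1:H},a_{1:H-1}} P_{a_{1:H-1}}(o_{1:H}) f(x_H)$ subject to, for each $a_{1:H-1}$, $P_{a_{1:H-1}}\ge 0$, $\sum_{o_{1:H}}P_{a_{1:H-1}}(o_{1:H}) = 1$, $\mathtt{D}_{f_1}(P_{a_{1:H-1}}\|\Pc^{\theta^*}(\cdot|a_{1:H-1}))\le\xi$, plus the marginal-consistency constraints linking different $a_{1:H-1}$'s (which make $P$ a genuine $\Pc$). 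I would introduce a Lagrange multiplier $\lambda_{(a_{H-1},x_{H-1})}$ for the consistency constraints (indexed so as to capture the appropriate marginal equalities) and then, for the remaining per-$a_{1:H-1}$ TV-ball constraint, apply Proposition~\ref{prop: TV}. Applying that proposition to the inner infimum over each TV ball — with $\ell(X) = f(x_H) + (\text{terms from the consistency multipliers } \lambda)$ — turns the per-coordinate subproblem into $\max_{\gamma}\{\ldots\}$: the $\gamma_{x_H}$ plays the role of the $\lambda$-variable of Proposition~\ref{prop: TV}, the expectation term $\Eb_{P_o}[(\gamma-\ell)_+]$ becomes $\sum_{x_H}(f(x_H)-\gamma_{x_H})\Pc^*(x_H)$ after rearranging (using $(\gamma-\ell)_+ = \gamma - \ell + (\ell-\gamma)_+$ and that $P_o$ sums to one), and the $\xi\max_x(\gamma-\ell)_+$ term becomes $\sum_{a_{1:H-1}}\max_{o_{1:H}}|\gamma_{x_H}-f(x_H)-\lambda_{(a_{H-1},x_{H-1})}|\,\xi$ once the sign ranges of the optimizers are pinned down.

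The last step is to verify the stated bounds on the optimizers: $-f(x_H)\le\lambda_{(a_{H-1},x_{H-1})}\le 0$ and $0\le\gamma_{x_H}\le\max\{f(x_H)+\lambda_{(a_{H-1},x_{H-1})},0\}\le f(x_H)$. For this I would argue by a perturbation/monotonicity argument: since $f(x_H)\in[0,1]$ (reward in $[0,1]$, policy a product of probabilities), pushing $\gamma_{x_H}$ outside $[0, f(x_H)+\lambda]$ can only decrease the objective because the first sum is linear decreasing in $\gamma_{x_H}$ while the penalty term is convex and eventually increasing; clamping to the stated interval weakly improves the objective, and similarly for $\lambda$. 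I expect the main obstacle to be bookkeeping the marginal-consistency constraints correctly — getting the right index structure $(a_{H-1},x_{H-1})$ on $\lambda$ so that the dualized consistency constraints collapse precisely into the claimed form, rather than a messier sum over all intermediate horizons. This requires carefully exploiting that $f(x_H)$ depends on $o_{1:H}, a_{1:H-1}$ only, so that after summing the telescoping marginal constraints only the ``last-step'' multiplier survives; making that reduction rigorous (e.g.\ by induction on $H$ or by a direct change of variables on the $\Pc$-polytope) is the delicate part, whereas the TV-dualization itself is a direct invocation of Proposition~\ref{prop: TV}.
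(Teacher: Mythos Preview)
Your overall instinct—dualize the consistency constraints and then invoke total-variation duality—points in the right direction, but there is a concrete misidentification that blocks the route as you describe it. The variable $\gamma_{x_H}$ in the stated formula is \emph{not} the scalar dual variable of Proposition~\ref{prop: TV}: that proposition produces one real number per TV ball, hence one per action sequence $a_{1:H-1}$, whereas $\gamma_{x_H}$ is indexed by the full history $x_H=(o_{1:H},a_{1:H-1})$. In the paper's derivation $\gamma_{x_H}$ is the Lagrange multiplier for the nonnegativity constraint $\Pc(x_H)\ge 0$, and the absolute value $|\gamma_{x_H}-f(x_H)-\lambda_{(a_{H-1},x_{H-1})}|$ arises from a pair of further multipliers $\alpha_{x_H},\beta_{x_H}\ge 0$ attached to the two-sided slack constraints $-s(x_H)\le \Pc(x_H)-\Pc^*(x_H)\le s(x_H)$, after these are optimized out. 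A direct invocation of Proposition~\ref{prop: TV} would instead yield a one-sided $(\mu_a-\ell)_+$ structure with a single scalar $\mu_a$ per action sequence; your proposed passage from $(\cdot)_+$ to $|\cdot|$ and the promotion of that scalar to a per-$x_H$ vector are not consequences of that proposition, so the route as written would produce a different (even if equal-valued) dual, not the one stated.

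The paper instead writes the robust value as a linear program in the variables $\Pc(x_h)$ for every $h$ together with slack variables $s(x_H)$, and takes the full LP dual with multipliers $(\boldsymbol{\lambda},\boldsymbol{\alpha},\boldsymbol{\beta},\boldsymbol{\gamma},\boldsymbol{\delta},\boldsymbol{\eta})$. The ``delicate'' telescoping you anticipate is then handled by the dual stationarity relation $\lambda_{(a_{h-1},x_{h-1})}=\sum_{a_h}\lambda_{(a_h,x_h)}$ coming from the free auxiliary variables $\Pc(x_h)$, which by induction gives $\lambda_{(a_1,x_1)}=\sum_{x_H}\Pc^*(x_H)\,\lambda_{(a_{H-1},x_{H-1})}$; combined with the optimization over $(\boldsymbol{\alpha},\boldsymbol{\beta},\boldsymbol{\delta},\boldsymbol{\eta})$ for fixed $(\boldsymbol{\gamma},\boldsymbol{\lambda})$ this collapses exactly to the stated two-variable dual and the accompanying range conditions on the optimizers. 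To recover the stated formula you should mimic this direct LP dualization rather than appeal to Proposition~\ref{prop: TV}.
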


\begin{proof}

We recall the definition of $V_{B_{\Pc}^1(\theta^*), R}^{\pi}$:
\[ V_{B_{\Pc}^1(\theta^*), R}^{\pi} = \min_{\theta\in B_{\Pc}^1(\theta^*)} \sum_{x_H}\Pc^{\theta}(x_H)\pi(\tau_{H-1}) R(x_H) \]

To simplify the notation, we denote $\pi(\tau_{H-1}) R(x_H)$ by $f(x_H)$. Then, we need to specify the equations that form $B_{\Pc}^1(\theta^*)$. 

The most important one is that for any $a_{1:H-1}$ $\sum_{o_2,\ldots,o_H}|\Pc(o_{2:H}|a_{1:H-1}) - \Pc^*(o_{2:H}|a_{1:H-1})|\leq \xi$. Another implicit constraint is that if we sum over all $o_H$, the value $\sum_{o_H}\Pc(o_H,\ldots,o_2|a_1,\ldots, a_{H-1})$ is constant with respect to $a_{H-1}$. Therefore, we construct some additional variables $\Pc(x_h)$ and $s(x_H) \geq |\Pc(x_H) - \Pc^*(x_H)|$ and formulate the following constrained optimization problem.

\begin{align*}
    \text{P1: } &\left\{
    \begin{aligned}
        &\min \sum_{x_H}\Pc(x_H)f(x_H)\\
        &\ \text{s.t.} \sum_{o_2}\Pc(x_2) = 1,\quad \forall (a_1,x_1) \\
        &\ \ \quad \sum_{o_3}\Pc(x_3) = \Pc(x_2),\quad \forall (a_2,x_2) \\
        &\ \ \quad \ldots \\
        &\ \ \quad \sum_{o_h}\Pc(x_h) = \Pc(x_{h-1}),\quad \forall (a_{h-1},x_{h-1})\\
        &\ \ \quad \ldots \\
        &\ \ \quad \sum_{o_H}\Pc(x_H) = \Pc(x_{H-1}),\quad \forall (a_{H-1},x_{H-1})\\
        &\ \ \quad -s(x_H) \leq \Pc(x_H) - \Pc^*(x_H) \leq s(x_H), \quad \forall x_H \\
        &\ \ \quad \sum_{o_{2:H}} s(x_H) \le \xi,\quad \forall a_{1:H-1} \\
        &\ \ \quad \Pc(x_H) \geq 0, s(x_H) \geq 0, \quad \forall x_H
    \end{aligned}
    \right.
\end{align*}

Since P1 is a linear programming, we can formulate the Lagrangian function $L_1 (\boldsymbol{\lambda}, \boldsymbol{\alpha},\boldsymbol{\beta}, \boldsymbol{\delta} ) $ as follows:
\begin{align*}
    L_1& (\boldsymbol{\lambda}, \boldsymbol{\alpha},\boldsymbol{\beta}, \boldsymbol{\gamma}, \boldsymbol{\delta}, \boldsymbol{\eta} )    \\
    &= \sum_{x_H}\Pc(x_H)f(x_H) + \sum_{(a_1,x_1)} \lambda_{(a_1,x_1)} \left( \sum_{o_2}\Pc(x_2) - 1 \right) + \sum_{h=2}^{H-1} \sum_{(a_h,x_h)} \lambda_{(a_h,x_h)} \left( \sum_{o_{h+1}}\Pc(x_{h+1}) - \Pc(x_{h}) \right) \\
    &\quad + \sum_{x_H}\alpha_{x_H}(\Pc(x_H) - \Pc^*(x_H) - s(x_H)) + \sum_{x_H} \beta_{x_H} ( - \Pc(x_H) + \Pc^*(x_H) - s(x_H))  \\
    &\quad - \sum_{x_H}( \gamma_{x_H} \Pc(x_H) + \delta_{x_H} s(x_H) ) + \sum_{a_{1:H-1}}\eta_{a_{1:H-1}} \left( \sum_{o_{2:H}} s(x_H) - \xi \right),
\end{align*}
where $\lambda_{(a_h,x_h)}\in \mathbb{R}$, and  $\alpha_{x_H} ,\beta_{x_H}, \gamma_{x_H}, \delta_{x_H}, \eta_{a_{1:H-1}} \geq 0,\quad \forall x_H$.

Thus, the dual problem is 
\begin{align*}
    \text{D1:} \left\{ 
    \begin{aligned}
        &\max_{\boldsymbol{\lambda}, \boldsymbol{\alpha},\boldsymbol{\beta}, \boldsymbol{\gamma}, \boldsymbol{\delta}, \eta} \inf_{\Pc, s} L_1(\boldsymbol{\lambda}, \boldsymbol{\alpha},\boldsymbol{\beta}, \boldsymbol{\gamma}, \boldsymbol{\delta}, \eta ) \\
        &\ \text{s.t.} \ f(x_H) + \lambda_{(a_{H-1}, x_{H-1})} + \alpha_{x_H} - \beta_{x_H} - \gamma_{x_H} = 0,\quad \forall x_H\\
        &\ \ \quad  \eta_{a_{1:H-1}} - \delta_{x_H} - \alpha_{x_H} - \beta_{x_H} = 0, \quad \forall x_H\\
        &\ \  \quad \lambda_{(a_{h-1}, x_{h-1})} = \sum_{a_h} \lambda_{(a_h,x_h)}, \quad \forall h, x_h \\
        & \ \ \quad \alpha_{x_H} ,\beta_{x_H}, \gamma_{x_H}, \delta_{x_H}, \eta_{a_{1:H-1}} \geq 0,\quad \forall x_H.
    \end{aligned}
    \right.
\end{align*}

Denote $\inf_{\Pc, s} L_1(\boldsymbol{\lambda}, \boldsymbol{\alpha},\boldsymbol{\beta}, \boldsymbol{\gamma}, \boldsymbol{\delta}, \eta )$ by $\bar{L}_1$. Then, we have
\begin{align*}
     \bar{L}_1  = \sum_{x_H}(\beta_{x_H} - \alpha_{x_H})\Pc^*(x_H) - \lambda_{(a_1,x_1)} - \sum_{a_{1:H-1}}\eta_{a_{1:H-1}} \xi. 
\end{align*}

In addition, by the third constraint in the dual problem D1, we have
\begin{align*}
    \lambda_{(a_1,x_1)} & = \sum_{a_2}\lambda_{(a_2,x_2)}\\
    & = \sum_{o_2}\Pc^*(x_2)\sum_{a_2}\lambda_{(a_2,x_2)}\\
    &  \ldots \\
    & = \sum_{x_H}\Pc^*(x_H)\lambda_{(a_{H-1}, x_{H-1})}.
\end{align*}

Thus, we can simplify $\bar{L}_1$ to
\begin{align*}
    \bar{L}_1(\boldsymbol{\lambda}, \boldsymbol{\alpha},\boldsymbol{\beta}, \boldsymbol{\gamma}, \boldsymbol{\delta}, \boldsymbol{\eta}) & = \sum_{x_H}(\beta_{x_H} - \alpha_{x_H})\Pc^*(x_H) - \lambda_{(a_1,x_1)} - \sum_{a_{1:H-1}}\eta_{a_{1:H-1}} \xi \\
    & = \sum_{x_H}(\beta_{x_H} - \alpha_{x_H} - \lambda_{(a_{H-1}, x_{H-1})} )\Pc^*(x_H)  - \sum_{a_{1:H-1}}\eta_{a_{1:H-1}} \xi \\
    & = \sum_{x_H}(f(x_H) - \gamma_{x_H})\Pc^*(x_H) - \sum_{a_{1:H-1}}\eta_{a_{1:H-1}} \xi. 
\end{align*}

Hence, by the Slater's condition, the primal value equals to the dual value. Mathematically, we have
\begin{align*}
    &V_{B(\theta^*), R}^{\pi} \\
    &\quad  = \max_{ \boldsymbol{\lambda}, \boldsymbol{\alpha},\boldsymbol{\beta}, \boldsymbol{\gamma}, \boldsymbol{\delta}, \boldsymbol{\eta} } \bar{L}_1 ( \boldsymbol{\lambda}, \boldsymbol{\alpha},\boldsymbol{\beta}, \boldsymbol{\gamma}, \boldsymbol{\delta}, \boldsymbol{\eta} ) \\
    & \quad =  \max_{ \boldsymbol{\lambda}, \boldsymbol{\alpha},\boldsymbol{\beta}, \boldsymbol{\gamma}, \boldsymbol{\delta}, \boldsymbol{\eta} } \left\{ \sum_{x_H}(f(x_H) - \gamma_{x_H})\Pc^*(x_H) - \sum_{a_{1:H-1}}\eta_{a_{1:H-1}} \xi 
 \right\} \\
    &\quad \quad \quad \text{s.t.} 
    \left\{
    \begin{aligned}
    & \eta_{a_{1:H-1}} = \alpha_{x_H} + \beta_{x_H} + \delta_{x_H}\\
    &\gamma_{x_H} = f(x_H) + \lambda_{(a_{H-1}, x_{H-1})} + \alpha_{x_H} - \beta_{x_H}\\
    &\eta_{a_{1:H-1}}, \gamma_{x_H}, \alpha_{x_H}, \beta_{x_H},\delta_{x_H} \geq 0.
    \end{aligned}
    \right.
\end{align*}

If we fix $\boldsymbol{\lambda}$ and $\boldsymbol{\gamma}$, then the maximum is achieved when
\[
\left\{
    \begin{aligned}
    & \eta_{a_{1:H-1}} = \max_{o_{1:H}} \left|\gamma_{x_H} - f(x_H) - \lambda_{(a_{H-1},x_{H-1})}\right|\\
    &\alpha_{x_H} = \max\left\{\gamma_{x_H} - f(x_H) - \lambda_{(a_{H-1},x_{H-1})}, 0 \right\}\\
    &\beta_{x_H}  = \max\left\{ - \gamma_{x_H} + f(x_H) + \lambda_{(a_{H-1},x_{H-1})}, 0 \right\}\\
    \end{aligned}
\right.
\]

Therefore, 
\begin{align*}
    &V_{B(\theta^*), R}^{\pi} \\
    & \quad =  \max_{ \boldsymbol{\gamma}\geq 0, \boldsymbol{\lambda}  } \left\{ \sum_{x_H}(f(x_H) - \gamma_{x_H})\Pc^*(x_H) - \sum_{a_{1:H-1}} \max_{o_{1:H}} \left|\gamma_{x_H} - f(x_H) - \lambda_{(a_{H-1},x_{H-1})}\right| \xi \right\} 
\end{align*}

Moreover, since $1\geq f(x_H)\geq 0$, the maximum must be achieved when 
\[ -f(x_H)\leq \lambda_{(a_{H-1},x_{H-1})}\leq 0,\quad  0\leq \gamma_{x_H} \leq \max\left\{f(x_H) + \lambda_{a_{H-1},x_{H-1}}, 0\right\}\leq f(x_H).\]
\end{proof}

\begin{proposition}\label{prop: KL P set}
    If $B(\theta^*) = B_{\Pc}^2(\theta^*)$, then, we have
    \begin{align*}
    &V_{B(\theta^*), R}^{\pi} = \max_{\boldsymbol{\eta}\geq 0 } \left\{ -\sum_{a_{H-1},x_{H-1}}\eta_{a_{1:H-1}}\Pc^{\theta^*}(x_{H-1})\log \Eb_{o\sim \Tc^{\theta^*}_{H-1}(\cdot|\tau_{H-1})}\left[ \exp\left(- \frac{f(x_{H}) }{ \eta_{a_{1:H-1}}} \right) \right] - \sum_{a_{1:H-1}}\eta_{a_{1:H-1}}\xi\right\}.
    \end{align*}
\end{proposition}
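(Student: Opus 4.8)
\textbf{Proof plan for Proposition~\ref{prop: KL P set}.} The plan is to mirror the strategy used for Proposition~\ref{prop: TV P type}, writing the robust value as a linear program in the variables $\Pc(x_h)$ for all $h$ (together with the consistency constraints that force $\sum_{o_{h+1}}\Pc(x_{h+1}) = \Pc(x_h)$ and the normalization $\sum_{o_2}\Pc(x_2)=1$), but with the $\ell_1$-type TV constraint replaced by the KL constraint $\mathtt{D}_{f_2}(\Pc^{\theta}(\cdot|a_{1:H-1}), \Pc^{\theta^*}(\cdot|a_{1:H-1}))\le\xi$ for every action sequence $a_{1:H-1}$. First I would recall from Proposition~\ref{prop: KL} the single-distribution dual: $\inf_{\mathtt{D}_{f_2}(P\|P_o)\le\xi}\Eb_P[\ell(X)] = \sup_{\lambda\ge 0}\{-\lambda\log\Eb_{P_o}[e^{-\ell(X)/\lambda}] - \lambda\xi\}$. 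The key structural observation is that, because the objective $\sum_{x_H}\Pc(x_H)f(x_H)$ only involves the terminal distribution, and because $\Pc(x_H)$ factorizes along the trajectory as $\Pc(x_{H-1})\cdot\Tc_{H-1}(o_H|\tau_{H-1})$ subject to $\sum_{o_H}\Tc_{H-1}(o_H|\tau_{H-1})=1$, the only ``active'' part of the uncertainty set is the conditional $\Tc_{H-1}(\cdot|\tau_{H-1})$; the earlier conditionals are pinned down to their nominal values at the optimum. So I would argue (via a shifting/interchange argument on the LP) that the infimum reduces to $\sum_{\tau_{H-1}}\Pc^{\theta^*}(\tau_{H-1})\inf_{\Tc_{H-1}(\cdot|\tau_{H-1})}\Eb[f(x_H)]$ where the inner infimum is over a KL ball around $\Tc^{\theta^*}_{H-1}(\cdot|\tau_{H-1})$ of radius controlled by the per-$a_{1:H-1}$ budget $\xi$.

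Next I would carry out the Lagrangian duality explicitly, as in Proposition~\ref{prop: TV P type}: introduce multipliers $\lambda_{(a_h,x_h)}\in\mathbb{R}$ for the chain of consistency constraints, a multiplier $\eta_{a_{1:H-1}}\ge 0$ for each KL constraint, and nonnegativity multipliers; form $L$, take $\inf_{\Pc}$, and use the telescoping identity $\lambda_{(a_1,x_1)} = \sum_{x_H}\Pc^{\theta^*}(x_H)\lambda_{(a_{H-1},x_{H-1})}$ (exactly the computation already done in the previous proof) to collapse the linear terms. The KL-divergence term in the Lagrangian, after taking the infimum over $\Pc$, produces the log-moment-generating-function expression: for each fixed $\tau_{H-1}$, minimizing over the conditional distribution with an entropy/KL penalty $\eta_{a_{1:H-1}}$ yields $-\eta_{a_{1:H-1}}\log\Eb_{o\sim\Tc^{\theta^*}_{H-1}(\cdot|\tau_{H-1})}[e^{-f(x_H)/\eta_{a_{1:H-1}}}]$, which after weighting by $\Pc^{\theta^*}(\tau_{H-1})$ and subtracting $\sum_{a_{1:H-1}}\eta_{a_{1:H-1}}\xi$ gives precisely the claimed dual objective. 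Finally I would invoke Slater's condition (the nominal model $\theta^*$ itself is strictly feasible, since it makes every KL constraint equal to $0<\xi$, and a strictly positive perturbation keeps all probabilities positive) to conclude strong duality, so the primal robust value equals the displayed maximum over $\boldsymbol{\eta}\ge 0$.

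The main obstacle I anticipate is making the reduction from ``uncertainty on the whole trajectory law $\Pc$'' to ``uncertainty only on the last conditional $\Tc_{H-1}$'' fully rigorous: one has to be careful that the KL ball on $\Pc^{\theta}(\cdot|a_{1:H-1})$ over the whole observation sequence, combined with the consistency constraints, really does decouple so that only the last step's conditional matters for a terminal-only objective, and that the per-sequence budget $\xi$ transfers cleanly to the inner problem. In the TV case Proposition~\ref{prop: TV P type} handled this by carrying the auxiliary slack variables $s(x_H)$ through the LP; the analogous bookkeeping here is slightly more delicate because the KL divergence is not separable coordinate-wise in the same way, but since the objective does not depend on $o_{2:H-1}$ at all, pushing mass only along the last coordinate is optimal and the chain-rule of KL divergence lets the earlier conditionals stay at their nominal values at no cost. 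Once that reduction is justified, the remaining computation is a routine Lagrangian/Donsker–Varadhan calculation identical in spirit to the standard $f$-divergence DRO dual and to Proposition~\ref{prop: KL}. I would also double-check the edge behavior $\eta_{a_{1:H-1}}\to 0^+$, where the dual term tends to $-\min_{o_H}f(x_H)$ (weighted appropriately), matching the unconstrained worst case, so the $\sup$ is attained or approached sensibly.
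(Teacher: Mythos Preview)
Your proposal is correct and follows essentially the same Lagrangian approach as the paper: introduce multipliers $\lambda_{(a_h,x_h)}$ for the consistency constraints, $\eta_{a_{1:H-1}}\ge 0$ for the KL constraints, and $\gamma_{x_H}\ge 0$ for nonnegativity; take the infimum over $\Pc$; telescope the $\lambda$ terms via $\lambda_{(a_1,x_1)} = \sum_{x_H}\Pc^{\theta^*}(x_H)\lambda_{(a_{H-1},x_{H-1})}$; and invoke Slater's condition.

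Two clarifications are worth making. First, the ``reduction to the last conditional'' that you flag as the main obstacle is never needed and the paper does not attempt it; the full Lagrangian computation handles the consistency constraints directly, so you can drop that concern entirely. Second, you slightly compress the last step: the infimum over $\Pc$ by itself yields an expression of the form
\[
-\sum_{x_H}\eta_{a_{1:H-1}}\Pc^{\theta^*}(x_H)\exp\Bigl(\tfrac{\gamma_{x_H}-f(x_H)-\lambda_{(a_{H-1},x_{H-1})}}{\eta_{a_{1:H-1}}}-1\Bigr) - \sum_{x_H}\Pc^{\theta^*}(x_H)\lambda_{(a_{H-1},x_{H-1})} - \sum_{a_{1:H-1}}\eta_{a_{1:H-1}}\xi,
\]
not the log-MGF yet. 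The paper then explicitly optimizes out $\gamma_{x_H}=0$ and chooses $\lambda_{(a_{H-1},x_{H-1})} = \eta_{a_{1:H-1}}\log\sum_{o}\Tc^{\theta^*}_{H-1}(o|\tau_{H-1})\exp\bigl(-f(x_H)/\eta_{a_{1:H-1}}-1\bigr)$ to collapse this to the stated log-MGF form; that optimization, rather than the infimum over $\Pc$ alone, is what produces the final dual.
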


\begin{proof}

Similar to \Cref{prop: TV P type}, we denote $\pi(x_H) R(x_H)$ by $f(x_H)$. The robust value equals to the following optimization problem.

\begin{align*}
    \text{P2: } &\left\{
    \begin{aligned}
        &\min \sum_{x_H}\Pc(x_H)f(x_H)\\
        &\ \text{s.t.} \sum_{o_2}\Pc(x_2) = 1,\quad \forall (a_1,x_1) \\
        &\ \ \quad \sum_{o_3}\Pc(x_3) = \Pc(x_2),\quad \forall (a_2,x_2) \\
        &\ \ \quad \ldots \\
        &\ \ \quad \sum_{o_h}\Pc(x_h) = \Pc(x_{h-1}),\quad \forall (a_{h-1},x_{h-1})\\
        &\ \ \quad \ldots \\
        &\ \ \quad \sum_{o_H}\Pc(x_H) = \Pc(x_{H-1}),\quad \forall (a_{H-1},x_{H-1})\\
        &\ \ \quad  \sum_{o_{2:H}} \Pc(x_H)\log\frac{\Pc(x_H)}{\Pc^*(x_H)} \leq  \xi,\quad \forall a_{1:H-1} \\
        &\ \ \quad \Pc(x_H) \geq 0, \quad \forall x_H
    \end{aligned}
    \right.
\end{align*}

Since P2 is a convex optimization problem, we can formulate the Lagrangian function $L_2 (\boldsymbol{\lambda}, \boldsymbol{\gamma}, \boldsymbol{\eta} ) $ as follows
\begin{align*}
    L_2& (\boldsymbol{\lambda},  \boldsymbol{\gamma}, \boldsymbol{\eta} )    \\
    &= \sum_{x_H}\Pc(x_H)f(x_H) + \sum_{(a_1,x_1)} \lambda_{(a_1,x_1)} \left( \sum_{o_2}\Pc(x_2) - 1 \right) + \sum_{h=2}^{H-1} \sum_{(a_h,x_h)} \lambda_{(a_h,x_h)} \left( \sum_{o_{h+1}}\Pc(x_{h+1}) - \Pc(x_{h}) \right) \\
    &\quad - \sum_{x_H} \gamma_{x_H} \Pc(x_H)   + \sum_{a_{1:H-1}}\eta_{a_{1:H-1}} \left( \sum_{o_{2:H}} \Pc(x_H)\log\frac{\Pc(x_H)}{\Pc^*(x_H)} - \xi \right),
\end{align*}
where $\lambda_{(a_h,x_h)}\in \mathbb{R}$, and  $ \gamma_{x_H},  \eta_{a_{1:H-1}} \geq 0,\quad \forall x_H$.

Thus, the dual problem is 
\begin{align*}
    \text{D1:} \left\{ 
    \begin{aligned}
        &\max_{\boldsymbol{\lambda}, \boldsymbol{\gamma}, \boldsymbol{\eta} } \inf_{\Pc, s} L_2 (\boldsymbol{\lambda},  \boldsymbol{\gamma}, \boldsymbol{\eta} )  \\
        &\ \text{s.t.} \ f(x_H) + \lambda_{(a_{H-1}, x_{H-1})}    - \gamma_{x_H} +\eta_{a_{1:H-1}}\left(\log\frac{\Pc(x_H)}{\Pc^*(x_H)} + 1 \right) = 0,\quad \forall x_H\\
        &\ \  \quad \lambda_{(a_{h-1}, x_{h-1})} = \sum_{a_h} \lambda_{(a_h,x_h)}, \quad \forall h, x_h \\
        & \ \ \quad  \gamma_{x_H}, \eta_{a_{1:H-1}} \geq 0,\quad \forall x_H.
    \end{aligned}
    \right.
\end{align*}

Denote $\inf_{\Pc, s} L_2 (\boldsymbol{\lambda},  \boldsymbol{\gamma}, \boldsymbol{\eta} ) $ by $\bar{L}_2$. Then, we have
\begin{align*}
     \bar{L}_2  & = - \sum_{x_H} \eta_{a_{1:H-1}}\Pc(x_H) - \lambda_{(a_1,x_1)} - \sum_{a_{1:H-1}}\eta_{a_{1:H-1}} \xi \\
     & =  - \sum_{x_H} \eta_{a_{1:H-1}}\Pc(x_H) - \sum_{x_H}\Pc^*(x_H)\lambda_{(a_{H-1}, x_{H-1})} - \sum_{a_{1:H-1}}\eta_{a_{1:H-1}} \xi \\
     & = - \sum_{x_H} \eta_{a_{1:H-1}} \Pc^*(x_H)   \exp\left( \frac{ \gamma_{x_H} - f(x_H) - \lambda_{(a_{H-1},x_{H-1})} }{\eta_{a_{1:H-1}}} -1 \right)  - \sum_{x_H}\Pc^*(x_H)\lambda_{(a_{H-1}, x_{H-1})} - \sum_{a_{1:H-1}}\eta_{a_{1:H-1}} \xi.
\end{align*}

Hence, by the Slater's condition, the primal value equals to the dual value. Mathematically, we have
\begin{align*}
    &V_{B(\theta^*), R}^{\pi} \\
    &\quad  = \max_{ \boldsymbol{\lambda}, \boldsymbol{\gamma}, \boldsymbol{\eta} } \bar{L}_2 ( \boldsymbol{\lambda},  \boldsymbol{\gamma},  \boldsymbol{\eta} ) \\
    & \quad = \max_{\boldsymbol{\eta}\geq 0} \max_{ \boldsymbol{\gamma}\ge 0, \boldsymbol{\lambda} } \left\{ - \sum_{x_H} \eta_{a_{1:H-1}} \Pc^*(x_H)   \exp\left( \frac{ \gamma_{x_H} - f(x_H)  - \lambda_{(a_{H-1},x_{H-1})} }{\eta_{a_{1:H-1}}} -1 \right)  \right.\\
    &\hspace{2cm} \qquad \qquad \left. - \sum_{x_H}\Pc^*(x_H)\lambda_{(a_{H-1}, x_{H-1})} - \sum_{a_{1:H-1}}\eta_{a_{1:H-1}} \xi \right\} \\
    &\quad = \max_{\boldsymbol{\eta}\geq 0 } \left\{ -\sum_{a_{H-1},x_{H-1}}\eta_{a_{H-1},x_{H-1}}\Pc^*(x_{H-1})\log \Eb_{o\sim \Tc^*_{H-1}(\cdot|\tau_{H-1})}\left[ \exp\left(- \frac{f(x_{H}) }{ \eta_{a_{1:H-1}}} \right) \right] - \sum_{a_{1:H-1}}\eta_{a_{1:H-1}}\xi\right\},
\end{align*}

where we choose $\gamma_{x_H} = 0$ and 
\[ \lambda_{(a_{H-1},x_{H-1})} = \eta_{a_{1:H-1}} \log \left( \sum_{o}\Tc^*_{H-1}(o|\tau_{H-1} ) \exp\left(\frac{-f(x_H)}{\eta_{a_{1:H-1}}} - 1 \right) \right). \]

\end{proof}

\subsection{Bellman-type Equations under $\Tc$-type Uncertainty Sets}\label{sec:bellman type}

Recall that robust value function is defined as follows.
\begin{align*}
    &V_{B(\theta^*),R,h}^{\pi}(\tau_{h-1},o_h) = \inf_{\theta\in B(\theta^*)}\Eb_{\theta}^{\pi} \left[   \sum_{h' = h}^H R(\btau_{h'}) \bigg| \tau_{h-1}, o_h\right],\\
    &Q_{B(\theta^*),R,h}^{\pi}(\tau_{h}) = \inf_{\theta\in B(\theta^*)}\Eb_{\theta}^{\pi} \left[  \sum_{h' = h}^H R(\btau_{h'}) \bigg| \tau_{h}\right].
\end{align*}

Then, we have the following Bellman-type equations for robust value functions.
\begin{lemma}
    For any $\theta$, we have
\begin{align*}
    &\left\{
    \begin{aligned}
        &V_{B(\theta^*),R,h}^{\pi}(\tau_{h-1},o_h) = \sum_{a_h}\pi(a_h|\tau_{h-1},o_h)Q_{B(\theta^*),R,h}^{\pi}(\tau_h)\\
        &Q_{B(\theta^*),R,h}^{\pi}(\tau_h) = R(\tau_h) + \inf_{\theta\in B(\theta^*)} \sum_{o_{h+1}}\Tc_h^{\theta}(o_{h+1}|\tau_h) V_{B(\theta^*), R, h+1}^{\pi}(\tau_h, o_{h+1})
    \end{aligned}
    \right..
\end{align*}

\end{lemma}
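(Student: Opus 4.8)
The plan is to establish the two Bellman-type identities separately, starting from the definitions of $V_{B(\theta^*),R,h}^\pi$ and $Q_{B(\theta^*),R,h}^\pi$ as infima over the uncertainty set. The first identity, relating $V$ at $(\tau_{h-1},o_h)$ to $Q$ at $\tau_h=(\tau_{h-1},o_h,a_h)$, should be the easy direction: since the policy action $a_h$ is drawn according to $\pi(\cdot|\tau_{h-1},o_h)$ \emph{independently} of which model $\theta\in B(\theta^*)$ is chosen, the expectation $\Eb_\theta^\pi[\sum_{h'\ge h}R(\btau_{h'})\,|\,\tau_{h-1},o_h]$ factors as $\sum_{a_h}\pi(a_h|\tau_{h-1},o_h)\,\Eb_\theta^\pi[\sum_{h'\ge h}R(\btau_{h'})\,|\,\tau_h]$ for \emph{every fixed} $\theta$. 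I would then argue that taking $\inf_{\theta\in B(\theta^*)}$ commutes with the (finite, nonnegative-coefficient) convex combination over $a_h$: here the key structural observation is that $B(\theta^*)$ is a $\Tc$-type uncertainty set, which imposes constraints $\mathtt{D}_f(\Tc_h^\theta(\cdot|\tau_h),\Tc_h^{\theta^*}(\cdot|\tau_h))\le\xi$ \emph{per history $\tau_h$}, i.e.\ the set is ``rectangular'' across the different $\tau_h$'s reachable after different $a_h$. This rectangularity is exactly what lets the single infimum over $\theta$ split into independent infima over the conditionals used along each branch, so that $\inf_\theta \sum_{a_h}\pi(a_h|\cdot)(\cdots) = \sum_{a_h}\pi(a_h|\cdot)\inf_\theta(\cdots)$.

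For the second identity I would condition on the first transition: for a fixed $\theta$, write $\Eb_\theta^\pi[\sum_{h'=h}^H R(\btau_{h'})\,|\,\tau_h] = R(\tau_h) + \sum_{o_{h+1}}\Tc_h^\theta(o_{h+1}|\tau_h)\,\Eb_\theta^\pi[\sum_{h'=h+1}^H R(\btau_{h'})\,|\,\tau_h,o_{h+1}]$, using that $R(\tau_h)$ (the stage reward shorthand) does not depend on the future. Now I would decompose the model parameter $\theta$ as its one-step conditional $\Tc_h^\theta(\cdot|\tau_h)$ together with the ``tail'' model governing dynamics from step $h+1$ onward, conditioned on $\tau_h$; by the rectangular structure of the $\Tc$-type set these two components can be chosen independently subject to their own constraints. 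Taking $\inf_\theta$ then yields $\inf_{\Tc_h^\theta(\cdot|\tau_h)} \sum_{o_{h+1}}\Tc_h^\theta(o_{h+1}|\tau_h)\,\big[\inf_{\text{tail}}\Eb^\pi[\cdots|\tau_h,o_{h+1}]\big]$, and the inner infimum is precisely $V_{B(\theta^*),R,h+1}^\pi(\tau_h,o_{h+1})$ by definition; the outer infimum over $\Tc_h^\theta(\cdot|\tau_h)$ matches $\inf_{\theta'\in B(\theta^*)}\sum_{o_{h+1}}\Tc_h^{\theta'}(o_{h+1}|\tau_h)\,V_{\cdots,h+1}^\pi$ because that sum only depends on $\theta'$ through its $h$-th conditional. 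An induction on $h$ downward from $h=H+1$ (with the stated base case $V_{B(\theta),R,H+1}^\pi=0$) then closes the argument, since the tail infimum at step $h+1$ is a well-defined robust value by the inductive hypothesis.

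The main obstacle I anticipate is making the ``infimum splits over the rectangular structure'' step fully rigorous: one must verify that $B(\theta^*)$, when described via the parameterization $\theta\in\Theta$, genuinely allows the one-step conditional $\Tc_h^\theta(\cdot|\tau_h)$ and the conditional-future dynamics to range independently — i.e.\ that for any admissible choice of the former and any admissible tail there exists a $\theta\in B(\theta^*)$ realizing both. This is intuitively clear from \Cref{eqn: T uncertain set} (the constraints are ``for all $h,\tau_h$'' and decouple across histories), but it requires care to state precisely, and it is the place where the $\Tc$-type assumption (as opposed to $\Pc$-type) is essential — indeed, the analogous Bellman recursion fails for $\Pc$-type sets, which is why the paper handles those via a separate dual-form argument. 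A secondary technical point is justifying the interchange of $\inf$ and the (finite) sum $\sum_{o_{h+1}}\Tc_h^\theta(o_{h+1}|\tau_h)(\cdots)$ when the multiplier $\Tc_h^\theta(o_{h+1}|\tau_h)$ itself is an optimization variable; this is standard (the joint infimum over a product domain equals the nested infimum), but worth spelling out so the recursion is airtight.
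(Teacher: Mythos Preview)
Your proposal is correct and follows essentially the same route as the paper: both arguments hinge on the rectangularity of the $\Tc$-type uncertainty set, which lets the one-step conditional $\Tc_h^\theta(\cdot\,|\,\tau_h)$ and the tail dynamics be chosen independently. The paper packages this slightly differently --- it proves the second identity as two inequalities, using $\iota$-approximate minimizers and an explicit ``gluing'' model $\theta_3\in B(\theta^*)$ that combines the one-step transition of one near-minimizer with the tail of another --- which is exactly the concrete verification of the obstacle you flag. One minor point: the downward induction you mention is unnecessary, since $V_{B(\theta^*),R,h+1}^\pi$ is \emph{defined} directly as an infimum (not via the recursion), so the tail infimum is already a well-defined robust value without any inductive hypothesis.
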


\begin{proof}
    The first equation is straightforward.

    For the second equation, by the definition of robust Q-value function, for any $\iota>0$, there exists $\theta_{\iota} \in B(\theta^*)$ such that 
    \[Q_{B(\theta^*), R, h}^{\pi} (\tau_h) \geq \Eb_{\theta_{\iota}}^{\pi} \left[\sum_{h'=h}^H R(\btau_{h'}) \bigg| \tau_h \right] - \iota. \]

    Then, we have
    \begin{align*}
        Q_{B(\theta^*), R, h}^{\pi} (\tau_h) &\geq \Eb_{\theta_{\iota}}^{\pi} \left[\sum_{h'=h}^H R(\btau_{h'}) \bigg| \tau_h \right] - \iota \\
        & = R(\tau_h) + \sum_{o_{h+1}} \Tc_h^{\theta_{\iota}}(o_{h+1}|\tau_h) V_{\theta_{\iota}, R, h+1}^{\pi} (\tau_h, o_{h+1}) - \iota\\
        & \geq R(\tau_h) + \inf_{\theta\in B(\theta^*)}\sum_{o_{h+1}} \Tc_h^{\theta}(o_{h+1}|\tau_h) V_{ B(\theta^*), R, h+1}^{\pi} (\tau_h, o_{h+1}) - \iota.
    \end{align*}

    Let $\iota\rightarrow 0$, we conclude that 
    \[ Q_{B(\theta^*), R, h}^{\pi} (\tau_h)  \geq R(\tau_h) + \inf_{\theta\in B(\theta^*)}\sum_{o_{h+1}} \Tc_h^{\theta}(o_{h+1}|\tau_h) V_{ B(\theta^*), R, h+1}^{\pi} (\tau_h, o_{h+1}). \] 

    On the other hand, for any $\iota_1, \iota_2>0$, there exist $\theta_{\iota_1}, \theta_{\iota_2} \in B(\theta^*)$ such that 
    \[ V_{B(\theta^*), R, h+1}^{\pi} (\tau_h, o_{h+1}) \geq V_{\theta_{\iota_1}, R, h+1}^{\pi} (\tau_h, o_{h+1}) - \iota_1, \]
    and 
    \[  \inf_{\theta\in B(\theta^*)}\sum_{o_{h+1}} \Tc_h^{\theta}(o_{h+1}|\tau_h)V_{B(\theta^*), R, h+1}^{\pi}(\tau_h, o_{h+1}) \geq  \sum_{o_{h+1}} \Tc_h^{\theta_{\iota_2}} (o_{h+1}|\tau_h) V_{B(\theta^*), R, h+1}^{\pi}(\tau_h, o_{h+1}) - \iota_2. \]

    In addition, there exists $\theta_3\in B(\theta^*)$ such that $\Tc_{\theta_3}(o_{h+1}|\tau_h) = \Tc_{\theta_{\iota_2}}(o_{h+1}|\tau_h)$ and $V_{\theta_{\iota_1}, R, h+1}^{\pi} = V_{\theta_3, R, h+1}^{\pi}$. Then, we have
    \begin{align*}
        R(\tau_h) & + \inf_{\theta\in B(\theta^*)}\sum_{o_{h+1}} \Tc_h^{\theta}(o_{h+1}|\tau_h)V_{B(\theta^*), R, h+1}^{\pi}(\tau_h, o_{h+1}) \\
        &\geq R(\tau_h) + \sum_{o_{h+1}} \Tc_h^{\theta_3}(o_{h+1}|\tau_h)V_{B(\theta^*), R, h+1}^{\pi}(\tau_h, o_{h+1}) - \iota_2\\
        &\geq R(\tau_h) + \sum_{o_{h+1}}\Tc_h^{\theta_3}(o_{h+1}|\tau_h)V_{\theta_3, R, h+1}^{\pi}(\tau_h, o_{h+1}) - \iota_1 - \iota_2\\
        & = \Eb_{\theta_3}^{\pi} \left[ \sum_{h'=h}^H R(\btau_h) \bigg|\tau_h \right] - \iota_1 - \iota_2\\
        &\geq Q_{B(\theta^*), R, h}^{\pi}(\tau_h) - \iota_1 - \iota_2.
    \end{align*}

    Let $\iota_1,\iota_2\rightarrow 0$, we can conclude that
    \[ R(\tau_h) + \inf_{\theta \in B(\theta^*)} \sum_{o_{h+1}} \Tc_h^{\theta} (o_{h+1}|\tau_h) V_{B(\theta^*), R, h+1}^{\pi} (\tau_h, o_{h+1}) \geq Q_{B(\theta^*), R, h}^{\pi} (\tau_h), \]
    which finishes the proof.
    
\end{proof}

\subsection{Simulation Lemmas of Robust Values}\label{sec:sim}

\begin{lemma}[Simulation lemma of robust value under $B_{\Pc}^1$]\label{lemma:Sim P TV}
    \[ \left|V_{B_{\Pc}^1(\theta^*), R}^{\pi} - V_{B_{\Pc}^1(\theta), R}^{\pi} \right| \leq  \left\|  \Db_{\theta}^{\pi} - \Db_{\theta^*}^{\pi}  \right\|_1\]
\end{lemma}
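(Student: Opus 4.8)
The plan is to exploit the dual representation of the $\Pc$-type robust value established in \Cref{prop: TV P type}. The essential structural feature is that the center of the uncertainty set enters the dual objective only through the linear functional $\sum_{x_H}(f(x_H)-\gamma_{x_H})\Pc^{\vartheta}(o_{1:H}|a_{1:H-1})$, while the penalty term $\sum_{a_{1:H-1}}\max_{o_{1:H}}|\gamma_{x_H}-f(x_H)-\lambda_{(a_{H-1},x_{H-1})}|\xi$ does not depend on the model at all. Furthermore, \Cref{prop: TV P type} tells us the maximum is attained on the set $\mathcal{S}$ of dual variables with $-f(x_H)\le\lambda_{(a_{H-1},x_{H-1})}\le 0$ and $0\le\gamma_{x_H}\le f(x_H)$, and since $f(x_H)=R(x_H)\pi(\tau_{H-1})$ depends only on $R$ and $\pi$, this set is \emph{the same} whether the center is $\theta^*$ or $\theta$. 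So I would first record that, writing $\Phi(\vartheta;\boldsymbol{\gamma},\boldsymbol{\lambda})$ for the dual objective with center $\vartheta$ (applying \Cref{prop: TV P type} to the center $\theta^*$, and noting its proof applies verbatim to the center $\theta$), one has $V_{B_{\Pc}^1(\vartheta),R}^{\pi}=\max_{(\boldsymbol{\gamma},\boldsymbol{\lambda})\in\mathcal{S}}\Phi(\vartheta;\boldsymbol{\gamma},\boldsymbol{\lambda})$ for $\vartheta\in\{\theta^*,\theta\}$.

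The second step is the elementary inequality $|\max_s A(s)-\max_s B(s)|\le\max_s|A(s)-B(s)|$ applied on the common domain $\mathcal{S}$. Because the model-free penalty term cancels in $\Phi(\theta^*;\boldsymbol{\gamma},\boldsymbol{\lambda})-\Phi(\theta;\boldsymbol{\gamma},\boldsymbol{\lambda})$, this leaves $|V_{B_{\Pc}^1(\theta^*),R}^{\pi}-V_{B_{\Pc}^1(\theta),R}^{\pi}|\le\max_{(\boldsymbol{\gamma},\boldsymbol{\lambda})\in\mathcal{S}}\big|\sum_{x_H}(f(x_H)-\gamma_{x_H})\big(\Pc^{\theta^*}(o_{1:H}|a_{1:H-1})-\Pc^{\theta}(o_{1:H}|a_{1:H-1})\big)\big|$. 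On $\mathcal{S}$ we have $0\le f(x_H)-\gamma_{x_H}\le f(x_H)=R(x_H)\pi(\tau_{H-1})\le\pi(\tau_{H-1})$ since $R\le 1$, so after the triangle inequality the right-hand side is at most $\sum_{x_H}\pi(\tau_{H-1})\,\big|\Pc^{\theta^*}(o_{1:H}|a_{1:H-1})-\Pc^{\theta}(o_{1:H}|a_{1:H-1})\big|$. Finally, inserting the identity $\sum_{a_H}\pi(a_H|x_H)=1$ and using $\Db_{\vartheta}^{\pi}(\tau_H)=\pi(a_{1:H}|o_{1:H})\,\Pc^{\vartheta}(o_{1:H}|a_{1:H-1})$, this bound becomes $\sum_{\tau_H}|\Db_{\theta^*}^{\pi}(\tau_H)-\Db_{\theta}^{\pi}(\tau_H)|=\|\Db_{\theta}^{\pi}-\Db_{\theta^*}^{\pi}\|_1$, which is exactly the assertion.

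The step that does the real work — and the one most prone to error — is the bookkeeping that matches the ``marginalized'' LP quantities $\Pc^{\vartheta}(o_{1:H}|a_{1:H-1})$ and $f(x_H)$ of \Cref{prop: TV P type} with the trajectory-level law $\Db_{\vartheta}^{\pi}$. The crucial point is that the coefficient multiplying $\Pc^{\theta^*}-\Pc^{\theta}$ must be controlled by $\pi(\tau_{H-1})$, not merely by the constant $1$: if one only used $|f(x_H)-\gamma_{x_H}|\le 1$, the resulting bound would be an \emph{unweighted} sum over all action sequences $a_{1:H-1}$, which is in general much larger than $\|\Db_{\theta}^{\pi}-\Db_{\theta^*}^{\pi}\|_1$ and would not yield the statement. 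Hence carrying the policy weights hidden inside $f$ through to the end, and checking that they exactly reconstitute the $\ell_1$ distance over full trajectories, is where I expect to spend most of the care.
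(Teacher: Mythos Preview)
Your proposal is correct and follows essentially the same route as the paper: both invoke the dual form from \Cref{prop: TV P type}, cancel the model-independent penalty term, use the range constraint $0\le\gamma_{x_H}\le f(x_H)$ to bound the coefficient by $f(x_H)=R(x_H)\pi(\tau_{H-1})$, and then identify the resulting weighted sum with $\|\Db_{\theta}^{\pi}-\Db_{\theta^*}^{\pi}\|_1$. The only cosmetic difference is that you package both directions at once via $|\max_s A(s)-\max_s B(s)|\le\max_s|A(s)-B(s)|$, whereas the paper fixes the maximizer for one center and argues the reverse inequality by symmetry.
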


\begin{proof}

Let $f(x_H) = \pi(\tau_{H-1})R(x_H)$.

We first prove that $V_{B_{\Pc}^1(\theta^*), R}^{\pi} - V_{B_{\Pc}^1(\theta), R}^{\pi}   \leq  \left\|  \Db_{\theta}^{\pi} - \Db_{\theta^*}^{\pi}  \right\|_1$. By \Cref{prop: TV P type}, we have

\begin{align*}
    &V_{B(\theta^*), R}^{\pi} - V_{B(\theta), R}^{\pi} \\
    &\quad  =  \max_{ \boldsymbol{\gamma}\geq 0, \boldsymbol{\lambda}  } \left\{ \sum_{x_H}(f(x_H) - \gamma_{x_H})\Pc^{\theta^*}(x_H) - \sum_{a_{1:H-1}} \max_{o_{1:H}} \left|\gamma_{x_H} - f(x_H) - \lambda_{(a_{H-1},x_{H-1})}\right| \xi \right\} \\
    &\quad \quad -  \max_{ \boldsymbol{\gamma}\geq 0, \boldsymbol{\lambda}  } \left\{ \sum_{x_H}(f(x_H) - \gamma_{x_H})\Pc^{\theta}(x_H) - \sum_{a_{1:H-1}} \max_{o_{1:H}} \left|\gamma_{x_H} - f(x_H) - \lambda_{(a_{H-1},x_{H-1})}\right| \xi \right\},
\end{align*}

Let $ \boldsymbol{\gamma}^*,\boldsymbol{\lambda}^*$ be the solution to \[  \max_{ \boldsymbol{\gamma}\geq 0, \boldsymbol{\lambda}  } \left\{ \sum_{x_H}(f(x_H) - \gamma_{x_H})\Pc^{\theta^*}(x_H) - \sum_{a_{1:H-1}} \max_{o_{1:H}} \left|\gamma_{x_H} - f(x_H) - \lambda_{(a_{H-1},x_{H-1})}\right| \xi \right\}. \]

Again, from \Cref{prop: TV P type}, we know that 
\begin{equation}
-f(x_H)\leq \lambda^*_{(a_{H-1},x_{H-1})}\leq 0,\quad  0\leq \gamma^*_{x_H} \leq \max\left\{f(x_H) + \lambda^*_{a_{H-1},x_{H-1}}, 0\right\}\leq f(x_H).\label{eqn: range of lambda}
\end{equation}

Thus, the difference can be upper bounded as follows.
\begin{align*}
    &V_{B(\theta^*), R}^{\pi} - V_{B(\theta), R}^{\pi} \\
    &\quad \leq  \sum_{x_H}(f(x_H) - \gamma^*_{x_H})\Pc^{\theta^*}(x_H) - \sum_{a_{1:H-1}} \max_{o_{1:H}} \left|\gamma^*_{x_H} - f(x_H) - \lambda^*_{(a_{H-1},x_{H-1})}\right| \xi   \\
    &\quad \quad - \sum_{x_H}(f(x_H) - \gamma^*_{x_H})\Pc^{\theta}(x_H) + \sum_{a_{1:H-1}} \max_{o_{1:H}} \left|\gamma^*_{x_H} - f(x_H) - \lambda^*_{(a_{H-1},x_{H-1})}\right| \xi \\
    &\quad = \sum_{x_H}(f(x_H) - \gamma^*_{x_H})(\Pc^{\theta^*}(x_H) -\Pc^{\theta}(x_H) ) \\
    &\quad \leq \sum_{x_H} |f(x_H) - \gamma^*_{x_H}||\Pc^{\theta^*}(x_H) -\Pc^{\theta}(x_H) |\\
    &\quad \overset{(a)}\leq \sum_{x_H} f(x_H)|\Pc^{\theta^*}(x_H) -\Pc^{\theta}(x_H) |\\
    &\quad = \sum_{x_H} \pi(\tau_{H-1})R(x_H)|\Pc^{\theta^*}(x_H) -\Pc^{\theta}(x_H) | \\
    &\quad \leq \left\| \Db_{\theta}^{\pi} - \Db_{\theta^*}^{\pi} \right\|_1,
\end{align*}
where $(a)$ is due to \Cref{eqn: range of lambda}.

The opposite side $ V_{B_{\Pc}^1(\theta^*), R}^{\pi} - V_{B_{\Pc}^1(\theta), R}^{\pi}   \geq  -\left\|  \Db_{\theta}^{\pi} - \Db_{\theta^*}^{\pi}  \right\|_1$ follows exactly the same proof.

\end{proof}

\begin{lemma}[Simulation lemma of robust value under $B_{\Pc}^2$]\label{lemma:Sim P KL}
    \[ \left|V_{B_{\Pc}^2(\theta^*), R}^{\pi} - V_{B_{\Pc}^2(\theta), R}^{\pi} \right| \leq 3\exp(1/\eta^*) \left\|  \Db_{\theta}^{\pi} - \Db_{\theta^*}^{\pi}  \right\|_1,\]
    where $\eta^*$ is a lower bound of the optimal solution 
     \[\eta^*\leq\min 
     \left\{
     \begin{aligned}
     &\arg\max_{\boldsymbol{\eta}\geq 0 } \left\{ -\sum_{a_{H-1},x_{H-1}}\eta_{a_{H-1},x_{H-1}}\Pc^{\theta^*}(x_{H-1})\log \Eb_{o\sim \Tc^{\theta^*}_{H-1}(\cdot|\tau_{H-1})}\left[ \exp\left(- \frac{f(x_{H}) }{ \eta_{a_{1:H-1}}} \right) \right] \right.\\
     &\quad \hspace{2cm} \left.- \sum_{a_{1:H-1}}\eta_{a_{1:H-1}}\xi\right\},\\
     &\arg\max_{\boldsymbol{\eta}\geq 0 } \left\{ -\sum_{a_{H-1},x_{H-1}}\eta_{a_{H-1},x_{H-1}}\Pc^{\theta}(x_{H-1})\log \Eb_{o\sim \Tc^{\theta}_{H-1}(\cdot|\tau_{H-1})}\left[ \exp\left(- \frac{f(x_{H}) }{ \eta_{a_{1:H-1}}} \right) \right] \right.\\
     &\quad \hspace{2cm} \left. - \sum_{a_{1:H-1}}\eta_{a_{1:H-1}}\xi\right\}.
     \end{aligned}
     \right.\]
     Here, if $x\leq \mathbf{v}$, where $x$ is a scalar and $\mathbf{v}$ is a vector, then $x$ is less than or equal to each coordinate of the vector.

\end{lemma}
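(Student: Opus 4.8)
The plan is to mimic the argument of \Cref{lemma:Sim P TV}: use the dual form in \Cref{prop: KL P set}, fix the optimal dual variable $\boldsymbol{\eta}^*$ for the problem centered at $\theta^*$, plug it into the problem centered at $\theta$, and bound the resulting difference by the $\ell_1$-distance between the two trajectory distributions. Concretely, let $g_{\boldsymbol{\eta}}(\theta) := -\sum_{a_{H-1},x_{H-1}}\eta_{a_{1:H-1}}\Pc^{\theta}(x_{H-1})\log \Eb_{o\sim \Tc^{\theta}_{H-1}(\cdot|\tau_{H-1})}[\exp(-f(x_H)/\eta_{a_{1:H-1}})] - \sum_{a_{1:H-1}}\eta_{a_{1:H-1}}\xi$, so that $V_{B_{\Pc}^2(\theta),R}^{\pi} = \max_{\boldsymbol{\eta}\geq 0} g_{\boldsymbol{\eta}}(\theta)$. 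Taking $\boldsymbol{\eta}^*$ optimal for $\theta^*$ and $\boldsymbol{\eta}^*_{\theta}$ optimal for $\theta$, we get $V_{B_{\Pc}^2(\theta^*),R}^{\pi} - V_{B_{\Pc}^2(\theta),R}^{\pi} \leq g_{\boldsymbol{\eta}^*}(\theta^*) - g_{\boldsymbol{\eta}^*}(\theta)$ and, symmetrically, $\geq g_{\boldsymbol{\eta}^*_{\theta}}(\theta^*) - g_{\boldsymbol{\eta}^*_{\theta}}(\theta)$, so it suffices to bound $|g_{\boldsymbol{\eta}}(\theta^*) - g_{\boldsymbol{\eta}}(\theta)|$ uniformly over the two relevant choices of $\boldsymbol{\eta}$ — each of which has all coordinates $\geq \eta^*$.

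The next step is to rewrite $g_{\boldsymbol{\eta}}(\theta)$ in a form where the dependence on $\theta$ is ``pushed'' into a single expectation over full trajectories. Since $\Pc^{\theta}(x_{H-1})\Tc^{\theta}_{H-1}(o_H|\tau_{H-1})$ is (after the policy weighting is folded into $f$) proportional to $\Db_{\theta}^{\pi}(\tau_H)$ along the appropriate action sequence, one can absorb the outer product $\Pc^{\theta}(x_{H-1})\cdot(\cdot)$ and the log-expectation into a sum of the form $\sum_{\tau_{H-1}} \Db_\theta^{\pi}(\tau_{H-1})\,\eta_{a_{1:H-1}}\cdot(-\log \Eb_{o_H}[e^{-f/\eta}])$, where each summand is a bounded function of the ``local'' transition. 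The key analytic facts to use are: (i) since $0\leq f(x_H)\leq 1$ (recall $R\in[0,1]$ and $f$ absorbs a policy factor), the quantity $-\log\Eb_{o\sim\Tc_{H-1}^{\theta}}[\exp(-f(x_H)/\eta_{a_{1:H-1}})]$ lies in $[0, 1/\eta_{a_{1:H-1}}]\subseteq[0,1/\eta^*]$, so $\eta_{a_{1:H-1}}\cdot(-\log(\cdots))\in[0,1]$; and (ii) the map $P\mapsto -\eta\log\Eb_{P}[e^{-f/\eta}]$ is Lipschitz in the total-variation norm of $P$ with constant controlled by $\|e^{-f/\eta}\|_\infty/\inf \Eb_P[e^{-f/\eta}] \leq e^{1/\eta}\leq e^{1/\eta^*}$ (by the mean value theorem on $\log$ and $\Eb_P[e^{-f/\eta}]\geq e^{-1/\eta}$). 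Combining a telescoping/triangle-inequality split — first replace $\Pc^{\theta^*}(x_{H-1})$ by $\Pc^{\theta}(x_{H-1})$ in the outer weighting (costing $\|\Db_{\theta^*}^\pi - \Db_\theta^\pi\|_1$ via boundedness in (i)), then replace the inner $\Tc^{\theta^*}_{H-1}$ by $\Tc^{\theta}_{H-1}$ (costing $e^{1/\eta^*}$ times a sum of local TV distances, which telescopes back up to $\|\Db_{\theta^*}^\pi - \Db_\theta^\pi\|_1$) — yields the bound $(1 + 2e^{1/\eta^*})\|\Db_\theta^\pi - \Db_{\theta^*}^\pi\|_1 \leq 3e^{1/\eta^*}\|\Db_\theta^\pi - \Db_{\theta^*}^\pi\|_1$. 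The symmetric lower bound follows identically by swapping the roles, so the absolute value is controlled.

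The main obstacle I anticipate is the bookkeeping in step two: carefully verifying that the outer factor $\Pc^{\theta}(x_{H-1})$ and the inner log-of-expectation can be recombined so that each source of discrepancy telescopes against the $\ell_1$-distance $\|\Db_\theta^\pi - \Db_{\theta^*}^\pi\|_1$ of \emph{full} trajectory distributions rather than leaving a sum of $|\mathcal{A}|^{H-1}$-many per-action-sequence terms that would blow up the constant. The clean way to do this is to note that $\sum_{a_{1:H-1}}\sum_{\tau_{H-1}: \text{consistent}} |\Pc^{\theta^*}(x_{H-1}) - \Pc^{\theta}(x_{H-1})|$ is already $\leq \|\Db_{\theta^*}^\pi - \Db_\theta^\pi\|_1$ once the policy weights in $f$/$\pi$ are accounted for, since summing over one fewer observation only contracts $\ell_1$ distance; and similarly $\sum_{\tau_{H-1}}\Pc^{\theta}(x_{H-1})\|\Tc^{\theta^*}_{H-1}(\cdot|\tau_{H-1}) - \Tc^{\theta}_{H-1}(\cdot|\tau_{H-1})\|_1 \leq \|\Db_{\theta^*}^\pi - \Db_\theta^\pi\|_1 + \|\Db_{\theta^*}^\pi(\tau_{H-1}) - \Db_\theta^\pi(\tau_{H-1})\|_1$ after recognizing $\Pc^\theta(x_{H-1})\Tc^{\theta^*}_{H-1}$ as a mixed object and adding/subtracting $\Pc^{\theta^*}(x_{H-1})\Tc^{\theta^*}_{H-1}$. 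Making these contractions precise — especially keeping the $e^{1/\eta^*}$ factor attached only to the inner-transition term — is where the care is needed, but no genuinely new idea beyond what already appears in the proof of \Cref{lemma:Sim P TV} is required.
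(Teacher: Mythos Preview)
Your overall strategy --- use the dual form of \Cref{prop: KL P set}, freeze the optimizer $\boldsymbol{\eta}^*$, and split the difference into ``change outer weight $\Pc^{\theta^*}\to\Pc^{\theta}$'' plus ``change inner transition $\Tc^{\theta^*}_{H-1}\to\Tc^{\theta}_{H-1}$'' --- is exactly the route the paper takes. The first piece also works as you describe: the sharper version of your fact (i) is $\eta_{a_{1:H-1}}\bigl(-\log\Eb_{o_H}[e^{-f/\eta}]\bigr)\in[0,\max_{o_H}f(x_H)]=[0,\pi(\tau_{H-1})]$, so the $\pi$ weight appears and the sum collapses to $\|\Db_{\theta}^{\pi}(\tau_{H-1})-\Db_{\theta^*}^{\pi}(\tau_{H-1})\|_1$, matching the paper.

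The gap is in your step (ii). The Lipschitz constant you wrote down, $\|e^{-f/\eta}\|_\infty/\inf\Eb_P[e^{-f/\eta}]\le e^{1/\eta}$, is the Lipschitz constant of $P\mapsto\log\Eb_P[e^{-f/\eta}]$, \emph{not} of $P\mapsto-\eta\log\Eb_P[e^{-f/\eta}]$; you dropped the prefactor $\eta$. With the correct constant $\eta\, e^{1/\eta}$, the inner-transition term becomes $\sum_{a_{1:H-1},x_{H-1}}\eta_{a_{1:H-1}}\Pc^{\theta^*}(x_{H-1})\,e^{1/\eta^*}\|\Tc^{\theta}_{H-1}-\Tc^{\theta^*}_{H-1}\|_1$, and now there is neither a bound on $\eta_{a_{1:H-1}}$ from above nor a $\pi(\tau_{H-1})$ factor to turn the $a_{1:H-1}$-sum into an expectation under $\pi$. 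Your proposed telescoping inequality $\sum_{\tau_{H-1}}\Pc^{\theta}(x_{H-1})\|\Tc^{\theta^*}_{H-1}-\Tc^{\theta}_{H-1}\|_1\le\|\Db_{\theta^*}^{\pi}-\Db_{\theta}^{\pi}\|_1+\cdots$ is false for the same reason: the left side has no policy weight and scales like $|\mathcal{A}|^{H-1}$.

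The paper's fix is to recall $f(x_H)=\pi(\tau_{H-1})R(o_H)$, set $t=\pi(\tau_{H-1})/\eta_{a_{1:H-1}}\in[0,1/\eta^*]$, and study $g(t)=\log\frac{\Eb_{\Tc^{\theta}_{H-1}}[e^{-tR}]}{\Eb_{\Tc^{\theta^*}_{H-1}}[e^{-tR}]}$. Since $g(0)=0$ and $|g'(t)|\lesssim e^{1/\eta^*}\|\Tc^{\theta}_{H-1}-\Tc^{\theta^*}_{H-1}\|_1$, one gets $\eta_{a_{1:H-1}}\cdot g(t)\le \pi(\tau_{H-1})\,e^{1/\eta^*}\|\Tc^{\theta}_{H-1}-\Tc^{\theta^*}_{H-1}\|_1$: the $\eta$ cancels and the needed $\pi(\tau_{H-1})$ appears, after which \Cref{lemma: E TV < TV} finishes the job. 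An equivalent repair of your Lipschitz argument is to center before bounding: since $P,Q$ are both probability measures, $|\Eb_P[e^{-f/\eta}]-\Eb_Q[e^{-f/\eta}]|=|\Eb_{P-Q}[e^{-f/\eta}-1]|\le(1-e^{-\pi(\tau_{H-1})/\eta})\|P-Q\|_1\le(\pi(\tau_{H-1})/\eta)\|P-Q\|_1$, which after multiplying by $\eta\,e^{1/\eta^*}$ again produces the $\pi(\tau_{H-1})$ factor. Either way, the missing idea is that the policy weight must be extracted from the \emph{inner} bound, not conjured from the outer sum.
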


\begin{proof}

Let $f(x_H) = \pi(\tau_{H-1})R(x_H)$.
By \Cref{prop: KL P set}, we have
\begin{align*}
    &V_{B(\theta^*), R}^{\pi} - V_{B(\theta), R}^{\pi} \\
    &\quad  = \max_{\boldsymbol{\eta}\geq 0 } \left\{ -\sum_{a_{H-1},x_{H-1}}\eta_{a_{H-1},x_{H-1}}\Pc^{\theta^*}(x_{H-1})\log \Eb_{o\sim \Tc^{\theta^*}_{H-1}(\cdot|\tau_{H-1})}\left[ \exp\left(- \frac{f(x_{H}) }{ \eta_{a_{1:H-1}}} \right) \right] - \sum_{a_{1:H-1}}\eta_{a_{1:H-1}}\xi\right\} \\
    &\quad \quad - \max_{\boldsymbol{\eta}\geq 0 } \left\{ -\sum_{a_{H-1},x_{H-1}}\eta_{a_{H-1},x_{H-1}}\Pc^{\theta'}(x_{H-1})\log \Eb_{o\sim \Tc^{\hat{\theta}}_{H-1}(\cdot|\tau_{H-1})}\left[ \exp\left(- \frac{f(x_{H}) }{ \eta_{a_{1:H-1}}} \right) \right] - \sum_{a_{1:H-1}}\eta_{a_{1:H-1}}\xi \right\} .
\end{align*}

Let $ \boldsymbol{\eta}^*$ be the solution to \[\max_{\boldsymbol{\eta}\geq 0 } \left\{ -\sum_{a_{H-1},x_{H-1}}\eta_{a_{H-1},x_{H-1}}\Pc^{\theta^*}(x_{H-1})\log \Eb_{o\sim \Tc^{\theta^*}_{H-1}(\cdot|\tau_{H-1})}\left[ \exp\left(- \frac{f(x_{H}) }{ \eta_{a_{1:H-1}}} \right) \right] - \sum_{a_{1:H-1}}\eta_{a_{1:H-1}}\xi\right\}. \]

Then, we difference can be upper bounded as follows.
\begin{align*}
    &V_{B(\theta^*), R}^{\pi} - V_{B(\theta), R}^{\pi} \\
    &\quad \leq  \sum_{a_{H-1},x_{H-1}}\eta^*_{a_{H-1},x_{H-1}} \Pc^{ \theta } \log \Eb_{o_H\sim \Tc^{ \theta }_{H-1}(\cdot|\tau_{H-1})}\left[ \exp\left(- \frac{f(x_{H}) }{ \eta^*_{a_{1:H-1}} } \right) \right] + \sum_{a_{1:H-1}}\eta_{a_{1:H-1}}^* \xi   \\
    &\quad \quad - \sum_{a_{H-1},x_{H-1}}\eta^*_{a_{H-1},x_{H-1}} \Pc^{\theta^*}(x_{H-1})\log \Eb_{o_H\sim \Tc^{\theta^*}_{H-1}(\cdot|\tau_{H-1})}\left[ \exp\left(- \frac{f(x_{H}) }{ \eta^*_{a_{1:H-1}} } \right) \right] - \sum_{a_{1:H-1}}\eta^*_{a_{1:H-1}} \xi \\
    &\quad = \sum_{a_{H-1},x_{H-1}}\eta^*_{a_{H-1},x_{H-1}} \left( \Pc^{ \theta }(x_{H-1}) - \Pc^{\theta^*}(x_{H-1}) \right) \log \Eb_{o_H\sim \Tc^{\theta}_{H-1}(\cdot|\tau_{H-1})}\left[ \exp\left(- \frac{f(x_{H}) }{ \eta^*_{a_{1:H-1}} } \right) \right] \\
    &\quad \quad + \sum_{a_{H-1},x_{H-1}}\eta^*_{a_{H-1},x_{H-1}} \Pc^{\theta^*}(x_{H-1}) \left( \log \Eb_{o_H\sim \Tc^{\theta}_{H-1}(\cdot|\tau_{H-1})}\left[ \exp\left(- \frac{f(x_{H}) }{ \eta^*_{a_{1:H-1}} } \right) \right] \right. \\
    &\quad\quad \left.- \log \Eb_{o_H\sim \Tc^{\theta^*}_{H-1}(\cdot|\tau_{H-1})}\left[ \exp\left(- \frac{f(x_{H}) }{ \eta^*_{a_{1:H-1}} } \right) \right] \right) \\
    &\quad \leq \sum_{a_{H-1},x_{H-1}} -\eta^*_{a_{H-1},x_{H-1}} \left| \Pc^{\theta}(x_{H-1}) - \Pc^{\theta^*}(x_{H-1}) \right| \log \Eb_{o_H\sim \Tc^{\theta}_{H-1}(\cdot|\tau_{H-1})} \left[ \exp\left(- \frac{f(x_{H}) }{ \eta^*_{a_{1:H-1}} } \right) \right] \\
    &\quad \quad + \sum_{a_{H-1},x_{H-1}}\eta^*_{a_{H-1},x_{H-1}} \Pc^{\theta^*}(x_{H-1}) \log \frac{ \Eb_{o_H\sim \Tc^{\theta}_{H-1}(\cdot|\tau_{H-1})}\left[ \exp\left(- \frac{f(x_{H}) }{ \eta^*_{a_{1:H-1}} } \right) \right] }{  \Eb_{o_H \sim \Tc^{\theta^*}_{H-1}(\cdot|\tau_{H-1})}\left[ \exp\left(- \frac{f(x_{H}) }{ \eta^*_{a_{1:H-1}} } \right) \right] } \\
    &\quad \leq \sum_{a_{H-1},x_{H-1}}  \left| \Pc^{\theta}(x_{H-1}) - \Pc^{\theta^*}(x_{H-1}) \right|  \max_{o_H} f(x_{H})  \\
    &\quad \quad + \sum_{a_{H-1},x_{H-1}}\eta^*_{a_{H-1},x_{H-1}} \Pc^{\theta^*}(x_{H-1})  \underbrace{\log \frac{ \Eb_{o_H\sim \Tc^{\theta}_{H-1}(\cdot|\tau_{H-1})}\left[ \exp\left(- \frac{f(x_{H}) }{ \eta^*_{a_{1:H-1}} } \right) \right] }{  \Eb_{o_H \sim \Tc^{\theta^*}_{H-1}(\cdot|\tau_{H-1})}\left[ \exp\left(- \frac{f(x_{H}) }{ \eta^*_{a_{1:H-1}} } \right) \right] } }_{I_0}
\end{align*}

For term $I_0$, we fix $\tau_{H-1}$ and write $f(x_H)$ as $\pi(\tau_{H-1})R(o_H)$ for convenience.

Since $\tau_{H-1}$ is fixed and  we aim to upper bound the term 
\[I_0 = \log \frac{ \Eb_{o_H\sim \Tc^{\hat{\theta}}_{H-1}(\cdot|\tau_{H-1})}\left[ \exp\left(- \frac{ \pi(\tau_{H-1}) R(o_H) }{ \eta^*_{a_{1:H-1}} } \right) \right] }{  \Eb_{o_H \sim \Tc^{\theta^*}_{H-1}(\cdot|\tau_{H-1})}\left[ \exp\left(- \frac{ \pi(\tau_{H-1}) R(o_H) }{ \eta^*_{a_{1:H-1}} } \right) \right] }, \]

$\pi(\tau_{H-1})/\eta^*_{a_{1:H-1}} \in [0, 1/\eta^*]$ can be treated as a variable. We define a function $g(t)$ for $1/\eta^*\geq t\geq 0$ as follows

\[ g(t) = \log \frac{ \Eb_{o\sim P }\left[ \exp\left(-  t R(o) \right) \right] }{  \Eb_{o \sim Q }\left[ \exp\left(-  t R(o) \right) \right] },  \]

where $P, Q \in \Delta(\mathcal{O})$ are two distributions over the observation space.

Taking the derivative with respect to $t$, we have
\begin{align*}
    \frac{d g}{ d t} & = \frac{ \Eb_{o\sim P }\left[ - R(o) \exp\left(-  t R(o) \right) \right] }{ \Eb_{o\sim P }\left[ \exp\left(-  t R(o) \right) \right] }  - \frac{  \Eb_{o \sim Q }\left[ -R(o) \exp\left(-  t R(o) \right) \right] }{  \Eb_{o \sim Q }\left[ \exp\left(-  t R(o) \right) \right] } \\
    & = \Eb_{o \sim Q }\left[ R(o) \frac{ \exp\left(-  t R(o)  \right) }{ \Eb_{ o \sim Q } \left[ \exp\left(-  t R(o)  \right) \right]  }\right] - \Eb_{o \sim P }\left[ R(o) \frac{ \exp\left(-  t R(o)  \right) }{ \Eb_{ o \sim P } \left[ \exp\left(-  t R(o)  \right) \right]  }\right].
\end{align*}

Due to $R(o)\in[0,1]$ and $1/\eta^* \geq t\geq 0$, we have 
\[ \left| \frac{dg}{dt} \right| \leq  \|P - Q\|_1 \frac{1 }{ \exp(- 1/\eta^*) }.\]

Thus, for $t\in[0,1/\eta^*]$, we have
\[g(t) = \int_0^t g'(t) dt \leq t \|P - Q\|_1 \exp(1/\eta^*).  \]

Plugging the upper bound of $g(t)$ back to the robust value difference, we have
\begin{align*}
    &V_{B(\theta^*), R}^{\pi} - V_{B(\theta), R}^{\pi} \\
    &\quad \leq \sum_{a_{H-1},x_{H-1}}  \left| \Pc^{\theta}(x_{H-1}) - \Pc^{\theta^*}(x_{H-1}) \right|  \max_{o_H} f(x_{H})  \\
    &\quad \quad + \sum_{a_{H-1},x_{H-1}}\eta^*_{a_{H-1},x_{H-1}} \Pc^{\theta^*}(x_{H-1})  \left\| \Tc^{\theta}_{H-1}(\cdot|\tau_{H-1}) - \Tc^{\theta^*}_{H-1}(\cdot|\tau_{H-1}) \right\|_1 \frac{ \pi(\tau_{H-1}) }{ \eta_{a_{1:H-1} } } \exp(1/\eta^*)  \\
    &\leq \sum_{\tau_{H-1}} \left| \Db_{\theta}^{\pi}(\tau_{H-1}) - \Db_{\theta^*}^{\pi}(\tau_{H-1}) \right| + \exp(1/\eta^*) \Eb_{\theta^*}^{\pi}\left[  \left\| \Tc^{\theta}_{H-1}(\cdot|\tau_{H-1}) - \Tc^{\theta^*}_{H-1}(\cdot|\tau_{H-1}) \right\|_1  \right] \\
    &\overset{(a)}\leq 3\exp(1/\eta^*) \left\|  \Db_{\theta}^{\pi} - \Db_{\theta^*}^{\pi}  \right\|_1,
\end{align*}
where $(a)$ is due to \Cref{lemma: E TV < TV}.

For the second inequality, we follow the same argument and the proof is complete.

\end{proof}

\begin{lemma}[Simulation lemma for robust value under $B_{\Tc}^i$]\label{lemma:Sim T}
    
\begin{align*}
    &\left|V_{B_{\Tc}^1(\theta^*), R}^{\pi} - V_{B_{\Tc}^1(\theta), R}^{\pi} \right| \leq 2 C_B \| 
    \Db_{\theta^*}^{\pi} - \Db_{\theta}^{\pi} \|_1,\\
    &\left|V_{B_{\Tc}^2(\theta^*), R}^{\pi} - V_{B_{\Tc}^2(\theta), R}^{\pi} \right|  \leq 2 C_B C_{\Tc}^{\KL} \| 
    \Db_{\theta^*}^{\pi} - \Db_{\theta}^{\pi} \|_1\\
\end{align*}
where 
\( C_{\Tc}^{\KL} = \max\left\{\exp(\xi)/\xi, \lambda^*\exp(1/\lambda^*) \right\} 
\), and $\lambda^*$ is a lower bound for the optimal solutions of the following problems

\[\lambda^*= \min_{h,\tau_h}
\left\{
\begin{aligned}
&\arg \sup_{\lambda\geq 0} \left\{ -\lambda \log \Eb_{\Tc_h^{\theta^*}}[\exp(-V_{B_{\Tc}^2(\theta), R, h+1}^{\pi}(\tau_{h},o)/\lambda ) ] - \lambda \xi \right\}, \\
&\arg \sup_{\lambda\geq 0} \left\{ -\lambda \log \Eb_{\Tc_h^{\theta}}[\exp(-V_{B_{\Tc}^2(\theta^*), R, h+1}^{\pi}(\tau_{h},o)/\lambda ) ] - \lambda \xi \right\}.
\end{aligned}
\right.
\]

\end{lemma}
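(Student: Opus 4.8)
The plan is to unfold both $\Tc$-type robust values through the Bellman-type recursion of \Cref{sec:bellman type} and compare them level by level. Write $\mathcal U^i_\xi(P_o)=\{P:\mathtt D_{f_i}(P\|P_o)\le\xi\}$, abbreviate $V^{\theta^*}_h:=V^\pi_{B_{\Tc}^i(\theta^*),R,h}$, $V^{\theta}_h:=V^\pi_{B_{\Tc}^i(\theta),R,h}$ (and analogously for the $Q$-functions), and set $\delta_h(x_h)=V^{\theta^*}_h(x_h)-V^{\theta}_h(x_h)$, $\delta^Q_h(\tau_h)=Q^{\theta^*}_h(\tau_h)-Q^{\theta}_h(\tau_h)$. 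By the Bellman-type equations,
\[
\delta^Q_h(\tau_h)=\inf_{P\in\mathcal U^i_\xi(\Tc_h^{\theta^*}(\cdot|\tau_h))}\Eb_{o\sim P}\!\big[V^{\theta^*}_{h+1}(\tau_h,o)\big]-\inf_{P\in\mathcal U^i_\xi(\Tc_h^{\theta}(\cdot|\tau_h))}\Eb_{o\sim P}\!\big[V^{\theta}_{h+1}(\tau_h,o)\big].
\]
I would split this into a \emph{center-shift} term — move the reference distribution inside the inf from $\Tc_h^{\theta^*}(\cdot|\tau_h)$ to $\Tc_h^{\theta}(\cdot|\tau_h)$ while keeping the $\theta^*$-continuation — and a \emph{continuation-shift} term — keep the ball $\mathcal U^i_\xi(\Tc_h^{\theta}(\cdot|\tau_h))$ but replace $V^{\theta^*}_{h+1}$ by $V^{\theta}_{h+1}$. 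Since $|\inf_{P\in\mathcal U}\Eb_P[g_1]-\inf_{P\in\mathcal U}\Eb_P[g_2]|\le\sup_{P\in\mathcal U}\Eb_P[|g_1-g_2|]$, the continuation-shift term is at most $\sup_{P\in\mathcal U^i_\xi(\Tc_h^{\theta}(\cdot|\tau_h))}\Eb_{o\sim P}[|\delta_{h+1}(\tau_h,o)|]$, and the attaining $P$ is the $h$-th conditional of some model $\theta^\dagger\in B_{\Tc}^i(\theta)$; unrolling the recursion, $|\delta_1(x_1)|$ is bounded by $\sum_h\Eb^\pi_{\theta^\dagger}\big[\,\big|\text{(center-shift at }h)\big|\,\big]$ for a worst-case model $\theta^\dagger\in B_{\Tc}^i(\theta)$.

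For the center-shift term I would invoke the dual representations \Cref{prop: TV} (for $i=1$) and \Cref{prop: KL} (for $i=2$), in which the reference distribution enters only through $\Eb_{P_o}[(\lambda-\ell)_+]$, respectively $\log\Eb_{P_o}[e^{-\ell/\lambda}]$; hence the center-shift term is bounded by the change of that single quantity, maximized over the optimal dual variable $\lambda$. For TV, $0\le(\lambda-\ell)_+\le1$ on the relevant range $\lambda\in[\inf\ell,\sup\ell]\subseteq[0,1]$, so the center-shift is at most $\|\Tc_h^{\theta^*}(\cdot|\tau_h)-\Tc_h^{\theta}(\cdot|\tau_h)\|_1$. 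For KL I would reuse the estimate from the proof of \Cref{lemma:Sim P KL}: $\log\frac{\Eb_{\Tc_h^{\theta}(\cdot|\tau_h)}[e^{-\ell/\lambda}]}{\Eb_{\Tc_h^{\theta^*}(\cdot|\tau_h)}[e^{-\ell/\lambda}]}\le\|\Tc_h^{\theta^*}(\cdot|\tau_h)-\Tc_h^{\theta}(\cdot|\tau_h)\|_1\,e^{1/\lambda}$ (numerator difference $\le\|\cdot\|_1$, denominator $\ge e^{-1/\lambda}$ as $\ell\in[0,1]$); combining this with the prefactor $\lambda$ and the two-sided control $\lambda^*\le\lambda\le1/\xi$ on the optimal dual variable — the lower bound is \Cref{assm:dual T lambda}, the upper bound because the dual objective is $\le\Eb[\ell]-\lambda\xi\le1-\lambda\xi$ while its value as $\lambda\to0^+$ is $\inf\ell\ge0$ — and noting that $\lambda\mapsto\lambda e^{1/\lambda}$ attains its maximum on $[\lambda^*,1/\xi]$ at an endpoint, with value $\max\{\lambda^*e^{1/\lambda^*},e^{\xi}/\xi\}=C_{\Tc}^{\KL}$, the center-shift is at most $C_{\Tc}^{\KL}\|\Tc_h^{\theta^*}(\cdot|\tau_h)-\Tc_h^{\theta}(\cdot|\tau_h)\|_1$.

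Putting these together gives $|\delta_1(x_1)|\le c_i\sum_h\Eb^\pi_{\theta^\dagger}\big[\|\Tc_h^{\theta^*}(\cdot|\tau_h)-\Tc_h^{\theta}(\cdot|\tau_h)\|_1\big]$ with $c_1=1$, $c_2=C_{\Tc}^{\KL}$ and $\theta^\dagger\in B_{\Tc}^i(\theta)$. The remaining task is to (a) replace the adversarial visitation by the nominal one using the wellness condition number ($\Db^\pi_{\theta^\dagger}(\tau_h)\le C_B\,\Db^\pi_{\theta^*}(\tau_h)$), and (b) collapse the resulting per-step, $\theta^*$-weighted conditional distances into the trajectory-level distance $\|\Db_{\theta^*}^\pi-\Db_{\theta}^\pi\|_1$ — the latter being exactly the ``expected conditional TV versus trajectory TV'' inequality used at the end of the proof of \Cref{lemma:Sim P KL} ($\Eb_{\theta^*}^\pi[\|\Tc_h^{\theta^*}(\cdot|\tau_h)-\Tc_h^{\theta}(\cdot|\tau_h)\|_1]\lesssim\|\Db_{\theta^*}^\pi-\Db_{\theta}^\pi\|_1$). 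Running the symmetric argument for $-\delta_1(x_1)$ supplies the extra factor $2$, yielding the claimed $2C_Bc_i\|\Db_{\theta^*}^\pi-\Db_{\theta}^\pi\|_1$. I expect the main obstacle to be precisely this final packaging step: converting the sum over $h$ of adversarially-weighted conditional distances into the trajectory-level $\ell_1$ distance while keeping only a single factor of $C_B$ (a crude step-by-step bound would lose a factor of $H$), which is why the wellness number is stated relative to $\theta^*$ and enters multiplicatively only once; the KL dual-variable control is a secondary technicality, already essentially contained in the proof of \Cref{lemma:Sim P KL}.
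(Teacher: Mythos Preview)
Your approach mirrors the paper's almost exactly: unfold via the Bellman recursion of \Cref{sec:bellman type}, split each step into a center-shift and a continuation-shift, bound the center-shift via the duals in \Cref{prop: TV}/\Cref{prop: KL} (with the same $\lambda e^{1/\lambda}$ endpoint analysis for KL), then use $C_B$ and \Cref{lemma: E TV < TV} to pass to the trajectory-level $\ell_1$ distance. There is, however, one concrete mismatch in the order of your splitting.

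You do the center-shift first (move the reference from $\Tc_h^{\theta^*}$ to $\Tc_h^{\theta}$ while keeping the $\theta^*$-continuation) and then the continuation-shift over the ball $\mathcal U^i_\xi(\Tc_h^{\theta}(\cdot|\tau_h))$. This places the adversarial visitation model $\theta^\dagger$ in $B_{\Tc}^i(\theta)$, not $B_{\Tc}^i(\theta^*)$. But the wellness number is defined as $C_B=\max_{\theta'\in B(\theta^*)}\max_h\Db^\pi_{\theta'}(\tau_h)/\Db^\pi_{\theta^*}(\tau_h)$, i.e.\ it only controls likelihood ratios for models in the ball around $\theta^*$; it says nothing about models in $B_{\Tc}^i(\theta)$ when $\theta$ is arbitrary (e.g.\ $\theta=\hat\theta$). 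Your step ``$\Db^\pi_{\theta^\dagger}(\tau_h)\le C_B\,\Db^\pi_{\theta^*}(\tau_h)$'' is therefore not justified as written. The paper avoids this by reversing the order: it first performs the continuation-shift over $\mathcal U^i_\xi(\Tc_h^{\theta^*}(\cdot|\tau_h))$ (so the recursion propagates under some $\theta'\in B_{\Tc}^i(\theta^*)$, which $C_B$ does control), and only then does the center-shift with the $\theta$-continuation $V^{\theta}_{h+1}$ held fixed. With that swap your argument goes through unchanged, and this is precisely why---as you yourself note---``the wellness number is stated relative to $\theta^*$.''

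A minor point: the factor $2$ in the statement does not come from the symmetric argument (that only supplies the other direction of the absolute value). In the paper it arises from the final packaging step $\sum_h\Eb^\pi_{\theta^*}\big[\|\Tc_h^{\theta^*}(\cdot|\tau_h)-\Tc_h^{\theta}(\cdot|\tau_h)\|_1\big]\le 2\|\Db^\pi_{\theta^*}-\Db^\pi_{\theta}\|_1$ via \Cref{lemma: E TV < TV}.
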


\begin{proof}
First, we have
    \begin{align*}
    V_{B_{\Tc}^i(\theta^*), R}^{\pi} &- V_{B_{\Tc}^i(\theta), R}^{\pi} \\
    & = \Eb^{\pi} \left[ Q_{B_{\Tc}^1(\theta^*), R, 1}^{\pi}(\tau_1) - Q_{B_{\Tc}^i(\theta) , R, 1}^{\pi}(\tau_1) \bigg|o_1 \right] \\
    & = \Eb^{\pi} \left[ \inf_{\theta'\in B_{\Tc}^i(\theta^*)} \sum_{o_{2}} \Tc_1^{\theta'}(o_2|\tau_1) V_{B_{\Tc}^i(\theta^*), R, 2}^{\pi} (\tau_1, o_2) - \inf_{\theta'\in B_{\Tc}^i(\theta)} \sum_{o_{2}} \Tc_1^{\theta'} (o_2|\tau_1) V_{B_{\Tc}^i(\theta), R, 2}^{\pi} (\tau_1, o_2)   \right] \\
    & = \Eb^{\pi} \left[ \inf_{\theta'\in B_{\Tc}^i(\theta^*)} \sum_{o_{2}}\Tc_1^{\theta} (o_2|\tau_1) V_{B_{\Tc}^i(\theta^*), R, 2}^{\pi} (\tau_1, o_2) - \inf_{\theta'\in B_{\Tc}^i(\theta^*)} \sum_{o_{2}} \Tc_1^{\theta}(o_2|\tau_1) V_{B_{\Tc}^i(\theta), R, 2}^{\pi} (\tau_1, o_2)  \right.\\
    &\quad\quad\quad \left. + \inf_{\theta'\in B_{\Tc}^i(\theta^*)} \sum_{o_{2}} \Tc_1^{\theta'}(o_2|\tau_1) V_{B_{\Tc}^i(\theta), R, 2}^{\pi} (\tau_1, o_2)  - \inf_{\theta'\in B_{\Tc}^i(\theta)} \sum_{o_{2}} \Tc_1^{\theta'}(o_2|\tau_1)V_{B_{\Tc}^i(\theta), R, 2}^{\pi} (\tau_1, o_2)   \right] \\
    &\leq  \max_{\theta'\in B_{\Tc}^1(\theta^*)}\Eb_{\theta'}^{\pi}\left[ V_{B_{\Tc}^i(\theta^*), R, 2}^{\pi} (\tau_1, o_2) - V_{B_{\Tc}^i(\theta), R, 2}^{\pi} (\tau_1, o_2) \right] + \Eb^{\pi}\left[ \mathcal{E}_{B_{\Tc}^i}(\theta, \tau_1) \right] \\
    &\leq \ldots\\
    &\leq \max_{\theta'\in B_{\Tc}^i(\theta^*)}\sum_{h=1}^H \Eb_{\theta'}^{\pi}\left[ \mathcal{E}_{B_{\Tc}^i}(\theta, \tau_h) \right] \leq C_B \sum_{h=1}^H \Eb_{\theta^*}^{\pi}\left[ \mathcal{E}_{B_{\Tc}^i}(\theta, \tau_h) \right]
\end{align*}

When $i = 1$, we have
\begin{align*}
    \mathcal{E}_{B_{\Tc}^1}(\theta, \tau_h) & =     \min_{\theta'\in B_{\Tc}^1(\theta^*)} \mathop{\Eb}_{o\sim\Tc_h^{\theta'}(\cdot|\tau_h)}\left[ V_{B_{\Tc}^1(\theta), R, h+1}^{\pi}(\tau_{h},o) \right]  - \min_{\theta'\in B_{\Tc}^1(\theta)} \mathop{\Eb}_{o\sim\Tc_h^{\theta'}(\cdot|\tau_h)} \left[V_{B_{\Tc}^1(\theta), R, h+1}^{\pi}(\tau_{h}, o) \right]\\
    & \overset{(a)} =    \sup_{\lambda\in[0,1]} \left\{ -\Eb_{ \Tc_h^{ \theta^* } }[ (\lambda - V_{B_{\Tc}^1(\theta), R, h+1}^{\pi}(\tau_{h},o) )_+ ] - \xi \max_{o}(\lambda - V_{B_{\Tc}^1(\theta), R, h+1}^{\pi}(\tau_{h},o) )_+ + \lambda \right\}  \\
     &\qquad    - \sup_{\lambda\in[0,1]} \left\{ -\Eb_{\Tc_h^{\theta}}[ (\lambda - V_{B_{\Tc}^1(\theta), R, h+1}^{\pi}(\tau_{h},o) )_+ ] - \xi \max_{o}(\lambda -V_{B_{\Tc}^1(\theta), R, h+1}^{\pi}(\tau_{h},o) )_+ + \lambda \right\}   \\
     &\leq \sup_{\lambda\in[0,1]} \left\{ \Eb_{\Tc_h^{\theta}}[ (\lambda - V_{B_{\Tc}^1(\theta), R, h+1}^{\pi}(\tau_{h},o) )_+ ] - \Eb_{\Tc_h^{ \theta^* }}[ (\lambda - V_{B_{\Tc}^1(\theta), R, h+1}^{\pi}(\tau_{h},o) )_+ ] \right\} \\
     &\leq \left\| \Tc_h^{ \theta }(\cdot| \tau_h) - \Tc_h^{\theta^*}(\cdot|\tau_h)\right\|_1,
\end{align*}
where $(a)$ is due to \Cref{prop: TV}.

When $i = 2$, we have
\begin{align*}
    \mathcal{E}_{B_{\Tc}^2}(\theta, \tau_h) & = \min_{\theta'\in B_{\Tc}^2(\theta^*)} \mathop{\Eb}_{o\sim\Tc_h^{\theta'}(\cdot|\tau_h)}\left[ V_{B_{\Tc}^2(\theta), R, h+1}^{\pi}(\tau_{h},o) \right]  - \min_{\theta'\in B_{\Tc}^2(\theta)} \mathop{\Eb}_{o\sim\Tc_h^{\theta'}(\cdot|\tau_h)} \left[V_{B_{\Tc}^2(\theta), R, h+1}^{\pi}(\tau_{h}, o) \right] \\
    & \overset{(a)}= \sup_{\lambda\geq 0} \left\{ -\lambda \log \Eb_{\Tc_h^{\theta^*}}[\exp(-V_{B_{\Tc}^2(\theta), R, h+1}^{\pi}(\tau_{h},o)/\lambda ) ] - \lambda \xi \right\} \\
     &\qquad  - \sup_{\lambda \geq 0 } \left\{ - \lambda \log \Eb_{\Tc_h^{\theta}}[ \exp(-V_{B_{\Tc}^2(\theta), R, h+1}^{\pi}(\tau_{h},o)/\lambda ) ] -  \lambda \xi \right\},
\end{align*}
where $(a)$ is due to \Cref{prop: KL}.

Let $\lambda^*(\tau_h) = \arg \sup_{\lambda\geq 0} \left\{ -\lambda \log \Eb_{\Tc_h^{\theta^*}}[\exp(-V_{B_{\Tc}^2(\theta), R, h+1}^{\pi}(\tau_{h},o)/\lambda ) ] - \lambda \xi \right\}$.

Then, we further have
\begin{align*}
     \mathcal{E}_{B_{\Tc}^2}(\theta, \tau_h) & \leq \lambda^*(\tau_h) \log \Eb_{\Tc_h^{\theta}}[ \exp(-V_{B_{\Tc}^2(\theta), R, h+1}^{\pi}(\tau_{h},o)/\lambda^*(\tau_h) ) ]\\
     &\qquad - \lambda^*(\tau_h) \log \Eb_{\Tc_h^{\theta^*}}[ \exp(-V_{B_{\Tc}^2(\theta), R, h+1}^{\pi}(\tau_{h},o)/\lambda^*(\tau_h) ) ] \\
    & = \lambda^*(\tau_h) \log \frac{ \Eb_{\Tc_h^{\theta}}[\exp(-V_{B_{\Tc}^2(\theta), R, h+1}^{\pi}(\tau_{h},o)/\lambda^*(\tau_h) ) ] }{ \Eb_{\Tc_h^{\theta^*}}[\exp(-V_{B_{\Tc}^2(\theta), R, h+1}^{\pi}(\tau_{h},o)/\lambda^*(\tau_h) ) ] } \\
    &\overset{(a)}\leq \lambda^*(\tau_h) \frac{ \Eb_{\Tc_h^{\theta}}[\exp(-V_{B_{\Tc}^2(\theta), R, h+1}^{\pi}(\tau_h, o)/\lambda^*(\tau_h) ) ] - \Eb_{\Tc_h^{\theta^*}}[\exp(-V_{B_{\Tc}^2(\theta), R, h+1}^{\pi}(\tau_h, o)/\lambda^*(\tau_h) ) ] }{ \Eb_{\Tc_h^{\theta^*}}[\exp(-V_{B_{\Tc}^2(\theta), R, h+1}^{\pi}(\tau_h, o)/\lambda^*(\tau_h) ) ] } \\
    &\leq \frac{  \lambda^*(\tau_h) }{ \exp(-1/\lambda^*(\tau_h) )  }  \left\| \Tc_h^{\theta^*}(\cdot|\tau_h) -  \Tc_h^{\theta}(\cdot|\tau_h) \right\|_1 \\
    &\leq \max\left\{ \frac{\exp(\xi)}{\xi},  \lambda^*\exp(1/\lambda^*) \right\} \left\| \Tc_h^{\theta^*}(\cdot|\tau_h) -  \Tc_h^{\theta}(\cdot|\tau_h) \right\|_1,
\end{align*}
where $(a)$ follows from $\log(x)\leq x - 1$ for $x>0$, and $0< \lambda^* \leq \min_{h,\tau_h}\lambda^*(\tau_h)$.

In summary, the difference between robust values can be upper bounded by
\begin{align*}
    V_{B_{\Tc}^i(\theta^*), R}^{\pi} - V_{B_{\Tc}^i(\theta), R}^{\pi}  \leq C_B C_i \sum_{h=1}^H \Eb_{\theta^*}^{\pi}\left[ \left\| \Tc_h^{\theta^*}(\cdot|\tau_h) -  \Tc_h^{\theta}(\cdot|\tau_h) \right\|_1\right]\leq 2C_BC_i\| 
    \Db_{\theta^*}^{\pi} - \Db_{\theta}^{\pi} \|_1,
\end{align*}
where 
\[
C_i = \left\{
\begin{aligned}
    & 1,\quad i = 1, \\
    &\max\left\{ \frac{\exp(\xi)}{\xi},  \lambda^*\exp(1/\lambda^*) \right\},\quad i = 2.
\end{aligned}
\right.
\]

The opposite side of the inequalities of the lemma follows the same argument.

\if{0}
If there is an $f: \mathcal{O}^{h+1}\times\mathcal{A}^h \rightarrow [0,1]$ such that 
\[ 0 = \arg\sup_{\lambda\geq 0} \left\{ -\lambda \log \Eb_{\Tc_h^{\theta}}[\exp(-f(\tau_h, o)/\lambda ) ] - \lambda \xi \right\},\]

then we have $\sup_{\lambda\geq 0} \left\{ -\lambda \log \Eb_{\Tc_h^{\theta}}[\exp(-f(\tau_h, o)/\lambda ) ] - \lambda \xi \right\} = \min_o f(\tau_h,o)$, which implies 
\[ \left| \sup_{\lambda\geq 0} \left\{ -\lambda \log \Eb_{\Tc_h^{\theta}}[\exp(-f(\tau_h, o)/\lambda ) ] - \lambda \xi \right\}   - \sup_{\lambda \geq 0 } \left\{ - \lambda \log \Eb_{\Tc_h^{\theta^*}}[ \exp(-f(\tau_h, o)/\lambda ) ] -  \lambda \xi \right\} \right|  = 0. \]
\fi

\end{proof}

\section{MLE Analysis}\label{sec:general MLE analysis}

In this section, we provide the estimation guarantee for MLE oracle in \Cref{lemma:offline robust MLE guarantee}. 

We first present a sequence of supporting lemmas in the following. Notably, the first lemma states that after dataset distillation, we still have sufficient offline samples.

\begin{lemma}[Sufficient good samples]\label{lemma:good samples enough}
    Let $p_{\min}<  \frac{\delta}{|\mathcal{O}|^{2H}|\mathcal{A}|^{2H}}$, then, with probability at least $1-\delta$, we have  $|\Dc^{\mathrm{g}}|\geq N/2$.
\end{lemma}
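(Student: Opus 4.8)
The plan is to upper bound the number $m$ of \emph{discarded} samples, $m := |\{n\in[N] : \Db_{\hat{\theta}}^{\rho}(\tau_H^n) < p_{\min}\}|$, and show $m \le N/2$ with probability at least $1-\delta$, which immediately gives $|\Dc^{\mathrm{g}}| = N - m \ge N/2$. The argument rests on two ingredients: the optimality of the (unconstrained) MLE $\hat{\theta}$ on the offline data, and a Chernoff bound on the empirical cross-entropy of the nominal model, tuned to the specific threshold $p_{\min}$.

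First I would turn MLE optimality into a deterministic bound on $m$. Since $\theta^*\in\Theta$ and $\hat{\theta}$ minimizes the negative log-likelihood $\mathcal{L}(\cdot|\Dc)$ of \Cref{eqn: loss}, we have $\sum_{n=1}^N \log(1/\Db_{\hat{\theta}}^{\rho}(\tau_H^n)) \le \sum_{n=1}^N \log(1/\Db_{\theta^*}^{\rho}(\tau_H^n))$; in particular $\Db_{\hat{\theta}}^{\rho}(\tau_H^n)>0$ for every observed $n$, since otherwise the left-hand side is $+\infty$ while the right-hand side is a.s.\ finite. Because $\Db_{\theta}^{\rho}$ is a probability distribution over trajectories, every summand on the left is nonnegative, and each of the $m$ discarded indices contributes at least $\log(1/p_{\min})>0$; hence
\[
m\log\frac{1}{p_{\min}} \;\le\; \sum_{n=1}^N \log\frac{1}{\Db_{\hat{\theta}}^{\rho}(\tau_H^n)} \;\le\; \sum_{n=1}^N \log\frac{1}{\Db_{\theta^*}^{\rho}(\tau_H^n)}.
\]
So it is enough to show that $\sum_{n=1}^N \log(1/\Db_{\theta^*}^{\rho}(\tau_H^n)) \le \tfrac{N}{2}\log(1/p_{\min})$ with probability at least $1-\delta$.

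The last step I would handle with a Chernoff bound at the carefully chosen exponent $\lambda=1$. Writing $X_n := \log(1/\Db_{\theta^*}^{\rho}(\tau_H^n))$, the $X_n$ are i.i.d., nonnegative, and satisfy the key identity $\Eb[e^{X_n}] = \Eb_{\tau_H\sim\Db_{\theta^*}^{\rho}}[1/\Db_{\theta^*}^{\rho}(\tau_H)] = |\{\tau_H : \Db_{\theta^*}^{\rho}(\tau_H)>0\}| \le (|\mathcal{O}||\mathcal{A}|)^H$, i.e.\ the number of distinct trajectory values. The Chernoff bound then gives
\[
\Pb\left[\sum_{n=1}^N X_n \ge \tfrac{N}{2}\log\tfrac{1}{p_{\min}}\right] \le e^{-\frac{N}{2}\log(1/p_{\min})}\,(\Eb[e^{X_1}])^N \le \big(p_{\min}\,(|\mathcal{O}||\mathcal{A}|)^{2H}\big)^{N/2} < \delta^{N/2} \le \delta,
\]
where the last two steps use the hypothesis $p_{\min} < \delta/(|\mathcal{O}||\mathcal{A}|)^{2H}$, $\delta\le 1$, and $N\ge 2$. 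Combining with the deterministic bound above, $m \le N/2$ on this event, and the lemma follows.

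The step I expect to be delicate is this Chernoff bound: both the choice $\lambda=1$ and the identity $\Eb[e^{X_n}] = |\mathrm{supp}(\Db_{\theta^*}^{\rho})|$ are exactly what make the threshold $p_{\min} = \Theta(\delta/(|\mathcal{O}||\mathcal{A}|)^{2H})$ sufficient; a blunter tail bound — Markov applied to $\sum_n X_n$, or a path-by-path union bound ruling out low-probability trajectories in $\Dc$ — would force a $p_{\min}$ smaller by polynomial-in-$N$ (or worse) factors. The one loose end, $N=1$ (where $\delta^{N/2}\le\delta$ fails), is cosmetic: one assumes $N\ge 2$ or shrinks the threshold defining $\Dc^{\mathrm{g}}$ by a constant, neither of which affects anything downstream.
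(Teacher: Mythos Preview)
Your proposal is correct and follows essentially the same argument as the paper: both combine MLE optimality (to pass from $\hat\theta$ to $\theta^*$), the observation that each discarded sample contributes at least $\log(1/p_{\min})$ to the negative log-likelihood, and the Chernoff bound at exponent $\lambda=1$ exploiting $\Eb[e^{X_n}]=\Eb[1/\Db_{\theta^*}^\rho(\tau_H)]\le (|\mathcal{O}||\mathcal{A}|)^H$. The only cosmetic difference is that the paper sets the Chernoff threshold at $NL^*+\log(1/\delta)$ with $L^*=H\log(|\mathcal{O}||\mathcal{A}|)$ and then checks $|\Dc^{\mathrm{b}}|\le (NL^*+\log(1/\delta))/\log(1/p_{\min})<N/2$, whereas you set the threshold directly at $\tfrac{N}{2}\log(1/p_{\min})$; both routes need $N\ge 2$, as you note.
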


\begin{proof}

Let $L^* = H\log(|\mathcal{O}||\mathcal{A}|)$. Then, we have
\begin{align*}
    \Eb&\left[\exp\left(\sum_{n=1}^N(-\log\Db_{\theta^*}^{\rho}(\tau_H^n) - L^*)\right)\right] \\
    & = \Eb\left[\prod_{n=1}^N \frac{ 1 }{ \Db_{\theta^*}^{\rho}(\tau_H) e^{L^*}} \right] \\
    & = \prod_{n=1}^N \frac{|\mathcal{O}|^H|\mathcal{A}|^H}{e^{L^*}} \\
    & = 1.
\end{align*}

Thus, by Chernoff bound, we have that with probability $1-\delta$,
\[ \min_{\theta}\mathcal{L}(\theta|\Dc) \leq \frac{1}{N} \sum_{n=1}^N-\log\Db_{\theta^*}^{\rho}(\tau_H^n)\leq   L^* + \frac{1}{N}\log\frac{1}{\delta}. \]
    
Recall that $\hat{\theta} = \min_{\theta}\mathcal{L}(\theta|\Dc).$ Let $\mathcal{D}^{\mathrm{b}} = \Dc/\Dc^{\mathrm{g}} = \{\tau_H\in\mathcal{D}, \Db_{\hat{\theta}}^{\rho}(\tau_H) < p_{\min} \}$.  Thus, we further have
\begin{align*}
    N L^*  + \log\frac{1}{\delta} & \geq \sum_{n=1}^N - \log\Db_{\hat{\theta}}^{\rho}(\tau_H^n) \\
    &\geq \sum_{\tau_h \in \Dc^{\mathrm{b}}} -\log\Db^{\rho}_{\hat{\theta}}(\tau_H) \\
    &\geq -|\Dc^{\mathrm{b}}|\log p_{\min},
\end{align*}
which implies
\[ |\Dc^{\mathrm{b}}| \leq \frac{ NL^* + \log(1/\delta) }{\log\frac{1}{p_{\min}}} < \frac{N}{2}. \]

\end{proof}

\begin{lemma}\label{proposition: log likelihood of true model is large}
    Fix $\varepsilon<\frac{1}{N}$. Let $\bar{\Theta}_{\varepsilon}$ with size $\mathcal{N}_{\varepsilon}(\Theta)$ consist of all $\varepsilon$-brackets who can cover $\Theta$. With probability at least $1-\delta$, for any $[\underline{\theta}, \bar{\theta}]\in\bar{\Theta}_{\varepsilon}$, the following two inequalities hold:
    \[\Ec_{\log} = \left\{
    \begin{aligned}
        &\forall \bar{\theta}\in\bar{\Theta}_{\varepsilon},~~  \sum_h\sum_{ \tau_h \in\Dc_h^{\mathrm{g}}}\log \Db_{\bar{\theta}}^{\rho}(\tau_h)  - 3\log\frac{\mathcal{N}_{\varepsilon}(\Theta)}{\delta}\leq \sum_{h}\sum_{ \tau_h \in\Dc_h^{\mathrm{g}}} \log \Db_{\theta^*}^{\pi}(\tau_h), \\
        &\forall \bar{\theta}\in\bar{\Theta}_{\varepsilon},~~ \sum_{ \tau_H \in\Dc^{\mathrm{g}}}\log \Db_{\bar{\theta}}^{\rho}(\tau_H)  - 3\log\frac{\mathcal{N}_{\varepsilon}(\Theta)}{\delta}\leq \sum_{ \tau_H \in\Dc^{\mathrm{g}} }\log \Db_{\theta^*}^{\rho}(\tau_H).   
    \end{aligned}
    \right.
    \]
\end{lemma}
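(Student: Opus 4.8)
The plan is to establish both inequalities via a standard MGF/Chernoff argument applied to a martingale-difference-type sum, followed by a union bound over the $\varepsilon$-bracketing net $\bar\Theta_\varepsilon$. The key point is to compare the empirical log-likelihood ratio between the upper bracket function $\bar\theta$ and the true model $\theta^*$, and to control the fact that we only sum over the distilled subset $\Dc^{\mathrm g}$ (and its partition $\Dc_h^{\mathrm g}$) rather than the whole dataset $\Dc$. Since $\bar\theta$ is the upper endpoint of a bracket, $\Db_{\bar\theta}^\rho$ is a (sub-)density that dominates the density of any $\theta$ in the bracket, but is itself not necessarily normalized; this is exactly why an $\varepsilon$-bracket with $\varepsilon < 1/N$ is needed, so that $\sum \Db_{\bar\theta}^\rho \le 1 + \varepsilon \le 1 + 1/N$.

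First I would fix a single bracket $[\underline\theta,\bar\theta]\in\bar\Theta_\varepsilon$ and consider the random variable $X_n = \tfrac12\log\bigl(\Db_{\bar\theta}^\rho(\tau_H^n)/\Db_{\theta^*}^\rho(\tau_H^n)\bigr)$ for a trajectory $\tau_H^n\sim\Db_{\theta^*}^\rho$. Using the elementary inequality $\Eb[e^{X_n}] \le \int \sqrt{\Db_{\bar\theta}^\rho \Db_{\theta^*}^\rho}\,d\mu \le \tfrac12(\|\Db_{\bar\theta}^\rho\|_1 + 1)$ — or more directly $\Eb[\exp(X_n)] = \Eb_{\theta^*}[\sqrt{\Db_{\bar\theta}^\rho/\Db_{\theta^*}^\rho}] \le \sqrt{\|\Db_{\bar\theta}^\rho\|_1}\le\sqrt{1+\varepsilon}$ by Cauchy–Schwarz — together with independence across $n$, I get $\Eb[\exp(\sum_{n=1}^N X_n)] \le (1+\varepsilon)^{N/2} \le e^{N\varepsilon/2} \le e^{1/2}$. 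By Markov's inequality, with probability at least $1-\delta/\mathcal N_\varepsilon(\Theta)$,
\[
\sum_{n=1}^N \tfrac12\log\frac{\Db_{\bar\theta}^\rho(\tau_H^n)}{\Db_{\theta^*}^\rho(\tau_H^n)} \le \tfrac12 + \log\frac{\mathcal N_\varepsilon(\Theta)}{\delta},
\]
i.e. $\sum_{n=1}^N \log\Db_{\bar\theta}^\rho(\tau_H^n) \le \sum_{n=1}^N\log\Db_{\theta^*}^\rho(\tau_H^n) + 1 + 2\log\frac{\mathcal N_\varepsilon(\Theta)}{\delta} \le \sum_{n=1}^N\log\Db_{\theta^*}^\rho(\tau_H^n) + 3\log\frac{\mathcal N_\varepsilon(\Theta)}{\delta}$, absorbing the constant $1$ and the factor using $\mathcal N_\varepsilon(\Theta)\ge 2$ and $\delta\le 1$. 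A union bound over the $\mathcal N_\varepsilon(\Theta)$ brackets makes this simultaneous.

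The remaining issue is passing from the full-dataset sum to the sum restricted to $\Dc^{\mathrm g}$ (second inequality) and to the per-$h$ distilled pieces $\Dc_h^{\mathrm g}$ (first inequality). For the second inequality, note that $\Dc^{\mathrm g}$ is a data-dependent subset; but for every trajectory $\tau_H\in\Dc$ we have $\log\Db_{\theta^*}^\rho(\tau_H) - \log\Db_{\bar\theta}^\rho(\tau_H) \ge -\log(1+\varepsilon)\ge -\varepsilon$, hmm — actually the cleanest route is to run the MGF argument directly on the restricted sum. Since $\Dc^{\mathrm g}$ is determined by $\hat\theta$, which is itself a (measurable) function of all the data, a uniform bound over $\bar\Theta_\varepsilon$ handles it: the inequality I proved holds for the full sum for every bracket simultaneously, and then for the distilled subset one writes $\sum_{\tau_H\in\Dc^{\mathrm g}}\log\Db_{\bar\theta}^\rho(\tau_H) = \sum_{n=1}^N\log\Db_{\bar\theta}^\rho(\tau_H^n) - \sum_{\tau_H\in\Dc^{\mathrm b}}\log\Db_{\bar\theta}^\rho(\tau_H)$ and uses that $\log\Db_{\bar\theta}^\rho \ge \log\Db_{\underline\theta\text{-type lower bound}}$... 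The simplest fix: since $\bar\theta$ is a fixed function and $\Dc^{\mathrm g}\subseteq\Dc$, apply the MGF bound to the sum $\sum_{\tau_H^n: \tau_H^n\in\Dc^{\mathrm g}} X_n$ — but $\Dc^{\mathrm g}$ depends on the data. The correct way is to instead note $\sum_{\tau_H\in\Dc^{\mathrm g}}\bigl(\log\Db_{\theta^*}^\rho - \log\Db_{\bar\theta}^\rho\bigr) \ge \sum_{n=1}^N\bigl(\log\Db_{\theta^*}^\rho - \log\Db_{\bar\theta}^\rho\bigr) - \sum_{\tau_H\in\Dc^{\mathrm b}}\bigl|\log\Db_{\theta^*}^\rho - \log\Db_{\bar\theta}^\rho\bigr|$ and bound the $\Dc^{\mathrm b}$ contribution — but this seems to require boundedness.

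The hard part, and the step I expect to need the most care, is precisely this restriction-to-$\Dc^{\mathrm g}$ step, because $\Dc^{\mathrm g}$ is a random set defined through $\hat\theta$. I expect the intended argument is: prove a lower-tail bound simultaneously for \emph{all} subsets of the form $\{\tau: \Db_{\bar\theta}^\rho(\tau)\ge p\}$ is too strong; rather, since each summand $\log\Db_{\theta^*}^\rho(\tau_H) - \log\Db_{\bar\theta}^\rho(\tau_H)$ appearing in the bracket inequality can be shown to be bounded below by $-\log(1+\varepsilon) \ge -\varepsilon \ge -1/N$ (as $\Db_{\theta^*}^\rho$ is a true density and $\Db_{\bar\theta}^\rho \le (1+\varepsilon)\cdot(\text{normalized})$, giving $\log\frac{\Db_{\theta^*}^\rho}{\Db_{\bar\theta}^\rho}\ge\log\frac{1}{1+\varepsilon}$ pointwise only if $\Db_{\bar\theta}^\rho$ dominates, which is not generally true) — so instead one works with the MGF of the restricted sum directly: condition on the event that $\hat\theta$ equals a particular net element, or more robustly, observe that for ANY fixed subset $S\subseteq[N]$ the MGF bound $\Eb[\exp(\sum_{n\in S}X_n)]\le e^{|S|\varepsilon/2}\le e^{1/2}$ still holds, so a union bound over the $2^N$ subsets costs only an extra $N\log 2$ term, which is dominated once we rescale — or one simply notes $\mathcal N_\varepsilon(\Theta)$ with $\varepsilon=p_{\min}/(NH)$ already makes $\log\mathcal N_\varepsilon$ large enough. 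I would present the union-over-subsets version: for each bracket and each $S\subseteq[N]$, $\Pb\bigl(\sum_{n\in S}X_n > \log\frac{\mathcal N_\varepsilon 2^N}{\delta}\bigr)\le\delta/(\mathcal N_\varepsilon 2^N)$; union bound gives the result simultaneously over all brackets and all subsets $S$, in particular $S$ corresponding to $\Dc^{\mathrm g}$ and to each $\Dc_h^{\mathrm g}$; and $N\log 2$ is absorbed into the $3\log(\mathcal N_\varepsilon(\Theta)/\delta)$ slack because $\mathcal N_\varepsilon(\Theta)$ is taken at the exponentially small scale $\varepsilon\asymp 1/(N(|\Oc||\Ac|)^{2H})$, so $\log\mathcal N_\varepsilon(\Theta)\gtrsim H^2\log(|\Oc||\Ac|) \gg N\log2$ — wait, that inequality between $H^2$ and $N$ need not hold. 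If it does not, one keeps the $N\log 2$ term explicitly; the downstream analysis only needs $\sum_h\sum_{\tau_h\in\Dc_h^{\mathrm g}}\log(\Db_{\bar\theta}^\rho/\Db_{\theta^*}^\rho)\le O(\beta)$ with $\beta = O(\log(\mathcal N_\varepsilon(\Theta)/\delta))$, and if the $N$-term matters it simply becomes part of $\beta$. I will therefore state the lemma's $3\log(\mathcal N_\varepsilon(\Theta)/\delta)$ as the clean form under the (implicit) regime where the net dominates, and otherwise carry the additive $N$ into $\beta$; the essential mechanism — per-sample Hellinger-type MGF control plus union bound — is unchanged.
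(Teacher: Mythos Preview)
Your overall mechanism (MGF/Chernoff on the log-likelihood ratio, then a union bound over the $\varepsilon$-bracketing net) is exactly what the paper uses, but you diverge from the paper's argument in two places, and in both cases the paper's route is simpler.

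First, the factor $\tfrac12$ and the Cauchy--Schwarz step are unnecessary. The paper works with the full log-ratio $\log\bigl(\Db_{\bar\theta}^\rho(\tau)/\Db_{\theta^*}^\rho(\tau)\bigr)$ (no square root) and simply observes that
\[
\Eb_{\tau\sim\Db_{\theta^*}^\rho}\!\left[\frac{\Db_{\bar\theta}^\rho(\tau)}{\Db_{\theta^*}^\rho(\tau)}\right]=\sum_{\tau}\Db_{\bar\theta}^\rho(\tau)\le 1+\varepsilon,
\]
since $\bar\theta$ is an upper bracket with $\|\Db_{\bar\theta}^\rho-\Db_{\theta}^\rho\|_1\le\varepsilon$ for some true model $\theta$. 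No Hellinger detour is needed.

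Second, on the issue you spend most of the proposal on---the data-dependence of $\Dc^{\mathrm g}$ (and its partition $\{\Dc_h^{\mathrm g}\}$)---the paper does \emph{not} take a union bound over all $2^N$ subsets. Instead it applies the tower rule directly to the restricted product: indexing the samples in $\Dc_h^{\mathrm g}$ by $t$, it peels off one sample at a time and uses $\Eb\bigl[\Db_{\bar\theta}^\rho(\tau_h^t)/\Db_{\theta^*}^\rho(\tau_h^t)\bigr]\le 1+\varepsilon$ for the inner conditional expectation, obtaining
\[
\Eb\!\left[\exp\!\left(\sum_h\sum_{\tau_h\in\Dc_h^{\mathrm g}}\log\frac{\Db_{\bar\theta}^\rho(\tau_h)}{\Db_{\theta^*}^\rho(\tau_h)}\right)\right]\le(1+\varepsilon)^{|\Dc^{\mathrm g}|}\le e,
\]
after which Chernoff and a union bound over $\bar\Theta_\varepsilon$ finish. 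Your concern that the selection of $\Dc^{\mathrm g}$ depends on $\hat\theta$ (hence on all the data) is legitimate, and the paper's tower-rule step is admittedly informal on precisely this point; but its stated argument avoids the extra $N\log 2$ you incur, and the claimed constant $3\log(\mathcal N_\varepsilon(\Theta)/\delta)$ comes straight from $(1+\varepsilon)^{|\Dc^{\mathrm g}|}\le e$ together with the net-union bound---no subset union is invoked or intended.
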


\begin{proof} We start with the first inequality. Suppose the data in $\Dc_h^{\mathrm{g}}$ is indexed by $t$. Then,
\begin{align*}
    \Eb&\left[ \exp\left(\sum_h\sum_{ \tau_h \in\Dc_h^{\mathrm{g}}} \log\frac{ \Db_{\bar{\theta}}^{\rho}( \tau_h ) }{\Db_{\theta^*}^{\rho}(\tau_h )} \right)\right] \\
    & = \Eb\left[ \prod_{t\leq |\Dc_h^{\mathrm{g}}|}\prod_h \frac{ \Db_{\bar{\theta}}^{\rho}(\tau_h^t) }{ \Db_{ \theta^*}^{\rho}(\tau_h^t) } \right] \\
    & = \Eb\left[ \prod_{t\leq |\Dc_h^{\mathrm{g}}|-1}\prod_h \frac{ \Db_{\bar{\theta}}^{\rho}(\tau_h^t) }{ \Db_{ \theta^*}^{\rho}(\tau_h^t) } \Eb\left[ \frac{ \Db_{\bar{\theta}}^{\rho}(\tau_h^{|\Dc_h^{\mathrm{g}}|}) }{ \Db_{ \theta^*}^{\rho}(\tau_h^{|\Dc_h^{\mathrm{g}}|}) } \right] \right] \\
    & = \Eb\left[ \prod_{t\leq |\Dc_h^{\mathrm{g}}|-1}\prod_h \frac{ \Db_{\bar{\theta}}^{\rho}(\tau_h^t) }{ \Db_{ \theta^*}^{\rho}(\tau_h^t) } \prod_h\sum_{\tau_h}   \Db_{\bar{\theta}}^{\rho}(\tau_h )   \right] \\
    &\overset{(a)}\leq \left(1 + \varepsilon\right)^H \Eb\left[ \prod_{t\leq |\Dc_h^{\mathrm{g}}|-1}\prod_h \frac{ \Db_{\bar{\theta}}^{\rho}(\tau_h^t) }{ \Db_{ \theta^*}^{\rho}(\tau_h^t) }   \right] \\
    &\leq \left(1+\varepsilon\right)^{|\Dc^{\mathrm{g}}|} \\
    &\overset{(b)}\leq e,
\end{align*}
where $(a)$ follows because  $\sum_{\tau_h}|\Db_{\bar{\theta}}^{\rho}(\tau_h) - \Db_{\theta}^{\rho}(\tau_h) | \leq \|\Db_{\bar{\theta}}^{\rho} - \Db_{\theta}^{\rho}\|_1 \leq \varepsilon$, and $(b)$ follows because $\varepsilon\leq \frac{1}{N}\leq \frac{1}{|\Dc^{\mathrm{g}}|}$.

By the Chernoff bound and the union bound over $\bar{\Theta}_{\epsilon}$, with probability at least $1-\delta$, we have
\begin{align*}
    \forall \bar{\theta}\in\bar{\Theta}_{\epsilon},~~ \sum_h\sum_{ \tau_h \in\Dc_h^{\mathrm{g}}}\log\frac{ \Db_{\bar{\theta}}^{\rho}( \tau_h ) }{\Db_{\theta^*}^{\rho}(\tau_h )} \leq 3\log\frac{\left|\bar{\Theta}_{\epsilon}\right|}{\delta},
\end{align*}
which  yields the first result of this proposition.

To show the second inequality, we follow an argument similar to that for the first inequality. We have
\begin{align*}
    \Eb  \left[ \exp\left( \sum_{ \tau_H \in\Dc^{\mathrm{g}}} \log\frac{ \Db_{ \theta }^{\rho}(\tau_H) }{\Db_{ \theta^* }^{\rho}(\tau_H)}  \right)\right] &\leq \Eb\left[ \exp\left( \sum_{ \tau_H \in\Dc^{\mathrm{g}}} \log \frac{ \Db_{ \bar{\theta} }^{\rho}(\tau_H) }{\Db_{ \theta^* }^{\rho}(\tau_H)}  \right)\right]  \\
    &\overset{(a)}\leq (1+\varepsilon)^{N}\leq e,
\end{align*}
where $(a)$ follows from the tower rule of the expectation and because $\sum_{\tau_H}\Db_{\bar{\theta}}^{\rho}(\tau_H)\leq 1+\varepsilon$.

Thus, with probability at least $1-\delta$, for any $\bar{\theta}\in\Theta$, the following inequality holds
\begin{align*}
    \sum_{ \tau_H \in\Dc^{\mathrm{g}} } \log\frac{ \Db_{ \theta }^{\rho}(\tau_H) }{\Db_{ \theta^* }^{\rho}(\tau_H)} \leq 3\log\frac{\mathcal{N}_{\varepsilon}(\Theta)}{\delta},
\end{align*}
which completes the proof.
\end{proof}


\begin{proposition}\label{proposiiton: empirical distance less than log likelihood difference}
Fix $p_{\min}$ and $\varepsilon\leq \frac{p_{\min}}{N}$.  
Let $\Theta_{\min}  = \{\theta: \forall h,  \tau_h \in\Dc_h^{\mathrm{g}}, ~~ \Db_{\theta}^{\rho}(\tau_h)  \geq p_{\min}\}$. Consider the following event
    \begin{align*}
        \Ec_{\omega} &= \left\{ \forall  \theta\in\Theta_{\min},  ~~   \sum_h\sum_{\tau_h\in\Dc_h^{\mathrm{g}}}  \left\| \Db_{ \hat{\theta} }^{\rho} (\cdot|\tau_h) - \Db_{ \theta^* }^{\rho} (\cdot|\tau_h) \right\|_1^2   \leq  6\sum_h\sum_{ \tau_H \in\Dc_h^{\mathrm{g}} }\log\frac{\Db_{\theta^*}^{ \rho }(\tau_H)}{\Db_{ \theta}^{ \rho  }(\tau_H)} + 31\log\frac{\mathcal{N}_{\varepsilon}(\Theta)}{\delta}  \right\}.
    \end{align*}
Then, $\Pb\left(\Ec_{\omega} \right) \geq 1- \delta.$
\end{proposition}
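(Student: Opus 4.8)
\textbf{Proof plan for \Cref{proposiiton: empirical distance less than log likelihood difference}.}
The plan is to run the standard Hellinger exponential-moment (change-of-measure) argument for maximum likelihood, but carried out on the distilled subsamples $\Dc_0^{\mathrm g},\dots,\Dc_{H-1}^{\mathrm g}$ and restricted to the class $\Theta_{\min}$. First I would reduce to a finite family by bracketing: fix a minimal $\varepsilon$-bracketing set $\bar\Theta_\varepsilon$ of size $\mathcal N_\varepsilon(\Theta)$, so every $\theta\in\Theta$ lies in some $[\underline\theta,\bar\theta]$ with $\Db_{\underline\theta}^\rho\le\Db_\theta^\rho\le\Db_{\bar\theta}^\rho$ and $\|\Db_{\bar\theta}^\rho-\Db_{\underline\theta}^\rho\|_1\le\varepsilon$. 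For $\theta\in\Theta_{\min}$ we have $\Db_\theta^\rho(\tau_h)\ge p_{\min}$ on every $\tau_h\in\Dc_h^{\mathrm g}$, so, since $\varepsilon\le p_{\min}/N$, both the marginals $\Db^\rho(\tau_h)$ and the tail-conditional laws $\Db^\rho(\cdot\mid\tau_h)$ of $\theta$ and of its upper bracket $\bar\theta$ agree up to a multiplicative factor $1+O(\varepsilon/p_{\min})=1+O(1/N)$; summing this distortion over the $\le N$ trajectories in $\bigcup_h\Dc_h^{\mathrm g}$ shows it suffices to prove the inequality with $\theta$ replaced by its upper bracket $g:=\Db_{\bar\theta}^\rho$ on both sides, up to an additive $O(1)$, uniformly over the finite family $\bar\Theta_\varepsilon$.

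The core is a per-bracket exponential-moment bound. Writing a trajectory as $\tau_H=(\tau_h,\omega_h)$, the identity $\sum_{\omega_h}\sqrt{\Db_{\theta^*}^\rho(\omega_h\mid\tau_h)\,g(\omega_h\mid\tau_h)}=1-\mathtt{D}_{\mathtt{H}}^2\big(\Db_{\theta^*}^\rho(\cdot\mid\tau_h),g(\cdot\mid\tau_h)\big)$ (valid up to an $O(\varepsilon/p_{\min})$ normalization correction, since $g$ need not sum to one), together with $1-x\le e^{-x}$ and $\mathtt{D}_{\mathtt{H}}^2(p,q)\ge\tfrac18\|p-q\|_1^2$, gives for every prefix $\tau_h$
\[
\Eb\!\left[\exp\!\left(\tfrac18\big\|\Db_{\theta^*}^\rho(\cdot\mid\tau_h)-g(\cdot\mid\tau_h)\big\|_1^2-\tfrac12\log\tfrac{\Db_{\theta^*}^\rho(\omega_h\mid\tau_h)}{g(\omega_h\mid\tau_h)}\right)\,\Big|\,\tau_h\right]\le 1,
\]
the $\ell_1$-term being a function of the prefix $\tau_h$ only. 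Multiplying these increments over the samples of $\Dc_h^{\mathrm g}$ and over $h$ — peeling off one sample at a time, exactly as in the proof of \Cref{proposition: log likelihood of true model is large} — produces a nonnegative supermartingale started at $1$; the maximal inequality for nonnegative supermartingales, together with a union bound over the $\mathcal N_\varepsilon(\Theta)$ brackets, then gives, with probability $\ge1-\delta$ and simultaneously for all of them,
\[
\sum_h\sum_{\tau_h\in\Dc_h^{\mathrm g}}\big\|\Db_{\theta^*}^\rho(\cdot\mid\tau_h)-g(\cdot\mid\tau_h)\big\|_1^2\le4\sum_h\sum_{\tau_H\in\Dc_h^{\mathrm g}}\log\tfrac{\Db_{\theta^*}^\rho(\omega_h\mid\tau_h)}{g(\omega_h\mid\tau_h)}+8\log\tfrac{\mathcal N_\varepsilon(\Theta)}{\delta}.
\]

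To finish I would convert back to full trajectories via $\log\frac{\Db_{\theta^*}^\rho(\omega_h\mid\tau_h)}{g(\omega_h\mid\tau_h)}=\log\frac{\Db_{\theta^*}^\rho(\tau_H)}{g(\tau_H)}+\log\frac{g(\tau_h)}{\Db_{\theta^*}^\rho(\tau_h)}$; the prefix sum $\sum_h\sum_{\tau_h\in\Dc_h^{\mathrm g}}\log\frac{g(\tau_h)}{\Db_{\theta^*}^\rho(\tau_h)}$ is controlled by a second, cheaper exponential-moment bound on the marginals (using $\Eb[\sqrt{g(\tau_h)/\Db_{\theta^*}^\rho(\tau_h)}\mid\mathcal F]\le1$), contributing another $O(\log(\mathcal N_\varepsilon(\Theta)/\delta))$ term, while $g\ge\Db_\theta^\rho$ lets the substitution $\log\frac{\Db_{\theta^*}^\rho(\tau_H)}{g(\tau_H)}\le\log\frac{\Db_{\theta^*}^\rho(\tau_H)}{\Db_\theta^\rho(\tau_H)}$ go in the right direction. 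Undoing the bracket reduction on the left (replacing $g$ by $\Db_\theta^\rho$, again at cost $O(1)$ thanks to $\varepsilon\le p_{\min}/N$ and $\theta\in\Theta_{\min}$, and noting $(a+b)^2\le a^2+4b+b^2$ for $a\le 2$) and folding all the $O(1)$ and $O(\log(\mathcal N_\varepsilon/\delta))$ terms into numerical constants, a careful accounting then yields the bound with the stated constants $6$ and $31$, i.e. the event $\Ec_\omega$.

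The main obstacle is that $\Dc^{\mathrm g}$ — hence each $\Dc_h^{\mathrm g}$ and the class $\Theta_{\min}$ itself — is defined through the MLE $\hat\theta$ and so is not an independent subsample of $\Dc$, so a supermartingale literally indexed by ``the samples of $\Dc_h^{\mathrm g}$'' need not be adapted. I would handle this as in \Cref{proposition: log likelihood of true model is large}: reveal the data together with the (independent) random split, peel samples off one at a time, and use that restricting from $\Dc_h$ to $\Dc_h^{\mathrm g}\subseteq\Dc_h$ only shrinks the nonnegative left-hand side while the likelihood-ratio contribution of the discarded samples is controlled on the high-probability event of that lemma together with the defining constraint $\Db_\theta^\rho(\tau_h)\ge p_{\min}$ of $\Theta_{\min}$. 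Getting this bookkeeping to close — rather than the exponential-moment computation, which is routine — is where the care lies.
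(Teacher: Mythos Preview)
Your proposal is correct and follows essentially the same route as the paper. Both arguments (i) reduce to a finite bracketing family using $\varepsilon\le p_{\min}/N$ together with the constraint $\Db_\theta^\rho(\tau_h)\ge p_{\min}$ on $\Theta_{\min}$ to control the bracket perturbation on conditionals, (ii) pass from squared $\ell_1$ to Hellinger and then to $-\log\Eb_{\omega_h}[\sqrt{\cdot}]$ via $1-x\le-\log x$, (iii) run a Chernoff/exponential-moment bound on $\tfrac12\sum\log\frac{\Db_{\bar\theta}^\rho(\omega_h|\tau_h)}{\Db_{\theta^*}^\rho(\omega_h|\tau_h)}$ using conditional independence of the suffixes given the prefixes, and (iv) split $\log\frac{\Db_{\theta^*}^\rho(\omega_h|\tau_h)}{\Db_{\bar\theta}^\rho(\omega_h|\tau_h)}$ into the full-trajectory term minus the prefix term, bounding the prefix contribution by the event $\Ec_{\log}$ of \Cref{proposition: log likelihood of true model is large}. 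The only cosmetic difference is that you fold the $\ell_1^2$ term directly into the exponential increment (via $\mathtt{D}_{\mathtt H}^2\ge\tfrac18\|\cdot\|_1^2$ and $1-x\le e^{-x}$), whereas the paper first bounds $\|\cdot\|_1^2$ by $-\log\Eb[\sqrt{\cdot}]$ and only then applies Chernoff; and you phrase the concentration as a supermartingale maximal inequality rather than a bare Chernoff bound. Your explicit flag that $\Dc^{\mathrm g}$ (hence $\Theta_{\min}$) is data-dependent through $\hat\theta$ is a point the paper's proof glosses over --- it takes the expectation over $\omega_h$ ``given $\tau_h\in\Dc_h^{\mathrm g}$'' as if that conditioning were on a deterministic set --- so your more careful bookkeeping there is, if anything, an improvement in rigor rather than a deviation in strategy. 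One small slip: for the prefix term you mention $\Eb[\sqrt{g(\tau_h)/\Db_{\theta^*}^\rho(\tau_h)}]\le1$, but what you actually need (and what \Cref{proposition: log likelihood of true model is large} uses) is the full ratio $\Eb[g(\tau_h)/\Db_{\theta^*}^\rho(\tau_h)]\le1+\varepsilon$, without the square root; this only affects a factor of $2$ in the constant.
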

\begin{proof}
We start with a general upper bound on the total variation distance between two conditional distributions. Note that for any $\theta,\theta'\in\Theta\cup\bar{\Theta}_{\epsilon}$ and fixed $(\tau_h,\pi)$, we have
\begin{align*}
    &\left\| \Db_{\theta}^{\rho}(\omega_h|\tau_h ) - \Db_{\theta'}^{\rho}(\omega_h|\tau_h )\right\|_1 \\
    &\quad  = \sum_{\omega_h} \bigg| \frac{\Db_{\theta'}^{ \pi }(\omega_h,\tau_h ) \Db_{\theta}^{\rho }(\tau_h ) - \Db_{\theta}^{ \rho }(\omega_h,\tau_h )\Db_{\theta'}^{\rho }(\tau_h )  }{\Db_{\theta}^{\pi }(\tau_h )\Db_{\theta'}^{\rho }(\tau_h )} \bigg|\\
    &\quad = \sum_{\omega_h} \left| \frac{ \left(\Db_{\theta'}^{ \rho }(\omega_h,\tau_h ) - \Db_{\theta}^{ \rho }(\omega_h,\tau_h ) \right) \Pb_{\theta}^{\rho }(\tau_h ) + \Db_{\theta}^{ \rho }(\omega_h,\tau_h ) \left(\Db_{\theta}^{\pi }(\tau_h ) -\Db_{\theta'}^{\rho }(\tau_h )  \right) }{\Db_{\theta}^{\rho }(\tau_h ) \Db_{\theta'}^{\rho }(\tau_h )} \right|\\
    &\quad \leq \frac{|\Db_{\theta}^{\rho }(\tau_h ) - \Db_{\theta'}^{\pi }(\tau_h )|}{\Db_{\theta'}^{\pi }(\tau_h ) } + \frac{1}{\Db_{\theta'}^{\rho }(\tau_h )}\sum_{\omega_h} \left|\left(\Db_{\theta'}^{ \rho }(\omega_h,\tau_h ) - \Db_{\theta}^{ \rho }(\omega_h,\tau_h ) \right)\right|\\
    &\quad \leq \frac{2}{\Db_{\theta'}^{\rho }(\tau_h )}  \left\| \Db_{\theta}^{ \rho }  - \Db_{\theta'}^{ \rho }  \right\|_1.
\end{align*}

By symmetry, we also have
\begin{align*}
    \left\| \Db_{\theta}^{\rho}(\omega_h|\tau_h ) - \Db_{\theta'}^{\rho}(\omega_h|\tau_h )\right\|_1 \leq \frac{2}{ \max\left\{\Db_{\theta}^{\rho }(\tau_h ), \Db_{\theta'}^{\rho }(\tau_h ) \right\} } \left\| \Db_{\theta}^{ \rho }  - \Db_{\theta'}^{ \rho }  \right\|_1.
\end{align*}

We replace $\theta'$ by a $\bar{\theta} $ where $[\underline{\theta},\bar{\theta}] \in\bar{\Theta}_{\varepsilon}$  is an $\varepsilon$-bracket of $\theta$ (recall \Cref{def:optimistic net}), i.e. $
    \left\| \Db_{\theta}^{ \rho }  - \Db_{\bar{\theta}}^{ \rho }  \right\|_1 \leq \varepsilon$, and $\Db_{\bar{\theta}}^{\rho}(\tau_h)\geq \Db_{\theta}^{\rho}(\tau_h)$, $\forall \tau_h$. 
Then, due the construction of $\Theta_{\min}^k$, we have
\[\forall  \tau_h \in\Dc_h^{\mathrm{g}},~~ \left\| \Db_{\theta}^{\rho}(\omega_h|\tau_h ) - \Db_{\theta'}^{\rho}(\omega_h|\tau_h )\right\|_1 \leq \frac{2\varepsilon}{p_{\min}}\leq \frac{2}{N}, \]
which implies
\begin{align*}
    \sum_h\sum_{ \tau_h \in\Dc_h^{\mathrm{g}}} & \left\| \Db_{\theta}^{\rho}(\omega_h|\tau_h ) - \Db_{\theta'}^{\rho}(\omega_h|\tau_h )\right\|_1^2 \\
    &\overset{(a)}\leq \sum_h\sum_{ \tau_h \in\Dc_h^{\mathrm{g}} }  2 \left\| \Db_{\theta}^{\rho}(\omega_h|\tau_h ) - \Pb_{\bar{\theta}}^{\rho}(\omega_h|\tau_h )\right\|_1^2 +  2 \left\| \Db_{\bar{\theta}}^{\rho}(\omega_h|\tau_h ) - \Db_{\theta^*}^{\rho}(\omega_h|\tau_h )\right)\|_1^2 \\
    &\leq \frac{ 8 }{N} + 2 \sum_h\sum_{ \tau_h \in\Dc_h^{\mathrm{g}} }   \mathtt{D}_{\TV}^2\left( \Pb_{\bar{\theta}}^{\pi}(\omega_h|\tau_h ), \Pb_{\theta^*}^{\pi}(\omega_h|\tau_h )\right).
\end{align*}
Here $(a)$ follows because the total variation distance satisfies the triangle inequality and  $(a+b)^2\leq 2a^2 + 2b^2$.

Moreover, note that 
\begin{align*}
     &\left\| \Db_{\bar{\theta}}^{\rho}(\omega_h|\tau_h ) - \Db_{\theta^*}^{\rho}(\omega_h|\tau_h )\right\|_1^2 \\
    &\quad \overset{(a)} \leq 4(2 + 2/(N))\mathtt{D}^2_{\mathtt{H}}  \left( \Db_{\bar{\theta}}^{\rho}(\omega_h|\tau_h ), \Db_{\theta^*}^{\rho}(\omega_h|\tau_h ) \right) \\
    & \quad  \leq  6\left( 1+\frac{1}{N} - \mathop{\Eb}_{\omega_h\sim \Db_{\theta^*}^{\pi} } \sqrt{\frac{\Db_{\bar{\theta}}^{\rho}(\omega_h|\tau_h )}{\Db_{\theta^*}^{\rho}(\omega_h|\tau_h )}} \right)\\
    &\quad  \overset{(b)}\leq - 6 \log \mathop{\Eb}_{\omega_h\sim \Db_{\theta^*}^{\rho}(\cdot|\tau_h ) } \sqrt{\frac{\Db_{\bar{\theta}}^{\rho}(\omega_h|\tau_h )}{\Db_{\theta^*}^{\rho}(\omega_h|\tau_h )}} + \frac{6}{N},
\end{align*}
where $(a)$ is due to \Cref{lemma:TV and hellinger} and $(b)$ follows because $1-x\leq -\log x$ for any $x>0$.

Thus, the summation of the total variation distance between conditional distributions conditioned on $ \tau_h \in\Dc_h^{\mathrm{g}}$  can be upper bounded by 
\begin{align*}
    \sum_h\sum_{ \tau_h \in\Dc_h^{\mathrm{g}}} & \left\| \Db_{\bar{\theta}}^{\rho}(\omega_h|\tau_h ) - \Db_{\theta^*}^{\rho}(\omega_h|\tau_h )\right\|_1^2 \\
    &\leq \frac{18}{N} - 12\sum_h\sum_{ \tau_h \in\Dc_h^{\mathrm{g}}} \log \mathop{\Eb}_{\omega_h\sim \Db_{\theta^*}^{\rho}(\cdot|\tau_h ) } \sqrt{\frac{\Db_{\bar{\theta}}^{\rho}(\omega_h|\tau_h )}{\Db_{\theta^*}^{\rho}(\omega_h|\tau_h )}}.
\end{align*}

In addition, by only taking expectation over $\omega_h$, we have
\begin{align*}
    &\mathop{\Eb}_{ \substack{\forall h, \tau_h \in\Dc_h^{\mathrm{g}}, \\ \omega_h \sim \Db_{\theta^*}^{\rho}(\cdot|\tau_h) }} \left[\exp\left( \frac{1}{2}\sum_h\sum_{(\omega_h,\tau_h )\in\Dc_h^{\mathrm{g}}} \log\frac{\Db_{\bar{\theta}}^{\rho}(\omega_h |\tau_h )}{\Db_{\theta^*}^{\rho}(\omega_h |\tau_h )} - \sum_h\sum_{ \tau_h \in\Dc_h^{\mathrm{g}}} \log \mathop{\Eb}_{\omega_h\sim \Db_{\theta^*}^{\rho}(\cdot|\tau_h ) } \sqrt{\frac{\Db_{\bar{\theta}}^{\rho}(\omega_h|\tau_h )}{\Db_{\theta^*}^{\rho}(\omega_h|\tau_h )}} \right)\right]\\
    & \quad\quad =  \frac{ \mathop{\Eb}_{ \substack{\forall h, \tau_h \in\Dc_h^{\mathrm{g}},\\ \omega_h \sim \Db_{\theta^*}^{\rho}(\cdot|\tau_h) } } \left[\prod_h\prod_{(\omega_h,\tau_h)\in\Dc_h^{\mathrm{g}}} \sqrt{\frac{ \Db_{\bar{\theta}}^{\rho}(\omega_h |\tau_h ) }{\Db_{\theta^*}^{\rho}(\omega_h |\tau_h )} }\right] }{\prod_h\prod_{ \tau_h \in\Dc_h^{\mathrm{g}}} \mathop{\Eb}_{\omega_h\sim \Db_{\theta^*}^{\rho}(\cdot|\tau_h ) } \left[\sqrt{\frac{\Db_{\bar{\theta}}^{\rho}(\omega_h|\tau_h )}{\Db_{\theta^*}^{\rho}(\omega_h|\tau_h )}} \right] }  = 1, 
\end{align*}
where the last equality is due to the conditional independence of $\omega_h \in \Dc_h^{\mathrm{g}}$ given $\tau_h\in\Dc_h^{\mathrm{g}}$.

Therefore, by the Chernoff bound, with probability $1-\delta$, we have

\begin{align*}
    -\sum_h\sum_{ \tau_h \in\Dc_h^{\mathrm{g}}} \log \mathop{\Eb}_{\omega_h\sim \Db_{\theta^*}^{\rho}(\cdot|\tau_h ) } \sqrt{\frac{\Db_{\bar{\theta}}^{\rho}(\omega_h|\tau_h )}{\Pb_{\theta^*}^{\rho}(\omega_h|\tau_h )}} \leq \frac{1}{2} \sum_h\sum_{(\omega_h,\tau_h ) \in\Dc_h^{\mathrm{g}} } \log \frac{\Db_{\theta^*}^{\rho}(\omega_h |\tau_h )}{\Db_{\bar{\theta}}^{\rho}(\omega_h |\tau_h )} + \log\frac{1}{\delta}.
\end{align*}

Taking the union bound over $\bar{\Theta}_{\epsilon}$, and rescaling $\delta$, we have, with probability at least $1-\delta$, the following inequality holds:
\begin{align*}
    \sum_h\sum_{ \tau_h \in\Dc_h^{\mathrm{g}}} & \left\| \Db_{\theta}^{\rho}(\omega_h|\tau_h ) - \Db_{\theta^*}^{\rho}(\omega_h|\tau_h )\right\|_1^2 \\
    &\leq \frac{18 }{ N } +  6\sum_h\sum_{(\omega_h,\tau_h )\in\Dc_h^{\mathrm{g}}}\log\frac{\Db_{\theta^*}^{\rho}(\omega_h |\tau_h )}{\Db_{\bar{\theta}}^{\rho}(\omega_h |\tau_h )} + 12\log\frac{\mathcal{N}_{\varepsilon}(\Theta)}{\delta}\\
    &\leq 6\sum_h\sum_{(\omega_h, \tau_h ) \in\Dc_h^{\mathrm{g}}}\log\frac{\Db_{\theta^*}^{\rho}( \omega_h ,\tau_h )}{\Db_{\bar{\theta}}^{\rho}(\omega_h ,\tau_h )} + 6\sum_h\sum_{ \tau_h \in\Dc_h^{\mathrm{g}}}\log\frac{ \Db_{\bar{\theta}}^{\rho}( \tau_h ) }{\Db_{\theta^*}^{\rho}(\tau_h )} + 13\log\frac{\mathcal{N}_{\varepsilon}(\Theta)}{\delta}.  
\end{align*}

Note that, following from \Cref{proposition: log likelihood of true model is large}, with probability at least $1-\delta$, we have for any $ k\in[K]$,
\begin{align*}
 \sum_h\sum_{ \tau_h \in\Dc_h^{\mathrm{g}}}\log\frac{ \Db_{\bar{\theta}}^{\rho}( \tau_h ) }{\Db_{\theta^*}^{\rho}(\tau_h )} \leq 3\log\frac{\mathcal{N}_{\varepsilon}(\Theta)}{\delta}.
\end{align*}

Hence, combining with the optimistic property of $\bar{\theta}$ and rescaling $\delta$, we have that the following inequality holds with probability at least $1-\delta$:
\begin{align*}
    \forall \theta\in\Theta_{\min},~~ \sum_h\sum_{ \tau_h \in\Dc_h^{\mathrm{g}}} & \left\| \Db_{\theta}^{\rho}(\omega_h|\tau_h ) - \Db_{\theta^*}^{\rho}(\omega_h|\tau_h )\right\|_1^2  \leq  6\sum_h\sum_{ \tau_H \in\Dc_h^k}\log\frac{\Pb_{\theta^*}^{ \rho  }(\tau_H )}{\Pb_{ \theta}^{ \rho  }(\tau_H )} +  31\log\frac{\mathcal{N}_{\varepsilon}(\Theta)}{\delta},
\end{align*}
which yields the final result. 
\end{proof}

\begin{proposition}\label{proposition: hellinger distance less than log likelihood distance}
Fix $\varepsilon<\frac{1}{N^2}$.   Define the following event:
\begin{align*}
    \Ec_{\mathrm{H}} =  \left\{ \forall \theta\in\Theta, ~~   |\Dc^{\mathrm{g}}|\mathtt{D}_{\mathtt{H}}^2  ( \Db_{\theta}^{\rho}(\tau_H) , \Db_{\theta^*}^{\rho} (\tau_H) ) \leq \frac{1}{2}\sum_{ \tau_H \in\Dc^{\mathrm{g}}} \log\frac{ \Db_{ \theta^*}^{\pi}(\tau_H) }{\Db_{ \theta }^{\rho}(\tau_H)} +  2\log\frac{\mathcal{N}_{\varepsilon}(\Theta)}{\delta} \right\}.
\end{align*}
We have $\Pb(\Ec_{\pi}) \geq 1-\delta.$
\end{proposition}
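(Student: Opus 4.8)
The plan is to prove $\Ec_{\mathrm{H}}$ by the standard bracketing-plus-exponential-supermartingale argument for maximum likelihood under Hellinger distance, in the same style already used for \Cref{proposition: log likelihood of true model is large,proposiiton: empirical distance less than log likelihood difference}, and then to convert the resulting likelihood-ratio bound into the stated Hellinger inequality.

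First I would pass to a bracketing net. Fix the collection $\bar\Theta_\varepsilon$ of upper bracketing functions, with $|\bar\Theta_\varepsilon|=\mathcal{N}_\varepsilon(\Theta)$, and for each $\theta\in\Theta$ associate the function $\Db_{\bar\theta}^{\rho}$ satisfying $\Db_{\theta}^{\rho}(\tau_H)\le\Db_{\bar\theta}^{\rho}(\tau_H)$ for every $\tau_H$ and $\sum_{\tau_H}\bigl(\Db_{\bar\theta}^{\rho}(\tau_H)-\Db_{\theta}^{\rho}(\tau_H)\bigr)\le\varepsilon$, so that $\sum_{\tau_H}\Db_{\bar\theta}^{\rho}(\tau_H)\le 1+\varepsilon$. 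Two reductions are then used: (i) since $\Db_{\theta}^{\rho}\le\Db_{\bar\theta}^{\rho}$ pointwise, $\log\tfrac{\Db_{\theta^*}^{\rho}(\tau_H)}{\Db_{\bar\theta}^{\rho}(\tau_H)}\le\log\tfrac{\Db_{\theta^*}^{\rho}(\tau_H)}{\Db_{\theta}^{\rho}(\tau_H)}$; and (ii) $\mathtt{D}_{\mathtt{H}}^2(\Db_\theta^\rho,\Db_{\theta^*}^\rho)=1-\sum_{\tau_H}\sqrt{\Db_\theta^\rho(\tau_H)\Db_{\theta^*}^\rho(\tau_H)}\le 1-\sum_{\tau_H}\sqrt{\Db_{\bar\theta}^\rho(\tau_H)\Db_{\theta^*}^\rho(\tau_H)}+\sqrt\varepsilon$, where the last inequality follows from $\sqrt{\Db_{\bar\theta}^\rho}-\sqrt{\Db_\theta^\rho}\le\sqrt{\Db_{\bar\theta}^\rho-\Db_\theta^\rho}$ and Cauchy--Schwarz against $\sqrt{\Db_{\theta^*}^\rho}$. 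This is precisely where the stronger requirement $\varepsilon<1/N^2$ enters: it forces $\sqrt\varepsilon<1/N\le 1/|\Dc^{\mathrm{g}}|$, so that the $|\Dc^{\mathrm{g}}|\sqrt\varepsilon$ error picked up after multiplying by $|\Dc^{\mathrm{g}}|$ is negligible (whereas $\varepsilon<1/N$ suffices in the earlier lemmas, which do not pass through square roots).

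Next I would run the supermartingale. For fixed $\bar\theta\in\bar\Theta_\varepsilon$, with $\tau_H^1,\dots,\tau_H^N\sim\Db_{\theta^*}^\rho$ i.i.d., set $W_n=\sqrt{\Db_{\bar\theta}^\rho(\tau_H^n)/\Db_{\theta^*}^\rho(\tau_H^n)}$, so that $\exp\bigl(\tfrac12\sum_{\tau_H\in\Dc^{\mathrm{g}}}\log\tfrac{\Db_{\bar\theta}^\rho(\tau_H)}{\Db_{\theta^*}^\rho(\tau_H)}\bigr)=\prod_{n\in\Dc^{\mathrm{g}}}W_n$ and $\Eb[W_n]=\sum_{\tau_H}\sqrt{\Db_{\bar\theta}^\rho(\tau_H)\Db_{\theta^*}^\rho(\tau_H)}=:\mu_{\bar\theta}\le\sqrt{1+\varepsilon}$. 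Mimicking the peeling computation in the proof of \Cref{proposition: log likelihood of true model is large} --- iteratively taking conditional expectations, using that the per-sample bound $\mu_{\bar\theta}$ is uniform (it does not depend on which trajectories land in $\Dc^{\mathrm{g}}$ nor on $|\Dc^{\mathrm{g}}|\le N$) --- gives $\Eb\bigl[\prod_{n\in\Dc^{\mathrm{g}}}W_n\bigr]\le\mu_{\bar\theta}^{\,|\Dc^{\mathrm{g}}|}\le(1+\varepsilon)^{N/2}\le e$. Markov's inequality then yields, for this $\bar\theta$ and with probability at least $1-\delta$, the bound $\tfrac12\sum_{\tau_H\in\Dc^{\mathrm{g}}}\log\tfrac{\Db_{\bar\theta}^\rho(\tau_H)}{\Db_{\theta^*}^\rho(\tau_H)}\le|\Dc^{\mathrm{g}}|\log\mu_{\bar\theta}+1+\log\tfrac1\delta$. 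Plugging in $\log\mu_{\bar\theta}\le\mu_{\bar\theta}-1\le\sqrt\varepsilon-\mathtt{D}_{\mathtt{H}}^2(\Db_\theta^\rho,\Db_{\theta^*}^\rho)$ from reduction (ii), rearranging, invoking reduction (i) and $|\Dc^{\mathrm{g}}|\sqrt\varepsilon\le 1$, and finally taking a union bound over the $\mathcal{N}_\varepsilon(\Theta)$ brackets (absorbing the $O(1)$ additive constants into $\log(\mathcal{N}_\varepsilon(\Theta)/\delta)$) produces $|\Dc^{\mathrm{g}}|\,\mathtt{D}_{\mathtt{H}}^2(\Db_\theta^\rho,\Db_{\theta^*}^\rho)\le\tfrac12\sum_{\tau_H\in\Dc^{\mathrm{g}}}\log\tfrac{\Db_{\theta^*}^\rho(\tau_H)}{\Db_\theta^\rho(\tau_H)}+2\log\tfrac{\mathcal{N}_\varepsilon(\Theta)}{\delta}$ simultaneously for all $\theta\in\Theta$, which is exactly $\Ec_{\mathrm{H}}$.

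The step I expect to be the main obstacle is the bound on $\Eb[\prod_{n\in\Dc^{\mathrm{g}}}W_n]$: the distilled set $\Dc^{\mathrm{g}}=\{\tau_H\in\Dc:\Db_{\hat\theta}^\rho(\tau_H)\ge p_{\min}\}$ is \emph{data-dependent} (through the MLE $\hat\theta$, which itself sees the whole dataset), so the event $\{\tau_H^n\in\Dc^{\mathrm{g}}\}$ is neither independent of $\tau_H^n$ nor adapted to the natural filtration, and one cannot simply treat $\prod_{n\in\Dc^{\mathrm{g}}}W_n$ as a product over an independent subsample. What rescues the argument is exactly that the per-sample factor is the \emph{uniform} constant $\mu_{\bar\theta}\le\sqrt{1+\varepsilon}$, so the crude estimate $\mu_{\bar\theta}^{|\Dc^{\mathrm{g}}|}\le\mu_{\bar\theta}^{N}\le e$ is all that is required and no fine control of the random selection is needed --- which is why the bracket width must be taken as small as $\varepsilon<1/N^2$. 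If one wishes to make the handling of the selection fully rigorous rather than relying on the uniform-factor heuristic, it suffices to union bound additionally over the finitely many distilled sets $\{\tau_H\in\Dc:g(\tau_H)\ge p_{\min}\}$ obtained as $g$ ranges over the bracketing functions, which only inflates the net size by a further factor $\mathcal{N}_\varepsilon(\Theta)$.
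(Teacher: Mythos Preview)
Your proposal follows essentially the same route as the paper's own proof: pass to an $\varepsilon$-bracketing cover, control $\mathtt{D}_{\mathtt{H}}^2(\Db_\theta^\rho,\Db_{\theta^*}^\rho)$ by $-\log\mu_{\bar\theta}+\sqrt\varepsilon$ via Cauchy--Schwarz and the inequality $1-x\le-\log x$, apply a Chernoff/Markov bound to $\prod_{\tau_H\in\Dc^{\mathrm{g}}}\sqrt{\Db_{\bar\theta}^\rho(\tau_H)/\Db_{\theta^*}^\rho(\tau_H)}$, and union-bound over the brackets. You are in fact more explicit than the paper about the data-dependence of $\Dc^{\mathrm{g}}$ (the paper simply asserts the centered exponential has expectation one without discussing this), and your suggested remedy of an additional union bound over distilled sets indexed by the bracketing functions is a reasonable way to make that step rigorous.
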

\begin{proof}
First, by the construction of $\bar{\Theta}_{\varepsilon}$, for any $\theta$, let $\bar{\theta}$ satisfy $\sum_{\tau_H}\left|\Db_{ \theta}^{\rho}(\tau_H) -  \Db_{\bar{\theta}}^{\rho}(\tau_H) \right|\leq\varepsilon$. We translate the distance between $\theta$ and $\theta^*$ to the distance between $\bar{\theta}$ and $\theta^*$ as follows.
\begin{align*}
    \mathtt{D}_{\mathtt{H}}^2 & ( \Db_{\theta}^{\rho}  , \Db_{\theta^*}^{\rho}   ) \\
    & = 1 -  \sum_{\tau_H} \sqrt{ \Db_{\theta}^{\rho}(\tau_H) \Db_{\theta^*}^{\rho} (\tau_H)  }\\
    & = 1 -  \sum_{\tau_H} \sqrt{ \Db_{\bar{\theta}}^{\rho}(\tau_H) \Db_{\theta^*}^{\rho} (\tau_H)  +  \left(\Db_{ \theta}^{\rho}(\tau_H) -  \Db_{\bar{\theta}}^{\rho}(\tau_H) \right)\Db_{\theta^*}^{\rho} (\tau_H)} \\
    &\overset{(a)} \leq 1 - \sum_{\tau_H} \sqrt{ \Db_{\bar{\theta}}^{\rho}(\tau_H) \Db_{\theta^*}^{\rho} (\tau_H)  } + \sum_{\tau_H}\sqrt{ \left|\Db_{ \theta}^{\rho}(\tau_H) -  \Db_{\bar{\theta}}^{\rho}(\tau_H) \right| \Db_{\theta^*}^{\rho} (\tau_H)} \\
    &\overset{(b)} \leq - \log \mathop{\Eb}_{\tau_H\sim\Db_{\theta^*}^{\rho}(\cdot)} \sqrt{ \frac{ \Db_{ \bar{\theta} }^{\rho}(\tau_H) }{ \Db_{\theta^*}^{\rho}(\tau_H) } } + \sqrt{\sum_{\tau_H}\left|\Db_{ \theta}^{\rho}(\tau_H) -  \Db_{\bar{\theta}}^{\rho}(\tau_H) \right| } \\
    & \leq  - \log \mathop{\Eb}_{\tau_H\sim\Db_{\theta^*}^{\rho}(\cdot)} \sqrt{ \frac{ \Db_{ \bar{\theta} }^{\rho}(\tau_H) }{ \Db_{\theta^*}^{\rho}(\tau_H) } } + \sqrt{\varepsilon},
\end{align*}
where $(a)$ follows because $\sqrt{a+b} \geq\sqrt{a} - \sqrt{|b|}$ if $a>0$ and $a+b>0$, and $(b)$ follows from the Cauchy's inequality and the fact that $1-x\leq - \log x$.

Hence, in order to upper bound $|\Dc^{\mathrm{g}}|\mathtt{D}_{\mathtt{H}}^2 ( \Db_{\theta}^{\rho} , \Db_{\theta^*}^{\rho}  ) $, it suffices to upper bound $ - \log \mathop{\Eb}_{\tau_H\sim\Db_{\theta^*}^{\rho}(\cdot)} \sqrt{ \frac{ \Db_{ \bar{\theta} }^{\rho}(\tau_H) }{ \Db_{\theta^*}^{\rho}(\tau_H) } }.$ To this end, we observe that,
\begin{align*}
 \Eb&\left[\exp\left( \frac{1}{2} \sum_{ \tau_H \in\Dc^{\mathrm{g}}} \log\frac{ \Db_{\bar{\theta}}^{\rho}(\tau_H) }{\Db_{\theta^*}^{\pi}(\tau_H)}  -  |\Dc^{\mathrm{g}}|\log \mathop{\Eb}_{\tau_H\sim\Db_{\theta^*}^{\rho}(\cdot)} \sqrt{ \frac{ \Db_{\theta}^{\rho}(\tau_H) }{ \Db_{\theta^*}^{\rho}(\tau_H) } } \right)\right] \\
    &\quad \overset{(a)}= \frac{ \Eb\left[\prod_{ \tau_H \in\Dc^{\mathrm{g}} } \sqrt{ \frac{ \Db_{\theta}^{\rho}(\tau_H) }{ \Db_{\theta^*}^{\rho}(\tau_H) } } \right] }{ \Eb\left[\prod_{ \tau_H \in\Dc^{\mathrm{g}}} \sqrt{ \frac{ \Db_{\theta}^{\rho}(\tau_H) }{ \Db_{\theta^*}^{\rho}(\tau_H) } } \right] }  = 1.
\end{align*}

Then, by the Chernoff bound, 
we have
\begin{align*}
    \Pb&\left( \frac{1}{2} \sum_{ \tau_H \in\Dc^{\mathrm{g}}} \log\frac{ \Db_{\bar{\theta}}^{\rho}(\tau_H) }{\Db_{\theta^*}^{\rho}(\tau_H)}  -  |\Dc^{\mathrm{g}}|\log \mathop{\Eb}_{\tau_H\sim\Db_{\theta^*}^{\rho}(\cdot)} \sqrt{ \frac{ \Db_{\theta}^{\rho}(\tau_H) }{ \Db_{\theta^*}^{\rho}(\tau_H) } }  \geq \log\frac{1}{\delta} \right) \\
    &\leq \delta.
\end{align*}

Finally, rescaling $\delta$ to $\delta/(\mathcal{N}_{\varepsilon}(\Theta))$ and taking the union bound over $\bar{\Theta}_{\epsilon}$, we conclude that, with probability at least $1-\delta$, ,
\begin{align*}
    \forall \theta\in\Theta, ~~& |\Dc^{\mathrm{g}}|\mathtt{D}_{\mathtt{H}}^2  ( \Db_{\theta}^{\rho}(\tau_H) , \Db_{\theta^*}^{\rho} (\tau_H) ) \\
    &\leq N\sqrt{\varepsilon} + \frac{1}{2} \sum_{ \tau_H \in\Dc^{\mathrm{g}}} \log\frac{ \Db_{\theta^*}^{\rho}(\tau_H) }{\Db_{\bar{\theta}}^{\rho}(\tau_H)}  + \log\frac{ \mathcal{N}_{\varepsilon}(\Theta)}{\delta} \\
    &\overset{(a)}\leq  \frac{1}{2} \sum_{ \tau_H \in\Dc^{\mathrm{g}}} \log\frac{ \Db_{\theta^*}^{\rho}(\tau_H) }{\Db_{ \theta }^{\rho}(\tau_H)} + 2\log\frac{\mathcal{N}_{\varepsilon}(\Theta)}{\delta}, 
\end{align*}
where $(a)$ follows because $\varepsilon\leq \frac{1}{N^2}$.
\end{proof}
 
Now, we are ready to provide the MLE guarantee of under the behavior policy $\rho$.
 
\begin{lemma}[MLE guarantee]\label{lemma:offline robust MLE guarantee}
Let $p_{\min}<\delta/(|\mathcal{O}||\mathcal{A}|)^{2H}$. With probability at least $1-\delta$, for any parameter $\theta$ satisfying $\mathcal{L}(\theta|\Dc)\leq \min_{\theta'}\mathcal{L}(\theta'|\Dc) + \beta/N$, we have the following inequalities.
 \begin{align*}
        &   \sum_h\sum_{\tau_h\in\Dc_h^{\mathrm{g}}}  \left\| \Db_{ \theta }^{\rho} (\cdot|\tau_h) - \Db_{ \theta^* }^{\rho} (\cdot|\tau_h) \right\|_1^2  \leq 13\beta,  \\
        & \mathtt{D}_{\mathtt{H}}^2\left( \Db_{ \theta }^{\rho} , \Db_{ \theta^* }^{\rho}   \right) \leq \frac{ 4\beta }{N},
    \end{align*}
    where $\beta_1 = 31\log\frac{3\mathcal{N}_{\varepsilon}(\Theta)}{\delta}$.
\end{lemma}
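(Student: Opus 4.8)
The plan is to run all the preceding ingredients on a single high‑probability event and then read off the two inequalities directly from the events $\Ec_\omega$ and $\Ec_{\mathrm{H}}$ of \Cref{proposiiton: empirical distance less than log likelihood difference} and \Cref{proposition: hellinger distance less than log likelihood distance}, once I have shown that the log‑likelihood‑ratio term appearing on their right‑hand sides is $O(\beta)$. First I would record one extra elementary event: since the $N$ trajectories in $\Dc$ are i.i.d.\ draws from $\Db_{\theta^*}^{\rho}$, a union bound over the at most $(|\mathcal{O}||\mathcal{A}|)^{H}$ possible trajectories together with the bound on $p_{\min}$ gives $\Pb[\exists\,\tau_H\in\Dc:\ \Db_{\theta^*}^{\rho}(\tau_H)<p_{\min}]\le N(|\mathcal{O}||\mathcal{A}|)^{H}p_{\min}\le\delta$; let $\Ec_{\min}$ be the complement, so that on $\Ec_{\min}$ every sampled trajectory satisfies $\Db_{\theta^*}^{\rho}(\tau_H)\ge p_{\min}$. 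I would then work on the intersection of $\Ec_{\min}$ with the events of \Cref{lemma:good samples enough}, \Cref{proposition: log likelihood of true model is large}, \Cref{proposiiton: empirical distance less than log likelihood difference}, and \Cref{proposition: hellinger distance less than log likelihood distance}, each invoked at level $\delta/5$; the parameter choices meet the requirements $\varepsilon\le p_{\min}/N$ and $\varepsilon<1/N^{2}$ of those lemmas, so a union bound gives probability at least $1-\delta$.

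The crux is the inequality $\sum_{\tau_H\in\Dc^{\mathrm{g}}}\log\tfrac{\Db_{\theta^*}^{\rho}(\tau_H)}{\Db_{\hat{\theta}}^{\rho}(\tau_H)}\le\beta$ (I carry out the argument for the exact MLE $\hat{\theta}$, which is what \Cref{alg: robust PSR} uses; the slack $\beta/N$ in the statement is kept only to accommodate a near‑minimizer returned by the oracle). I would obtain it by splitting $\Dc=\Dc^{\mathrm{g}}\sqcup\Dc^{\mathrm{b}}$:
\[
\sum_{\tau_H\in\Dc^{\mathrm{g}}}\log\frac{\Db_{\theta^*}^{\rho}(\tau_H)}{\Db_{\hat{\theta}}^{\rho}(\tau_H)}
=\sum_{\tau_H\in\Dc}\log\frac{\Db_{\theta^*}^{\rho}(\tau_H)}{\Db_{\hat{\theta}}^{\rho}(\tau_H)}\;-\;\sum_{\tau_H\in\Dc^{\mathrm{b}}}\log\frac{\Db_{\theta^*}^{\rho}(\tau_H)}{\Db_{\hat{\theta}}^{\rho}(\tau_H)}.
\]
The first sum equals $N\big(\mathcal{L}(\hat{\theta}|\Dc)-\mathcal{L}(\theta^*|\Dc)\big)\le 0$ since $\hat{\theta}$ minimizes $\mathcal{L}(\cdot|\Dc)$ (for a $\beta/N$‑approximate minimizer it is at most $\beta$, using $\min_{\theta'}\mathcal{L}(\theta'|\Dc)\le\mathcal{L}(\theta^*|\Dc)$). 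Each summand of the second sum is nonnegative: on $\Ec_{\min}$ we have $\Db_{\theta^*}^{\rho}(\tau_H)\ge p_{\min}$, while by the very definition of $\Dc^{\mathrm{b}}$, $\Db_{\hat{\theta}}^{\rho}(\tau_H)<p_{\min}$ for $\tau_H\in\Dc^{\mathrm{b}}$, so $\log\tfrac{\Db_{\theta^*}^{\rho}(\tau_H)}{\Db_{\hat{\theta}}^{\rho}(\tau_H)}>0$; dropping this sum only enlarges the expression, which proves the claim. I would also note that $\hat{\theta}\in\Theta_{\min}$, which is immediate: a length‑$h$ prefix $\tau_h$ of any $\tau_H\in\Dc^{\mathrm{g}}$ obeys $\Db_{\hat{\theta}}^{\rho}(\tau_h)=\sum_{\omega_h}\Db_{\hat{\theta}}^{\rho}(\tau_h,\omega_h)\ge\Db_{\hat{\theta}}^{\rho}(\tau_H)\ge p_{\min}$, so $\Ec_\omega$ is applicable to $\hat{\theta}$.

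With this in hand the first inequality of the lemma follows from $\Ec_\omega$: $\sum_h\sum_{\tau_h\in\Dc_h^{\mathrm{g}}}\|\Db_{\hat{\theta}}^{\rho}(\cdot|\tau_h)-\Db_{\theta^*}^{\rho}(\cdot|\tau_h)\|_1^2\le 6\beta+31\log\tfrac{\mathcal{N}_\varepsilon(\Theta)}{\delta}\le 13\beta$, using $\beta=\beta_1=31\log\tfrac{3\mathcal{N}_\varepsilon(\Theta)}{\delta}\ge 31\log\tfrac{\mathcal{N}_\varepsilon(\Theta)}{\delta}$; and the second follows from $\Ec_{\mathrm{H}}$, which gives $|\Dc^{\mathrm{g}}|\,\mathtt{D}_{\mathtt{H}}^2(\Db_{\hat{\theta}}^{\rho},\Db_{\theta^*}^{\rho})\le\tfrac12\beta+2\log\tfrac{\mathcal{N}_\varepsilon(\Theta)}{\delta}\le 2\beta$, after dividing by $|\Dc^{\mathrm{g}}|\ge N/2$ from \Cref{lemma:good samples enough}. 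I expect the crux inequality to be the main obstacle: a naive per‑trajectory bound on $\log(\Db_{\theta^*}^{\rho}/\Db_{\hat{\theta}}^{\rho})$ over $\Dc^{\mathrm{g}}$ only yields $|\Dc^{\mathrm{g}}|\log(1/p_{\min})$, which is far too large, and one genuinely needs both $\Ec_{\min}$ (so that $\Db_{\theta^*}^{\rho}$ is not exponentially small on the sampled trajectories) and the defining inequality of $\Dc^{\mathrm{b}}$ to see that the probability mass removed by distillation lies on the favorable side of the log‑ratio.
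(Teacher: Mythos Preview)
Your proof is correct for the model $\hat{\theta}$ actually used by the algorithm, and it follows a genuinely different route from the paper on the crux step.  The paper controls the bad–set contribution by passing to the optimistic bracket $\bar{\theta}\ge\theta$ and invoking the Chernoff‑type event $\Ec_{\log}$ to get
\[
\sum_{\tau_H\in\Dc^{\mathrm{b}}}\log\frac{\Db_{\theta}^{\rho}(\tau_H)}{\Db_{\theta^*}^{\rho}(\tau_H)}
\;\le\;\sum_{\tau_H\in\Dc^{\mathrm{b}}}\log\frac{\Db_{\bar{\theta}}^{\rho}(\tau_H)}{\Db_{\theta^*}^{\rho}(\tau_H)}
\;\le\;\beta,
\]
which works uniformly for every near‑minimizer $\theta$, not only for the $\hat{\theta}$ that defines $\Dc^{\mathrm{g}}$.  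Your argument instead introduces the extra event $\Ec_{\min}=\{\Db_{\theta^*}^{\rho}(\tau_H)\ge p_{\min}\ \forall\,\tau_H\in\Dc\}$ so that on $\Dc^{\mathrm{b}}$ one has $\Db_{\theta^*}^{\rho}\ge p_{\min}>\Db_{\hat{\theta}}^{\rho}$ and the bad‑set log‑ratio is automatically positive; this is more elementary and sidesteps the somewhat delicate issue of applying $\Ec_{\log}$ to the data‑dependent set $\Dc^{\mathrm{b}}$, but it is tied to the specific $\hat{\theta}$ used to form $\Dc^{\mathrm{g}}$ and does not cover the ``for any $\theta$'' phrasing of the lemma.

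Two small technical caveats are worth flagging.  First, your probability bound $\Pb[\Ec_{\min}^{c}]\le N(|\mathcal{O}||\mathcal{A}|)^{H}p_{\min}\le\delta$ does not follow from the lemma's hypothesis $p_{\min}<\delta/(|\mathcal{O}||\mathcal{A}|)^{2H}$ alone; it needs either $N\le(|\mathcal{O}||\mathcal{A}|)^{H}$ or the stronger choice $p_{\min}=\delta/\bigl(N(|\mathcal{O}||\mathcal{A}|)^{2H}\bigr)$ from \Cref{thm:robust PSR}.  Second, as you note yourself, your sign argument applies only when the model being analyzed is the same $\hat{\theta}$ that cut out $\Dc^{\mathrm{g}}$; for a different near‑minimizer $\theta$ with the same $\Dc^{\mathrm{g}}$ one cannot conclude $\Db_{\theta}^{\rho}<p_{\min}$ on $\Dc^{\mathrm{b}}$, so the full statement of the lemma is not recovered.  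Neither point affects the downstream use in \Cref{alg: robust PSR}, where only $\hat{\theta}$ is needed and the theorem's $p_{\min}$ is in force.
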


\begin{proof}

Consider the three events $\Ec_{\log}$, $\Ec_{\omega}$, and $\Ec_{\mathrm{H}}$, defined in \Cref{proposition: log likelihood of true model is large}, \Cref{proposiiton: empirical distance less than log likelihood difference}, \Cref{proposition: hellinger distance less than log likelihood distance}, respectively. Let $\Ec_{o} = \Ec_{\log}\cap \Ec_{\omega}\cap\Ec_{\mathrm{H}}$. Then, we have $\Pb(\Ec_{o})\geq 1-\delta.$ The following proof is under the case when $\Ec_o$ happens.

First, due to the condition that $\mathcal{L}(\theta|\Dc)\leq \min_{\theta'}\mathcal{L}(\theta'|\Dc) + \beta/N$, we have
\begin{align*}
    \sum_h\sum_{ \tau_H \in\Dc_h^{\mathrm{g}} } & \log\frac{\Db_{\theta^*}^{ \rho }(\tau_H)}{\Db_{ \theta }^{ \rho  }(\tau_H)} \\
    & = \sum_{ \tau_H \in\Dc  }  \log \Db_{\theta^*}^{ \rho }(\tau_H) - \sum_{ \tau_H \in\Dc }\log\Db_{ \theta }^{ \rho  }(\tau_H) + \sum_{ \tau_H \notin\Dc^{\mathrm{g}} }\log\Db_{ \theta }^{ \rho  }(\tau_H) - \sum_{ \tau_H \notin\Dc^{\mathrm{g}} }\log\Db_{ \theta^* }^{ \rho  }(\tau_H)\\
    &\leq \sum_{ \tau_H \notin\Dc^{\mathrm{g}} }\log\Db_{ \theta }^{ \rho  }(\tau_H) - \sum_{ \tau_H \notin\Dc^{\mathrm{g}} }\log\Db_{ \theta^* }^{ \rho  }(\tau_H) + \beta \\
    & \overset{(a)}\leq  \sum_{ \tau_H \notin\Dc^{\mathrm{g}} }\log\Db_{ \bar{\theta} }^{ \rho  }(\tau_H) - \sum_{ \tau_H \notin\Dc^{\mathrm{g}} }\log\Db_{ \theta^* }^{ \rho  }(\tau_H) + \beta \\
    & \overset{(b)}\leq 2\beta,
\end{align*}
where $(a)$ is due to the definition of $\bar{\theta}$, and $(b)$ follows from \Cref{proposition: log likelihood of true model is large}.

Then, due to \Cref{proposiiton: empirical distance less than log likelihood difference}, we have

\begin{align*}
     \sum_h\sum_{\tau_h\in\Dc_h^{\mathrm{g}}}  \left\| \Db_{ \theta }^{\rho} (\cdot|\tau_h) - \Db_{ \theta^* }^{\rho} (\cdot|\tau_h) \right\|_1^2  &\leq  6\sum_h\sum_{ \tau_H \in\Dc_h^{\mathrm{g}} }\log\frac{\Db_{\theta^*}^{ \rho }(\tau_H)}{\Db_{ \theta}^{ \rho  }(\tau_H)} + 31\log\frac{\mathcal{N}_{\varepsilon}(\Theta)}{\delta} \\
     &\leq 12\beta + \beta\\
     &\leq13\beta.
\end{align*}

Similarly, due to \Cref{proposition: hellinger distance less than log likelihood distance}

\begin{align*}
    \mathtt{D}_{\mathtt{H}}^2\left( \Db_{ \theta }^{\rho} , \Db_{ \theta^* }^{\rho}   \right) & \leq \frac{1}{|\Dc^{\mathrm{g}}|} \left(  \sum_h\sum_{ \tau_H \in\Dc_h^{\mathrm{g}} } \frac{1}{2}\log\frac{\Db_{\theta^*}^{ \rho }(\tau_H)}{\Db_{ \theta}^{ \rho  }(\tau_H)} + 2\log\frac{\mathcal{N}_{\varepsilon}(\Theta)}{\delta} \right) \\
    &\leq \frac{1}{|\Dc^{\mathrm{g}}|} \left( \beta + \beta\right)\\
    &\leq  4\beta / N,
\end{align*}
where the last inequality follows from \Cref{lemma:good samples enough}.
    
\end{proof}

\if{0}
\begin{proof}
We index $\omega_{h-1} = (o_{h},a_h,\ldots,o_H,a_H)$ by $i$, and $\tau_{h-1}$ by $j$. In addition, we denote $\hat{\mathbf{m}}(\omega_{h})^{\top}\left(\hat{\Mbf}_h (o_h,a_h) - \Mbf_h^*(o_h,a_h) \right)  $ by $w_{i}^{\top}$, denote $ \bar{ \psi}^* (\tau_{h-1})$ by $x_{j}$, and denote $\pi(\omega_{h-1}|\tau_{h-1})$ by $\pi_{i|j}$. Then, we have
    \begin{align*}
        \sum_{\tau_H}& \left| \hat{\mathbf{m}} (\omega_h)^{\top} \left(\hat{\Mbf}_h (o_h,a_h) - \Mbf_h^*(o_h,a_h) \right) \psi^* (\tau_{h-1}) \right| \pi(\tau_H) \\
        & =  \sum_{i }\sum_{j }|w_{i}^{\top}x_{j}|\pi_{i|j}\Pb_{ \theta^* }^{\pi}(j)\\
        & =  \sum_{j }  \sum_{i } (\pi_{i|j}\cdot\mathtt{sgn}(w_{i}^{\top}x_{j})\cdot w_{i})^{\top} x_{j}\cdot\Pb_{\theta^*}^{\pi}(j)\\
        & =  \sum_{j} \left( \sum_{i} \pi_{i|j} \cdot \mathtt{sgn}(w_{i}^{\top}x_{j} )\cdot w_{i} \right)^{\top} x_{j}\cdot\Pb_{ \theta^* }^{\pi}(j)\\
        &\leq \mathop{\Eb}_{j\sim\Pb_{\theta^*}^{\pi}}\left[ \big\| x_j \big\|_{\Lambda_{h-1}^{-1}}  \sqrt{ \left\|\sum_{i} \pi_{i|j} \cdot \mathtt{sgn}(w_{i}^{\top}x_{j} )\cdot w_{i} \right\|_{ \Lambda_{h-1} }^2 } \right],
    \end{align*}
where $\Lambda_{h-1} = \lambda_0 I + \frac{K}{H} \Eb_{\tau_{h-1}\sim\Pb_{\theta^*}^{\pi^b}} \left[ \bar{\psi}^*(\tau_{h-1}) \bar{\psi}^*(\tau_{h-1})^{\top} \right]$ and $\lambda_0$ will be determined later. We fix $\tau_{h-1} = j_0$ and aim to analyze the coefficient of $\|x_{j_0}\|_{\Lambda_{h-1}^{-1}}$. We have
\begin{align*}
    &\left\| \sum_{i} \pi_{i|j_0} \cdot \mathtt{sgn}(w_{i}^{\top}x_{j_0} )\cdot w_{i} \right\|_{\Lambda_{h-1} }^2\\
    &\quad =  \underbrace{\lambda_0 \left\| \sum_{i} \pi_{i|j_0} \cdot \mathtt{sgn}(w_{i}^{\top}x_{j_0} )\cdot w_{i} \right\|_2^2}_{I_1} + \underbrace{\frac{K}{H}  \mathop{\Eb}_{j\sim\Pb_{\theta^*}^{\pi^b}} \left[ \left(\sum_{i} \pi_{i|j_0} \cdot \mathtt{sgn}(w_{i}^{\top}x_{j_0} )\cdot w_{i}\right)^{\top} x_{j} \right]^2  }_{I_2}.
\end{align*}

For the first term $I_1$, we have
\begin{align*}
    \sqrt{I_1}  & = \sqrt{\lambda_0} \max_{x\in\mathbb{R}^{d_{h-1}}: \|x\|_2=1} \left|\sum_{i} \pi_{i|j_0}\mathtt{sgn}(w_i^{\top}x_{j_0})w_i^{\top}  x\right|\\
    &\quad \leq \sqrt{\lambda_0}\max_{x\in\mathbb{R}^{d_{h-1}}:\|x\|_2=1} \sum_{\omega_{h-1}} \left| \mathbf{m}^*(\omega_h)^{\top} \left( \hat{\Mbf}_h (o_h,a_h) - \Mbf_h^*(o_h,a_h) \right)   x \right| \pi(\omega_{h-1}|j_0)\\
    &\quad \leq \sqrt{\lambda_0}\max_{x\in\mathbb{R}^{d_{h-1}}:\|x\|_2=1} \sum_{\omega_{h-1}}\left| \hat{\mathbf{m}} (\omega_h)^{\top} \hat{\Mbf}_h (o_h,a_h)   x \right| \pi(\omega_{h-1}|j_0) \\
    &\quad\quad + \sqrt{\lambda_0}\max_{x\in\mathbb{R}^{d_{h-1}}:\|x\|_2=1} \sum_{\omega_{h-1}}\left| \hat{\mathbf{m}}(\omega_h)^{\top}  \Mbf_h^*(o_h,a_h)   x \right| \pi(\omega_{h-1}|j_0) \\
    & \overset{(a)}\leq  \frac{\sqrt{d\lambda_0}}{\gamma}  + \frac{\sqrt{\lambda_0}}{\gamma}\max_{x\in\mathbb{R}^{d_{h-1}}:\|x\|_2=1} \sum_{o_h,a_h}\left\| \hat{\Mbf}_h (o_h,a_h)   x \right\|_1 \pi(a_h|o_h,j_0)  \\
    &\overset{(b)}\leq \frac{2Q_A\sqrt{d\lambda_0}}{\gamma^2},
\end{align*}
where $(a)$ follows from \Cref{assmp:well-condition}, and $(b)$ follows from \Cref{prop: well-condition PSR M}.

For the second term $I_2$,  we have
\begin{align*}
    I_2  &\leq \frac{K}{H}  \mathop{\Eb}_{\tau_{h-1}\sim\Pb_{\theta^*}^{\pi^b}} \left[\left( \sum_{\omega_{h-1}} \left| \hat{\mathbf{m}} (\omega_{h})^{\top}\left(\hat{\Mbf}_h (o_h,a_h) -  \Mbf_h^*(o_h,a_h) \right)\bar{\psi}^* (\tau_{h-1}) \right| \pi(\omega_{h-1}|j_0) \right)^2  \right] \\
    & \leq  \frac{K}{H} \mathop{\Eb}_{\tau_{h-1}\sim\Pb_{\theta^*}^{\pi^b}}  \left[ \left( \sum_{\omega_{h-1}} \left| \hat{\mathbf{m}}(\omega_h)^{\top} \left( \hat{\Mbf}_h (o_h,a_h) \bar{ \hat{\psi}} (\tau_{h-1}) - \Mbf_h^*(o_h,a_h)  \bar{\psi}^*(\tau_{h-1}) \right) \right| \pi(\omega_{h-1}|j_0) \right. \right.  \\
    &\hspace{3cm} +   \left. \left. \sum_{\omega_{h-1}}\left| \hat{\mathbf{m}} (\omega_h)^{\top}  \hat{\Mbf}_h(o_h,a_h) \left(\bar{\hat{\psi}}  (\tau_{h-1}) -  \bar{\psi}^*(\tau_{h-1}) \right) \right| \pi(\omega_{h-1}|j_0) \right)^2 \right]   \\
    & \overset{(a)}\leq \frac{K}{H} \mathop{\Eb}_{\tau_{h-1}\sim\Pb_{\theta^*}^{\pi^b} } \left[  \left( \frac{1}{\gamma} \sum_{o_h,a_h} \left\| \Pb_{ \hat{\theta} } (o_h|\tau_{h-1})\bar{\hat{\psi}}_{h} (\tau_{h}) - \Pb_{\theta}(o_h|\tau_{h-1}) \bar{\psi}_{h}^*(\tau_{h}) \right\|_1   \pi(a_h|o_h,j_0) \right. \right.\\
    & \hspace{3cm} +  \left. \left. \frac{1}{\gamma}       \left\| \bar{\hat{\psi}} (\tau_{h-1}) -  \bar{\psi}^*(\tau_{h-1})  \right\|_1  \right)^2  \right] \\
    & =  \frac{K}{H\gamma^2} \mathop{\Eb}_{\tau_{h-1}\sim\Pb_{\theta^*}^{\pi^b}}  \left[\left(  \sum_{o_h,a_h}  \sum_{\ell=1}^{|\mathcal{Q}_h|} \left| \Pb_{ \hat{\theta} }(\mathbf{o}_h^{\ell},o_h|\tau_{h-1},a_h,\mathbf{a}_h^{\ell}) - \Pb_{ \theta^* }(\mathbf{o}_h^{\ell},o_h|\tau_{h-1},a_h,\mathbf{a}_h^{\ell}) \right|\pi(a_h|o_h,j_0) \right.    \right.\\
    & \hspace{3cm} +  \left. \left.   \sum_{\ell=1}^{|\mathcal{Q}_{h-1}|}   \left| \Pb_{ \hat{\theta} }(\mathbf{o}_{h-1}^{\ell} | \tau_{h-1}, \mathbf{a}_{h-1}^{\ell} ) - \Pb_{ \theta^* } (\mathbf{o}_{h-1}^{\ell} | \tau_{h-1}, \mathbf{a}_{h-1}^{\ell} )  \right|    \right)^2 \right] \\
    &\leq \frac{K}{H\gamma^2} \mathop{\Eb}_{\tau_{h-1}\sim\Pb_{\theta^*}^{\pi^b}}  \left[  \left( \ \sum_{\mathbf{a}_{h-1}\in \mathcal{Q}_h^{\exp} }   \sum_{\omega_{h-1}^o }\left|\Pb_{ \hat{\theta} }(\omega^o_{h-1}|\tau_{h-1}, \mathbf{a}_{h-1} ) - \Pb_{\theta^*}(\omega^o_{h-1}|\tau_{h-1}, \mathbf{a}_{h-1} ) \right|    \right)^2  \right] \\
    &\overset{(b)}\leq \frac{ K }{ H \iota^2 \gamma^2} \mathop{\Eb}_{\tau_{h-1}\sim\Pb_{\theta^*}^{\pi^b}} \left[\mathtt{D}_{\TV}^2\left(    \Pb_{\hat{\theta} }^{ \pi^b }(\omega_{h-1} | \tau_{h-1}  ) , \Pb_{ \theta^* }^{ \pi^b } (\omega_{h-1} | \tau_{h-1}  )  \right) \right]\\
    &\overset{(c)}\leq \frac{64K}{H\iota^2\gamma^2}\mathtt{D}_{\mathtt{H}}^2 \left(\Pb_{\hat{\theta}}^{\pi^b}(\tau_H), \Pb_{\theta^*}^{\pi^b}(\tau_H) \right)\\
    &\overset{(d)}\lesssim \frac{ \hat{\beta}}{H\iota^2\gamma^2},
\end{align*} 
where $(a)$ follows from \Cref{assmp:well-condition} and \Cref{eqn:Mpsi is conditional probability}, $(b)$ follows because $\pi^b(\mathbf{a}_{h-1})\geq\iota$ for all $\mathbf{a}_{h-1}\in\mathcal{Q}_{h-1}^{\exp}$, $(c)$ follows from \Cref{lemma:TV and hellinger}, and $(d)$ follows from \Cref{lemma:offline robust MLE guarantee}.

Thus, by choosing $\lambda_0 = \frac{\gamma^4}{4Q_A^2d}$, we have
\begin{align*}
        \sum_{\tau_H}& \left| \hat{\mathbf{m}}(\omega_h)^{\top} \left(\hat{\Mbf}_h (o_h,a_h) - \Mbf_h^*(o_h,a_h) \right) \psi^* (\tau_{h-1}) \right| \pi(\tau_H) \\
        &\lesssim \sqrt{\frac{\hat{\beta}}{H\iota^2\gamma^2}} \mathop{\Eb}_{\tau_{h-1}\sim \Pb_{ \theta^* }^{\pi}}\left[  \left\|\bar{\psi}^* (\tau_{h-1})\right\|_{  \Lambda_{h-1}^{-1}} \right].
\end{align*}
Following from \Cref{prop: TV distance less than estimation error}, we conclude that 
\begin{align*}
    \mathtt{D}_{\TV}&\left(  \Pb_{\theta^*}^{\pi}(\tau_H) , \Pb_{\hat{\theta}}^{\pi}(\tau_H)\right) \lesssim \min \left\{\sqrt{\frac{\hat{\beta}}{H\iota^2\gamma^2}} \sum_{h=1}^H \mathop{\Eb}_{\tau_{h-1}\sim\Pb_{\theta^*}^{\pi}} \left[ \     \left\|\bar{\psi}^* (\tau_{h-1})\right\|_{  \Lambda_{h-1}^{-1}}  \right] ,2 \right\}.
\end{align*}
\end{proof}
\fi

\section{Proof of \Cref{thm:robust PSR}.}\label{sec:proof of thm1}
In this section, we provide the full proof for \Cref{thm:robust PSR}. Recall that $\Ec_{o} = \Ec_{\log}\cap \Ec_{\omega}\cap\Ec_{\mathrm{H}}$, where   $\Ec_{\log}$, $\Ec_{\omega}$, and $\Ec_{\mathrm{H}}$, defined in \Cref{proposition: log likelihood of true model is large}, \Cref{proposiiton: empirical distance less than log likelihood difference}, \Cref{proposition: hellinger distance less than log likelihood distance}, respectively. 

Recall that we make the following assumption.
\begin{assumption}\label{assm: low rank}
    The nominal model $\theta^*$ has rank $r$.
\end{assumption}
Thus, we have the following lemma.
\begin{lemma}[Theorem C.1 in \cite{liu2022optimistic}]
    Any low-rank decision-making problem $\theta$ with core tests $\{\mathcal{Q}_h\}_{h=1}^H$ admits a (self-consistent) predictive state representation $\theta = \{\phi_h,\Mbf_h\}_{h=1}^H$ given core tests $\{\mathcal{Q}_h\}_{h=0}^{H-1}$, such that for any $\tau_h\in\mathcal{H}_h, \omega_h\in\Omega_h$, we have
\begin{align*}
    \left\{
    \begin{aligned}
    & \psi(\tau_h) = \Mbf_h(o_h,a_h)\cdots\Mbf_1(o_1,a_1)\psi_0,\\
    & [\psi(\tau_h)]_{\ell} = \Pc^{\theta}(\mathbf{q}_{h,\ell}^{\mathrm{o}}, \tau_h^{\mathrm{o}} | \tau_h^{\mathrm{a}}, \mathbf{q}_{h,\ell}^{\mathrm{a}}), \\
    & \mbf(\omega_h)^{\top} = \phi_H^{\top}\Mbf_H(o_H,a_H)\cdots\Mbf_{h+1}(o_{h+1},a_{h+1}),\\
    & \sum_{o_{h+1}}\phi_{h+1}^{\top}\Mbf_{h+1}(o_{h+1},a_{h+1}) = \phi_h^{\top},\\
    &\Pc^{\theta}(o_h,\ldots,o_1 | a_1,\dots, a_h) = \phi_h^{\top} \psi(\tau_h),
    \end{aligned}
    \right.
\end{align*}
where $\Mbf_h: \mathcal{O}\times\mathcal{A}\rightarrow\mathbb{R}^{d \times d }$, $\phi_h\in\mathbb{R}^{d }$, and $\psi_0\in\mathbb{R}^{d }$.
In particular, $\Theta$ is the set of all low-rank problems with the same core tests $\{\mathcal{Q}_h\}_{h=1}^H$.
\end{lemma}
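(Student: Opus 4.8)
The plan is to construct the claimed self-consistent PSR explicitly from the dynamic matrices $\{\mathbb{M}_h^\theta\}$ and then verify the five displayed identities one by one, essentially reproducing the argument behind Theorem~C.1 of \cite{liu2022optimistic}.

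\textbf{Step 1 (prediction vectors and the $\mbf,\phi$ coefficients).} For each relevant time step $h$ and each history $\tau_h\in\Hc_h$, define $\psi(\tau_h)\in\mathbb{R}^d$ to be the $\tau_h$-th row of the core-test submatrix $\mathbb{M}_h^\theta[\Qc_h]$, i.e. $[\psi(\tau_h)]_\ell=\Pc^\theta(\mathbf{q}_{h,\ell}^{\mathrm o},\tau_h^{\mathrm o}\mid\tau_h^{\mathrm a},\mathbf{q}_{h,\ell}^{\mathrm a})$; this is the second displayed identity. Since $\mathbb{M}_h^\theta$ and its submatrix $\mathbb{M}_h^\theta[\Qc_h]$ both have rank $r$, the $d$ columns indexed by the core tests span the entire column space of $\mathbb{M}_h^\theta$, so for every future $\omega_h\in\Omega_h$ there is a (not necessarily unique) vector $\mbf(\omega_h)\in\mathbb{R}^d$ with the $\omega_h$-column of $\mathbb{M}_h^\theta$ equal to $\mathbb{M}_h^\theta[\Qc_h]\,\mbf(\omega_h)$, which read off entrywise is exactly $\Pc^\theta(\omega_h^{\mathrm o},\tau_h^{\mathrm o}\mid\tau_h^{\mathrm a},\omega_h^{\mathrm a})=\mbf(\omega_h)^\top\psi(\tau_h)$ for all $\tau_h$. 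Summing over the future observations and setting $\phi_h=\sum_{o_{h+1:H}}\mbf(\omega_h)$ gives $\phi_h^\top\psi(\tau_h)=\Pc^\theta(\tau_h^{\mathrm o}\mid\tau_h^{\mathrm a})$, which is the fifth identity; the action-independence constraint on $\Pc$ recalled in the preliminaries guarantees $\phi_h$ does not depend on the choice of $\omega_h^{\mathrm a}$.

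\textbf{Step 2 (update matrices and the two recursions).} Fix $(o_h,a_h)$. For each core test $\mathbf{q}_{h,\ell}\in\Qc_h$ the concatenation $(o_h,a_h,\mathbf{q}_{h,\ell})$ is a valid future at step $h-1$, so, because $\mathbb{M}_{h-1}^\theta$ has rank $r$ with core tests $\Qc_{h-1}$, the corresponding column of $\mathbb{M}_{h-1}^\theta$ lies in the span of its $\Qc_{h-1}$-columns. Collecting the resulting coefficient vectors for $\ell=1,\dots,d$ as the columns of a matrix $\Mbf_h(o_h,a_h)\in\mathbb{R}^{d\times d}$ and reading off entries yields $\psi(\tau_h)=\Mbf_h(o_h,a_h)\,\psi(\tau_{h-1})$ whenever $\tau_h=(\tau_{h-1},o_h,a_h)$; unrolling this down to the start gives the first identity $\psi(\tau_h)=\Mbf_h(o_h,a_h)\cdots\Mbf_1(o_1,a_1)\psi_0$ with $\psi_0$ the initial prediction vector. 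For the backward identity, redefine $\mbf(\omega_h)$ from $h=H$ downward by $\mbf(\omega_h)^\top:=\phi_H^\top\Mbf_H(o_H,a_H)\cdots\Mbf_{h+1}(o_{h+1},a_{h+1})$; writing $\omega_h=(o_{h+1},a_{h+1},\omega_{h+1})$ and using the forward recursion shows $\mbf(\omega_{h+1})^\top\Mbf_{h+1}(o_{h+1},a_{h+1})\psi(\tau_h)=\mbf(\omega_{h+1})^\top\psi(\tau_{h+1})=\Pc^\theta(\omega_h^{\mathrm o},\tau_h^{\mathrm o}\mid\tau_h^{\mathrm a},\omega_h^{\mathrm a})$ for all $\tau_h$, so the redefined $\mbf$ still satisfies the entrywise identity of Step~1, giving the third identity. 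The normalization then follows by evaluating both sides on $\psi(\tau_h)$: $\sum_{o_{h+1}}\phi_{h+1}^\top\Mbf_{h+1}(o_{h+1},a_{h+1})\psi(\tau_h)=\sum_{o_{h+1}}\phi_{h+1}^\top\psi(\tau_{h+1})=\sum_{o_{h+1}}\Pc^\theta(o_{h+1},\tau_h^{\mathrm o}\mid\tau_h^{\mathrm a},a_{h+1})=\Pc^\theta(\tau_h^{\mathrm o}\mid\tau_h^{\mathrm a})=\phi_h^\top\psi(\tau_h)$, which is the fourth identity.

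\textbf{Main obstacle (self-consistency).} The one delicate point is that when $d>r$ the prediction vectors $\{\psi(\tau_h)\}_{\tau_h}$ span only an $r$-dimensional subspace of $\mathbb{R}^d$, so each equality of linear functionals above holds a priori only on that subspace, and $\Mbf_h(o,a)$, $\phi_h$, $\mbf(\omega_h)$ are not uniquely pinned down; one must select one fixed family $\{\Mbf_h(o,a)\}$ that simultaneously realizes the forward recursion for $\psi$, the backward recursion for $\mbf$, and the $\phi$-normalization, which is precisely the ``self-consistency'' asserted in the statement. This can be arranged by fixing a basis of that subspace (e.g.\ via $r$ linearly independent prediction vectors), choosing the representatives $\mbf$ and $\Mbf_h$ to act correctly on it, and extending arbitrarily off it; carrying out this bookkeeping is exactly the content of Theorem~C.1 in \cite{liu2022optimistic}, which we invoke, and which delivers all five displayed identities at once.
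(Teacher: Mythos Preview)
The paper does not prove this lemma at all: it is stated as a direct citation of Theorem~C.1 in \cite{liu2022optimistic} and invoked without argument, so there is nothing to compare your proposal against on the paper's side. Your sketch is a reasonable outline of the construction behind that cited theorem (and you yourself ultimately defer to it for the self-consistency bookkeeping), but for the purposes of this paper the ``proof'' is simply the citation.
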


First, we derive the following guarantee for the difference of robust values.

\begin{corollary}\label{lemma: sub gap PSR}
    Under the event $\Ec_{o}$, we have that 
    \[ 
    \left. 
    \begin{aligned}
    &\left|V_{B_{\Pc}^1(\theta^*), R}^{\pi} - V_{B_{\Pc}^1(\hat{\theta}),R}^{\pi} \right|\leq   V_{\hat{\theta}, \hat{b}}^{\pi}, \\
    &\left|V_{B_{\Pc}^2(\theta^*), R}^{\pi} - V_{B_{\Pc}^2(\hat{\theta}),R}^{\pi} \right|\leq   C_{\Pc}^{\KL}V_{\hat{\theta}, \hat{b}}^{\pi}, \\
    &\left|V_{B_{\Tc}^1(\theta^*), R}^{\pi} - V_{B_{\Tc}^1(\hat{\theta}),R}^{\pi} \right|\leq   2C_BV_{\hat{\theta}, \hat{b}}^{\pi}, \\
    &\left|V_{B_{\Tc}^2(\theta^*), R}^{\pi} - V_{B_{\Tc}^2(\hat{\theta}), R}^{\pi} \right|\leq 2C_BC_{\Tc}^{\KL} V_{\hat{\theta}, \hat{b}}^{\pi}.
    \end{aligned}
    \right.
    \]
\end{corollary}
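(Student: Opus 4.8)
The plan is to combine three ingredients already established in the excerpt: the simulation lemmas for the robust values (Lemmas \ref{lemma:Sim P TV}, \ref{lemma:Sim P KL}, \ref{lemma:Sim T}), the MLE guarantee on the distilled dataset (Lemma \ref{lemma:offline robust MLE guarantee}), and the low-rank/PSR structure that converts a bound on conditional-distribution errors over $\Dc_h^{\mathrm{g}}$ into a bound on $\|\Db_{\theta^*}^\pi - \Db_{\hat\theta}^\pi\|_1$ for an arbitrary target policy $\pi$. The first step is to invoke the simulation lemmas, which already give, for each of the four uncertainty-set types, an inequality of the form $|V_{B_{\mathrm{u}}^i(\theta^*),R}^\pi - V_{B_{\mathrm{u}}^i(\hat\theta),R}^\pi| \le c_{\mathrm{u}}^i \,\|\Db_{\theta^*}^\pi - \Db_{\hat\theta}^\pi\|_1$, where $c_{\Pc}^1 = 1$, $c_{\Pc}^2 = 3\exp(1/\eta^*) = C_{\Pc}^{\KL}$, $c_{\Tc}^1 = 2C_B$, and $c_{\Tc}^2 = 2C_B C_{\Tc}^{\KL}$. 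So the entire corollary reduces to showing the single bound $\|\Db_{\theta^*}^\pi - \Db_{\hat\theta}^\pi\|_1 \le V_{\hat\theta,\hat b}^\pi$ under the event $\Ec_o$.

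The second step, which is the heart of the argument, is to establish that $\ell_1$-distance bound via the PSR machinery. Using the telescoping/PSR decomposition of $\|\Db_{\theta^*}^\pi - \Db_{\hat\theta}^\pi\|_1$ (as in Theorem 4.7-type arguments of \citet{liu2022optimistic} and \citet{huang2023provably}), one writes the total-variation error as a sum over $h$ of terms of the form $\Eb_{\tau_{h}\sim\Db_{\theta^*}^\pi}\big[\|\bar\psi^*(\tau_h)\|_{\hat U_h^{-1}}\cdot(\text{local estimation error at step }h)\big]$, where the local error at step $h$ is controlled — after using Assumption \ref{assmp:well-condition} and the minimum core-action-sequence probability $\iota$ — by the conditional-distribution discrepancy $\|\Db_{\hat\theta}^\rho(\cdot|\tau_h) - \Db_{\theta^*}^\rho(\cdot|\tau_h)\|_1$ aggregated over the distilled dataset $\Dc_h^{\mathrm{g}}$. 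Lemma \ref{lemma:offline robust MLE guarantee} bounds $\sum_h\sum_{\tau_h\in\Dc_h^{\mathrm{g}}}\|\Db_{\hat\theta}^\rho(\cdot|\tau_h) - \Db_{\theta^*}^\rho(\cdot|\tau_h)\|_1^2 \le 13\beta$, and the empirical second-moment matrix $\hat U_h$ concentrates around the population covariance under $\rho$ (this is where $\lambda = H^2 Q_A^2$ and the high-probability event enter). Putting these together and applying Cauchy–Schwarz, one gets $\|\Db_{\theta^*}^\pi - \Db_{\hat\theta}^\pi\|_1 \lesssim \alpha\,\Eb_{\tau_h\sim\Db_{\theta^*}^\pi}\big[\sqrt{\sum_h\|\bar\psi^*(\tau_h)\|_{\hat U_h^{-1}}^2}\big]$ with $\alpha$ as chosen in the theorem statement. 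Finally, because $\hat\theta$ is well-conditioned and the distilled features $\bar{\hat\psi}(\tau_h)$ are close to $\bar\psi^*(\tau_h)$ (a consequence of the MLE guarantee plus Assumption \ref{assmp:well-condition}), one can replace $\bar\psi^*$ by $\bar{\hat\psi}$ up to constants, recognizing the right-hand side as exactly $\Eb_{\tau_H\sim\Db_{\hat\theta}^\pi}[\hat b(\tau_H)] = V_{\hat\theta,\hat b}^\pi$, possibly after also using $\hat b \le 1$ so that the $\min\{\cdot,1\}$ truncation in \eqref{eqn: LCB bonus} is harmless.

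The third step is bookkeeping: assemble the four cases by multiplying the $\ell_1$-bound by the respective simulation-lemma constant $c_{\mathrm{u}}^i$, noting the factor-2 in the $\Tc$-type cases is already absorbed into the stated constants. The main obstacle I anticipate is \textbf{step two}: carefully justifying the passage from the aggregate squared conditional-TV bound of Lemma \ref{lemma:offline robust MLE guarantee} (which lives on the distilled dataset under $\rho$) to a per-policy $\ell_1$-bound under an arbitrary $\pi$, which requires (i) the PSR identity relating model errors to errors on core tests, (ii) control of the inverse second-moment matrix $\hat U_h^{-1}$ via matrix concentration with the chosen $\lambda$, (iii) switching between $\bar\psi^*$ and $\bar{\hat\psi}$, and (iv) handling the $\min\{\cdot,1\}$ truncation. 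Each of these is routine individually but the chaining of constants (the $C_0$, $\alpha$, $\iota$, $\gamma$, $Q_A$, $d$, $r$ dependencies) is where errors would creep in; I would defer those explicit constant computations to the later lemmas cited in the theorem's proof sketch and present step two at the level of ``combining Lemma \ref{lemma:offline robust MLE guarantee} with the PSR decomposition of \citet{liu2022optimistic} yields $\|\Db_{\theta^*}^\pi - \Db_{\hat\theta}^\pi\|_1 \le V_{\hat\theta,\hat b}^\pi$.''
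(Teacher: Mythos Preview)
Your proposal is correct and takes essentially the same approach as the paper: reduce via the simulation lemmas (Lemmas~\ref{lemma:Sim P TV}, \ref{lemma:Sim P KL}, \ref{lemma:Sim T}) to the single bound $\|\Db_{\theta^*}^\pi - \Db_{\hat\theta}^\pi\|_1 \le V_{\hat\theta,\hat b}^\pi$, and then establish that bound via the PSR machinery. The paper's own proof simply cites Lemma~9 and Corollary~2 of \citet{huang2023provably} for this latter inequality rather than spelling out the PSR telescoping, matrix concentration, and feature-swapping steps you describe in your second step, but the content is the same.
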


\begin{proof}
    The proof is finished by directly combining \Cref{lemma:Sim P TV}, \Cref{lemma:Sim P KL}, \Cref{lemma:Sim T} and Lemma 9 and Corollary 2 in \citet{huang2023provably}.
\end{proof}

\if{0}

Recall that $\Ec_{o} = \Ec_{\log}\cap \Ec_{\omega}\cap\Ec_{\mathrm{H}}$ where the three events $\Ec_{\log}$, $\Ec_{\omega}$, and $\Ec_{\mathrm{H}}$ are defined in \Cref{proposition: log likelihood of true model is large}, \Cref{proposiiton: empirical distance less than log likelihood difference}, \Cref{proposition: hellinger distance less than log likelihood distance}, respectively. We have $\Pb(\Ec_{o})\geq 1-\delta.$

\begin{lemma}[Lemma 9 and Corollary 2 in \citet{huang2023provably}]\label{lemma:bonus}
    Under event $\Ec_o$, for any reward $R$, we have,
\begin{align*}
    \left\|  \Db_{\theta^*}^{\pi}  - \Db_{\hat{\theta}}^{\pi} \right\|_1 \leq V_{ \hat{\theta}, \hat{b} }^{\pi},
\end{align*}
where $\hat{b}^k(\tau_H) = \min\left\{  \alpha \sqrt{ \sum_{h }  \left\|\bar{\hat{\psi}}_h (\tau_h) \right\|^2_{ (\hat{U}_h )^{-1}} } , 1 \right\}$,  $\alpha = \sqrt{\frac{ 4\lambda H Q_A^2d }{\gamma^4} + \frac{ 7\beta }{ \iota^2 \gamma^2}},$ and 
\(
    \hat{U}_h = \lambda I + \sum_{\tau_h\in\Dc_h^{\mathrm{g}}} \bar{\hat{\psi}}(\tau_h)\bar{\hat{\psi}}(\tau_h)^{\top}.
\)

\end{lemma}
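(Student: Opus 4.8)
The plan is to obtain this bound from the standard predictive-state-representation (PSR) simulation lemma together with the maximum-likelihood guarantee of \Cref{lemma:offline robust MLE guarantee}, following \citet{liu2022optimistic,huang2023provably} but with every data-dependent covariance built from the distilled dataset $\Dc^{\mathrm{g}}$ instead of the full $\Dc$. First I would invoke the PSR representation (Theorem~C.1 in \citet{liu2022optimistic}) to expand, for any policy $\pi$, the difference $\Db_{\theta^*}^{\pi}(\tau_H)-\Db_{\hat\theta}^{\pi}(\tau_H)$ into a telescoping sum over $h$ whose $h$-th summand involves $\hat{\mbf}(\omega_h)^{\top}\big(\hat{\Mbf}_h(o_h,a_h)-\Mbf_h^*(o_h,a_h)\big)\psi^*(\tau_{h-1})$. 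Taking absolute values against $\pi$, replacing $|\cdot|$ by a sign-weighted inner product, applying Cauchy--Schwarz in the geometry of $\hat U_{h-1}$ and then a second Cauchy--Schwarz across the $H$ time steps yields
\[
\big\|\Db_{\theta^*}^{\pi}-\Db_{\hat\theta}^{\pi}\big\|_1 \;\le\; \mathop{\Eb}_{\tau_H\sim\Db_{\theta^*}^{\pi}}\!\bigg[\Big(\textstyle\sum_{h}\big\|\bar\psi^*(\tau_{h-1})\big\|^2_{\hat U_{h-1}^{-1}}\Big)^{1/2}\Big(\textstyle\sum_{h} W_h^2(\tau_{h-1})\Big)^{1/2}\bigg],
\]
where $W_h(\tau_{h-1})=\big\|\sum_{\omega_{h-1}}\pi(\omega_{h-1}|\tau_{h-1})\,\mathtt{sgn}(\cdot)\,(\hat{\Mbf}_h-\Mbf_h^*)^{\top}\hat{\mbf}(\omega_h)\big\|_{\hat U_{h-1}}$ is a ``width'' factor.

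Next I would show $\sum_h W_h^2(\tau_{h-1})\le\alpha^2$ uniformly in $\tau_{h-1}$. Splitting $\hat U_{h-1}=\lambda I+\sum_{\tau\in\Dc_{h-1}^{\mathrm{g}}}\bar{\hat\psi}(\tau)\bar{\hat\psi}(\tau)^{\top}$ writes $W_h^2=I_{1,h}+I_{2,h}$. For the regularizer part $I_{1,h}$, crude triangle-inequality estimates of $\hat{\mbf}(\omega_h)^{\top}\hat{\Mbf}_h$ and $\hat{\mbf}(\omega_h)^{\top}\Mbf_h^*$ through \Cref{assmp:well-condition}, plus an $O(Q_A/\gamma)$ bound on the relevant operator norm of $\hat{\Mbf}_h$, give $I_{1,h}=O(\lambda Q_A^2 d/\gamma^4)$, so $\sum_h I_{1,h}=O(\lambda H Q_A^2 d/\gamma^4)$. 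For the data part, expanding each summand over $\tau\in\Dc_{h-1}^{\mathrm{g}}$, using that $\hat{\Mbf}_h\bar{\hat\psi}$ and $\Mbf_h^*\bar\psi^*$ are conditional probabilities of core tests under $\hat\theta$ and $\theta^*$, and applying \Cref{assmp:well-condition} once more reduces $\sum_h I_{2,h}$ to $O\!\big(\tfrac{1}{\iota^2\gamma^2}\big)\sum_h\sum_{\tau_h\in\Dc_h^{\mathrm{g}}}\big\|\Db_{\hat\theta}^{\rho}(\cdot|\tau_h)-\Db_{\theta^*}^{\rho}(\cdot|\tau_h)\big\|_1^2$, the $1/\iota^2$ coming from $\rho(\mathbf q_{h,\ell}^{\mathrm{a}}|\tau_h)\ge\iota$ on core action sequences; by \Cref{lemma:offline robust MLE guarantee} this double sum is at most $13\beta$. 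Hence $\big(\sum_h W_h^2\big)^{1/2}\le\alpha=\sqrt{4\lambda H Q_A^2 d/\gamma^4+7\beta/(\iota^2\gamma^2)}$.

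Plugging this in, $\|\Db_{\theta^*}^{\pi}-\Db_{\hat\theta}^{\pi}\|_1\le\alpha\,\mathop{\Eb}_{\tau_H\sim\Db_{\theta^*}^{\pi}}\big[(\sum_h\|\bar\psi^*(\tau_{h-1})\|^2_{\hat U_{h-1}^{-1}})^{1/2}\big]$. Since the left side never exceeds $2$, the per-trajectory bonus may be truncated at $1$ without loss; since $\bar{\hat\psi}$ and $\bar\psi^*$ agree up to the $\ell_1$-error controlled by \Cref{lemma:offline robust MLE guarantee} on $\Dc^{\mathrm{g}}$, both the weighting matrix $\hat U_{h-1}$ and the feature inside the norm may be replaced by their $\hat\theta$-counterparts; and the outer expectation may be transported from $\Db_{\theta^*}^{\pi}$ to $\Db_{\hat\theta}^{\pi}$ at a cost absorbed by the truncation. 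This rewrites the right side exactly as $\mathop{\Eb}_{\hat\theta}^{\pi}[\hat b(\tau_H)]=V_{\hat\theta,\hat b}^{\pi}$ with $\hat b$ as defined.

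The main obstacle is the data-part estimate. Because $\hat\theta$ is produced by \emph{vanilla} (unconstrained) MLE and only the distilled data $\Dc^{\mathrm{g}}$ enters, one must (i) manufacture exactly the sum of squared conditional $\ell_1$-errors over $\bigcup_h\Dc_h^{\mathrm{g}}$ that \Cref{lemma:offline robust MLE guarantee} controls --- which relies on $\Db_{\hat\theta}^{\rho}(\tau_h)\ge p_{\min}$ holding on $\Dc^{\mathrm{g}}$ --- and (ii) carry out the $\bar\psi^*\!\to\!\bar{\hat\psi}$ and $\theta^*\!\to\!\hat\theta$ substitutions inside the weighted norm and the outer expectation without circularity; both steps hinge on the $p_{\min}$ threshold defining $\Dc^{\mathrm{g}}$ and on \Cref{assmp:well-condition}.
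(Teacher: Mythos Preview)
The paper does not give its own proof of this lemma; the statement is explicitly labeled as ``Lemma 9 and Corollary 2 in \citet{huang2023provably}'' and is invoked by citation only (indeed, the block containing it sits inside an \verb|\if{0}| guard). Your sketch is a faithful reconstruction of the argument from that cited work, adapted so that every empirical covariance is built from the distilled dataset $\Dc^{\mathrm{g}}$: telescoping the PSR representation, Cauchy--Schwarz in the $\hat U_{h}$-geometry to separate the prediction-feature norm from a width factor, bounding the width by splitting $\hat U_h=\lambda I+\text{data}$ and controlling the $\lambda I$ piece via \Cref{assmp:well-condition} and the data piece via the conditional-$\ell_1$ MLE guarantee of \Cref{lemma:offline robust MLE guarantee}, and finally swapping $\bar\psi^*\!\to\!\bar{\hat\psi}$ and the outer expectation from $\theta^*$ to $\hat\theta$ at a TV-distance cost absorbed by the truncation at $1$. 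Your identification of the two points that require care---producing exactly the sum $\sum_h\sum_{\tau_h\in\Dc_h^{\mathrm{g}}}\|\Db_{\hat\theta}^\rho(\cdot|\tau_h)-\Db_{\theta^*}^\rho(\cdot|\tau_h)\|_1^2$ that \Cref{lemma:offline robust MLE guarantee} controls (relying on $\Db_{\hat\theta}^\rho(\tau_h)\ge p_{\min}$ on $\Dc^{\mathrm{g}}$), and the non-circular feature/model swap---matches where the technical work in the cited reference lies.
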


\begin{lemma}[Lemma 10 in \citet{huang2023provably}]\label{lemma:offline empirical bonus and true bonus}
 Under event $\Ec^o$, for any $\pi$, we have
    \begin{align*}
        \Eb_{\theta^*}^{\pi}& \left[\sqrt{ \sum_{h=0}^{H-1} \left\|\bar{\hat{\psi}}_h (\tau_h) \right\|^2_{\left(\hat{U}_h\right)^{-1}} } \right] 
        \lesssim   \frac{ \sqrt{ r\beta }}{ \iota \sqrt{\lambda}}    \sum_{h=0}^{H-1} \Eb_{\theta^*}^{\pi} \left[\left\|\bar{\psi}_h^* (\tau_h) \right\|_{\left(U_h\right)^{-1}} \right] + \frac{HQ_A}{ \sqrt{\lambda} }   \left\|  \Db_{\theta^*}^{\pi} - \Db_{\hat{\theta}}^{\pi} \right\|_1,
    \end{align*}

where $\hat{U}_h = \lambda I + \sum_{\tau_h\in\Dc_h^{\mathrm{g}}} \bar{ \psi }^*(\tau_h)\bar{ \psi }^*(\tau_h)^{\top}.
$
\end{lemma}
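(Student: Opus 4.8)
The plan is to transfer both the feature vector and the covariance matrix inside the elliptical potential from the estimated quantities $(\bar{\hat{\psi}}_h,\hat U_h)$ to the true ones $(\bar{\psi}^*_h,U_h)$, paying an additive error controlled by the MLE guarantee. (I first note an evident typo: the matrix inside the norm on the left is $\hat U_h=\lambda I+\sum_{\tau_h\in\Dc_h^{\mathrm{g}}}\bar{\hat{\psi}}(\tau_h)\bar{\hat{\psi}}(\tau_h)^{\top}$ built from \emph{estimated} features, as in \Cref{eqn: LCB bonus}, whereas the matrix on the right is $U_h=\lambda I+\sum_{\tau_h\in\Dc_h^{\mathrm{g}}}\bar{\psi}^*(\tau_h)\bar{\psi}^*(\tau_h)^{\top}$ built from \emph{true} features.) First I would remove the square root by subadditivity, $\sqrt{\sum_h a_h^2}\le\sum_h|a_h|$, giving $\Eb_{\theta^*}^{\pi}[\sqrt{\sum_h\|\bar{\hat{\psi}}_h\|^2_{\hat U_h^{-1}}}]\le\sum_h\Eb_{\theta^*}^{\pi}[\|\bar{\hat{\psi}}_h(\tau_h)\|_{\hat U_h^{-1}}]$, and then split each summand by the triangle inequality in the $\hat U_h^{-1}$-norm as $\|\bar{\hat{\psi}}_h(\tau_h)\|_{\hat U_h^{-1}}\le\|\bar{\psi}^*_h(\tau_h)\|_{\hat U_h^{-1}}+\|\bar{\hat{\psi}}_h(\tau_h)-\bar{\psi}^*_h(\tau_h)\|_{\hat U_h^{-1}}$, producing exactly two terms to be matched against the two terms on the right-hand side.

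For the feature-error term I would use $\hat U_h\succeq\lambda I$, so that $\|\bar{\hat{\psi}}_h(\tau_h)-\bar{\psi}^*_h(\tau_h)\|_{\hat U_h^{-1}}\le\lambda^{-1/2}\|\bar{\hat{\psi}}_h(\tau_h)-\bar{\psi}^*_h(\tau_h)\|_2$. By the PSR representation (Theorem C.1 of \citet{liu2022optimistic}, recalled above) each coordinate $[\bar{\psi}(\tau_h)]_\ell$ equals $\Pc^\theta(\mathbf{q}_{h,\ell}^{\mathrm{o}}\mid\tau_h,\mathbf{q}_{h,\ell}^{\mathrm{a}})$, the probability of the $\ell$-th core-test observation outcome given the history and its core action sequence; hence the coordinatewise feature error is bounded by conditional-observation-probability differences grouped over the at most $Q_A$ distinct core action sequences. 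Taking $\Eb_{\theta^*}^{\pi}$ weights these by $\Db_{\theta^*}^{\pi}(\tau_h)$, and since we integrate against the model-induced history distribution the conditional differences combine into differences of joint model probabilities; summing over the $\le Q_A$ core action sequences and over the $H$ steps then yields $\sum_h\Eb_{\theta^*}^{\pi}[\lambda^{-1/2}\|\bar{\hat{\psi}}_h-\bar{\psi}^*_h\|_2]\lesssim\frac{HQ_A}{\sqrt\lambda}\|\Db_{\theta^*}^{\pi}-\Db_{\hat\theta}^{\pi}\|_1$, the second right-hand-side term. Notably no factor $1/\iota$ enters here, precisely because we integrate against the $\pi$-induced measure rather than re-weighting by the behavior policy.

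The remaining term $\sum_h\Eb_{\theta^*}^{\pi}[\|\bar{\psi}^*_h(\tau_h)\|_{\hat U_h^{-1}}]$ must be converted from $\hat U_h$ to $U_h$, and this is the main obstacle. The plan is to write $\hat U_h=U_h+\Delta_h$ with $\Delta_h=\sum_{\tau_h\in\Dc_h^{\mathrm{g}}}(\bar{\hat{\psi}}\bar{\hat{\psi}}^{\top}-\bar{\psi}^*\bar{\psi}^{*\top})$, use the rank-one identity $aa^{\top}-bb^{\top}=ae^{\top}+ea^{\top}-ee^{\top}$ with $e=\bar{\hat{\psi}}-\bar{\psi}^*$ so that the perturbation is driven entirely by the per-sample feature errors $\|e(\tau_h)\|_2$, and then feed this into the resolvent identity $\hat U_h^{-1}-U_h^{-1}=-U_h^{-1}\Delta_h\hat U_h^{-1}$ followed by Cauchy--Schwarz. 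Converting each $\|e(\tau_h)\|_2$ into a conditional $\ell_1$ error via the core-test interpretation now costs a factor $1/\iota$ (unavoidable here, since the sum runs over $\tau_h\in\Dc_h^{\mathrm{g}}$, i.e.\ over behavior-policy samples, so one must extract the core action sequence from the behavior conditional, whose probability is $\ge\iota$), and the aggregate $\sum_h\sum_{\tau_h\in\Dc_h^{\mathrm{g}}}\|e(\tau_h)\|_2^2\lesssim\iota^{-2}\sum_h\sum_{\tau_h\in\Dc_h^{\mathrm{g}}}\|\Db_{\hat\theta}^{\rho}(\cdot|\tau_h)-\Db_{\theta^*}^{\rho}(\cdot|\tau_h)\|_1^2\lesssim\beta/\iota^2$ is controlled by the MLE guarantee (\Cref{lemma:offline robust MLE guarantee}). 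Using that the true prediction features lie in an $r$-dimensional subspace (so an elliptical-potential/trace estimate over $\Dc_h^{\mathrm{g}}$ contributes the effective dimension $r$, not the ambient $d$) and boundedness of features from \Cref{assmp:well-condition}, this should give $\|\bar{\psi}^*_h(\tau_h)\|_{\hat U_h^{-1}}\lesssim\big(1+\tfrac{\sqrt{r\beta}}{\iota\sqrt\lambda}\big)\|\bar{\psi}^*_h(\tau_h)\|_{U_h^{-1}}$, hence the first right-hand-side term after taking expectations and summing over $h$.

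The delicate point, where I expect to spend the most care, is this covariance transfer: one must show that the estimated-feature Gram matrix $\hat U_h$ does not contract the $\bar{\psi}^*_h$-direction much more than the true-feature Gram matrix $U_h$, uniformly over histories, while keeping the dependence on the aggregate feature error at the order $\sqrt\beta$ and extracting the rank $r$ rather than $d$. The two reductions (subadditivity and the triangle-inequality split) are routine, the feature-error term is a direct application of the core-test probability identity, and everything then combines on the event $\Ec_o$, on which both the MLE guarantee and the required covariance comparison hold simultaneously.
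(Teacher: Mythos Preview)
The paper does not supply its own proof of this lemma: it is quoted as ``Lemma 10 in \citet{huang2023provably}'' and invoked without argument in the proof of \Cref{thm:robust PSR} (the block where it appears as a standalone statement is in fact wrapped in \verb|\if{0}...\fi| and thus does not compile). There is therefore no in-paper proof to compare your proposal against.

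That said, your outline---subadditivity to remove the square root, a triangle-inequality split in the $\hat U_h^{-1}$-norm, the crude $\hat U_h\succeq\lambda I$ bound on the feature-difference piece combined with the core-test probability interpretation of $\bar\psi$, and a resolvent-based covariance transfer from $\hat U_h$ to $U_h$---is the standard skeleton for results of this type, and you correctly flag the covariance transfer as the crux. One caution on that step: the \emph{pointwise} multiplicative bound $\|\bar\psi^*_h(\tau_h)\|_{\hat U_h^{-1}}\lesssim\bigl(1+\tfrac{\sqrt{r\beta}}{\iota\sqrt\lambda}\bigr)\|\bar\psi^*_h(\tau_h)\|_{U_h^{-1}}$ you aim for is stronger than what the resolvent identity together with an aggregate control $\sum_{\tau\in\Dc_h^{\mathrm g}}\|e(\tau)\|_2^2\lesssim\beta/\iota^2$ typically delivers; one usually bounds the \emph{expectation} $\Eb_{\theta^*}^{\pi}[\|\bar\psi^*_h\|_{\hat U_h^{-1}}]$ directly (exploiting Cauchy--Schwarz against the data sum and a trace argument on the $r$-dimensional span of the true features) rather than obtaining a uniform-in-$\tau_h$ inequality. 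If you insist on the pointwise route you will need an operator-norm bound on $\Delta_h$, which the squared-error aggregate does not immediately give.
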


The following lemma provides an explicit upper bound on the total variation distance between the estimated model and the true model.
\begin{lemma}[Lemma 8 in \citet{huang2023provably}]\label{lemma: offline TV distance < true bonus}
Under event $\Ec_o$, for any policy $\pi$, we have
\begin{align}
    \left\|  \Db_{\theta^*}^{\pi}  - \Db_{\hat{\theta}}^{\pi} \right\|_1 \lesssim \sqrt{\frac{ \beta }{H\iota^2\gamma^2}} \sum_{h=1}^H  \Eb_{\theta^*}^{\pi} \left[ \left\|\bar{\psi}^* (\tau_{h})\right\|_{  \Lambda_{h}^{-1}}  \right],
\end{align}
where $\Lambda_{h} = \lambda_0 I + \frac{N}{2H} \Eb_{\theta^*}^{\rho} \left[ \bar{\psi}^*(\tau_{h}) \bar{\psi}^*(\tau_{h})^{\top} \right] $, and $\lambda_0 = \frac{\gamma^4}{4Q_A^2d}$.  
\end{lemma}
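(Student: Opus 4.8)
The plan is to reproduce the argument of Lemma~8 in \citet{huang2023provably}, the only change being that our vanilla-MLE bound (\Cref{lemma:offline robust MLE guarantee}) replaces the constrained-MLE bound used there. By \Cref{assm: low rank} both $\theta^*$ and $\hat\theta$ lie in $\Theta$ and hence admit the self-consistent predictive-state representation recalled from \citet{liu2022optimistic}, so that $\Db_{\theta}^{\pi}(\tau_H) = \Pc^{\theta}(o_{1:H}\mid a_{1:H})\prod_{h=1}^{H}\pi(a_h\mid x_h)$ with $\Pc^{\theta}(o_{1:H}\mid a_{1:H}) = \mbf^{\theta}(\omega_h)^{\top}\Mbf_h^{\theta}(o_h,a_h)\cdots\Mbf_1^{\theta}(o_1,a_1)\psi_0$ for every $h$. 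A standard hybrid/telescoping expansion that swaps $\Mbf_h^*$ for $\hat\Mbf_h$ one factor at a time then yields
\[
\bigl\|\Db_{\theta^*}^{\pi}-\Db_{\hat\theta}^{\pi}\bigr\|_1 \;\le\; \sum_{h=1}^{H}\ \sum_{\tau_H}\bigl|\hat{\mbf}(\omega_h)^{\top}\bigl(\hat\Mbf_h(o_h,a_h)-\Mbf_h^*(o_h,a_h)\bigr)\psi^*(\tau_{h-1})\bigr|\,\pi(\tau_H),
\]
so it suffices to bound the $h$-th summand by $O\bigl(\sqrt{\beta/(H\iota^2\gamma^2)}\bigr)\,\Eb_{\theta^*}^{\pi}\bigl[\|\bar\psi^*(\tau_{h-1})\|_{\Lambda_{h-1}^{-1}}\bigr]$ and sum.

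For a fixed $h$, I would factor the normalization out of the feature, $\psi^*(\tau_{h-1}) = \Pc^{\theta^*}(o_{1:h-1}\mid a_{1:h-1})\,\bar\psi^*(\tau_{h-1})$, which rewrites the $h$-th summand as $\Eb_{\tau_{h-1}\sim\Db_{\theta^*}^{\pi}}\bigl[\sum_{\omega_{h-1}}\pi(\omega_{h-1}\mid\tau_{h-1})\,|w(\omega_{h-1})^{\top}\bar\psi^*(\tau_{h-1})|\bigr]$ with $w(\omega_{h-1}) = (\hat\Mbf_h(o_h,a_h)-\Mbf_h^*(o_h,a_h))^{\top}\hat{\mbf}(\omega_h)$. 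Applying the sign-vector identity $\sum_i|w_i^{\top}x|\pi_i = \bigl(\sum_i\pi_i\,\mathrm{sgn}(w_i^{\top}x)\,w_i\bigr)^{\top}x$ and Cauchy--Schwarz in the $\Lambda_{h-1}$-inner product extracts exactly $\|\bar\psi^*(\tau_{h-1})\|_{\Lambda_{h-1}^{-1}}$ and leaves the dual factor $\bigl\|\sum_i\pi_i\,\mathrm{sgn}(\cdot)\,w_i\bigr\|_{\Lambda_{h-1}}$. Splitting $\Lambda_{h-1} = \lambda_0 I + \tfrac{N}{2H}\Eb_{\theta^*}^{\rho}[\bar\psi^*\bar\psi^{*\top}]$ produces two pieces: the regularizer piece $I_1 = \lambda_0\|\cdot\|_2^2$, which I bound by $O(Q_A^2 d\lambda_0/\gamma^4)$ using \Cref{assmp:well-condition} together with the uniform $\ell_1$-boundedness of $\hat\Mbf_h(o_h,a_h)$ on the unit ball for well-conditioned PSRs, so the choice $\lambda_0 = \gamma^4/(4Q_A^2 d)$ makes $\sqrt{I_1} = O(1)$.

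The crux is the data piece $I_2 = \tfrac{N}{2H}\,\Eb_{\tau_{h-1}\sim\Db_{\theta^*}^{\rho}}\bigl[\bigl((\sum_i\pi_i\,\mathrm{sgn}(\cdot)\,w_i)^{\top}\bar\psi^*(\tau_{h-1})\bigr)^2\bigr]$: I would first bound the integrand by a future-weighted $\ell_1$ discrepancy between the two one-step updates $\hat\Mbf_h(o_h,a_h)\bar{\hat{\psi}}(\tau_h)$ and $\Mbf_h^*(o_h,a_h)\bar\psi^*(\tau_h)$ (plus a $\|\bar{\hat{\psi}}(\tau_h)-\bar\psi^*(\tau_h)\|_1$ term), then invoke $\gamma$-well-conditioning to collapse each such discrepancy to an $\ell_1$ error over the probabilities of the core tests under $\hat\theta$ and $\theta^*$, and finally use the lower bound $\rho(\qv_{h,\ell}^{\mathrm a}\mid\tau_{h-1})\ge\iota$ on the behavior mass of the core action sequences to upgrade this to $\tfrac{1}{\iota^2\gamma^2}\,\mathtt{D}_{\mathtt{H}}^2(\Db_{\hat\theta}^{\rho}(\cdot\mid\tau_{h-1}),\Db_{\theta^*}^{\rho}(\cdot\mid\tau_{h-1}))$ via \Cref{lemma:TV and hellinger}. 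Taking expectation over $\tau_{h-1}\sim\Db_{\theta^*}^{\rho}$ and applying the population Hellinger bound $\mathtt{D}_{\mathtt{H}}^2(\Db_{\hat\theta}^{\rho},\Db_{\theta^*}^{\rho})\le 4\beta/N$ from \Cref{lemma:offline robust MLE guarantee} cancels the $N$ and gives $I_2 = O\bigl(\beta/(H\iota^2\gamma^2)\bigr)$. Combining $\sqrt{I_1+I_2} = O\bigl(\sqrt{\beta/(H\iota^2\gamma^2)}\bigr)$ with the Cauchy--Schwarz estimate and summing over $h$ finishes the proof. The main obstacle is exactly this conversion of a bilinear form in the dynamic-matrix estimation errors into a quantity controlled by the MLE Hellinger bound: it is what forces the appearance of the factors $1/\gamma^2$ (from well-conditioning) and $1/\iota^2$ (from the coverage of core action sequences under $\rho$), and why this lemma must use the \emph{population} Hellinger guarantee --- together with \Cref{lemma:good samples enough} ensuring $|\Dc^{\mathrm g}|\ge N/2$, hence the factor $N/(2H)$ inside $\Lambda_h$ --- rather than the empirical MLE error on $\Dc^{\mathrm g}_h$.
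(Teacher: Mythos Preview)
Your proposal is correct and follows essentially the same route as the paper's (and \citet{huang2023provably}'s) proof: the hybrid/telescoping expansion into $\sum_h\sum_{\tau_H}|\hat{\mbf}(\omega_h)^{\top}(\hat\Mbf_h-\Mbf_h^*)\psi^*(\tau_{h-1})|\pi(\tau_H)$, the sign-vector/Cauchy--Schwarz extraction of $\|\bar\psi^*(\tau_{h-1})\|_{\Lambda_{h-1}^{-1}}$, the $I_1$/$I_2$ split of the dual norm, the $\gamma$-well-conditioning bound $\sqrt{I_1}\le 2Q_A\sqrt{d\lambda_0}/\gamma^2$ (which fixes $\lambda_0=\gamma^4/(4Q_A^2d)$), and the control of $I_2$ by inserting $\hat\Mbf_h\bar{\hat\psi}(\tau_{h-1})$, collapsing to core-test probability differences, using $\rho(\qv_{h,\ell}^{\mathrm a}\mid\tau_{h-1})\ge\iota$, and invoking the population Hellinger bound $\mathtt{D}_{\mathtt{H}}^2(\Db_{\hat\theta}^{\rho},\Db_{\theta^*}^{\rho})\lesssim\beta/N$ from \Cref{lemma:offline robust MLE guarantee}. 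The only adaptation relative to \citet{huang2023provably} --- swapping their constrained-MLE guarantee for the vanilla-MLE guarantee of \Cref{lemma:offline robust MLE guarantee} --- is exactly what you flag.
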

\fi

\begin{theorem}[Restatement of \Cref{thm:robust PSR}]
Suppose \Cref{assm: low rank} and \Cref{assmp:well-condition} hold. The type-I coefficient of the behavior policy $\rho$ against the robust policy $\pi^*$ is finite. Let  $Q_A = \max_h|\mathcal{Q}_h^A|$ be the maximum number of core action sequences. Let $\iota = \min_{\mathbf{q}_{h,\ell}^{\mathrm{a}}} \rho(\mathbf{q}_{h,\ell}^{\mathrm{a}}) $, $p_{\min} = \frac{\delta}{N(|\mathcal{O}||\mathcal{A}|)^{2H}}$, $\varepsilon= \frac{p_{\min}}{NH}$, $ \beta = O(\log(\mathcal{N}_{\varepsilon}(\Theta)/\delta))$, $ \lambda = H^2Q_A^2$, and $ \alpha  = O\left(\frac{  Q_A\sqrt{dH} }{\gamma^2}\sqrt{\lambda} + \frac{  \sqrt{\beta} }{ \iota \gamma} \right)$. Then, with probability at least $1-\delta$, the output $\hat{\pi}$ under different settings of \Cref{alg: robust PSR} satisfies that
    \begin{align*}
        &V_{B_{\Pc}^1(\theta^*), R}^{\pi^*} - V_{B_{\Pc}^1(\theta^*), R}^{ \hat{\pi}} \lesssim \frac{H^2 Q_A\sqrt{d\beta}}{\iota \gamma^2} \left(\sqrt{r} + \frac{Q_A\sqrt{H}}{\gamma} \right) \sqrt{ \frac{C_{\mathrm{p}}(\pi^*|\rho)}{N} }, \\
        &V_{B_{\Pc}^2(\theta^*), R}^{\pi^*} - V_{B_{\Pc}^2(\theta^*), R}^{ \hat{\pi}} \lesssim \frac{H^2 Q_A\sqrt{d\beta}}{\iota \gamma^2} \left(\sqrt{r} + \frac{Q_A\sqrt{H}}{\gamma} \right) C_{\Pc}^{\KL}\sqrt{ \frac{C_{\mathrm{p}}(\pi^*|\rho)}{N} }, \\
        &V_{B_{\Tc}^1(\theta^*), R}^{\pi^*} - V_{B_{\Tc}^1(\theta^*), R}^{ \hat{\pi}} \lesssim \frac{H^2 Q_A\sqrt{d\beta}}{\iota \gamma^2} \left(\sqrt{r} + \frac{Q_A\sqrt{H}}{\gamma} \right) C_B\sqrt{ \frac{C_{\mathrm{p}}(\pi^*|\rho)}{N} }, \\
        & V_{B_{\Tc}^2(\theta^*), R}^{\pi^*} - V_{B_{\Tc}^2(\theta^*), R}^{ \hat{\pi}} \lesssim \frac{H^2 Q_A\sqrt{d\beta}}{\iota \gamma^2} \left(\sqrt{r} + \frac{Q_A\sqrt{H}}{\gamma} \right) C_BC_{\Tc}^{\KL}\sqrt{ \frac{C_{\mathrm{p}}(\pi^*|\rho)}{N} }.
    \end{align*}
\end{theorem}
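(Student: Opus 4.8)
The plan is to execute the three-step programme of the proof sketch: turn the vanilla-MLE estimate into a uniform control of the one-step conditional errors on the distilled data, convert this via the simulation lemmas and the PSR bonus machinery into an LCB statement for the robust value, and close with the standard pessimism decomposition, reducing everything to bounding one scalar by the type-I concentrability coefficient.

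\textbf{Step 1: a good event carrying the MLE control.} I would work on the event $\Ec_o=\Ec_{\log}\cap\Ec_\omega\cap\Ec_{\mathrm H}$ (from \Cref{proposition: log likelihood of true model is large}, \Cref{proposiiton: empirical distance less than log likelihood difference}, \Cref{proposition: hellinger distance less than log likelihood distance}) intersected with the event $|\Dc^{\mathrm g}|\ge N/2$ of \Cref{lemma:good samples enough}; the choices $p_{\min}=\delta/(N(|\mathcal O||\mathcal A|)^{2H})$ and $\varepsilon=p_{\min}/(NH)$ meet the hypotheses of all of these, so after a union bound and rescaling $\delta$ this event has probability at least $1-\delta$. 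On it, \Cref{lemma:offline robust MLE guarantee} gives the vanilla-MLE estimator $\hat\theta$ the two bounds $\sum_h\sum_{\tau_h\in\Dc_h^{\mathrm g}}\|\Db^{\rho}_{\hat\theta}(\cdot|\tau_h)-\Db^{\rho}_{\theta^*}(\cdot|\tau_h)\|_1^2\le 13\beta$ and $\mathtt{D}_{\mathtt H}^2(\Db^{\rho}_{\hat\theta},\Db^{\rho}_{\theta^*})\le 4\beta/N$, which are exactly what the bonus analysis consumes.

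\textbf{Step 2: LCB property and pessimism.} I would invoke \Cref{lemma: sub gap PSR} — the composition of \Cref{lemma:Sim P TV}, \Cref{lemma:Sim P KL}, \Cref{lemma:Sim T} with Lemma~9 and Corollary~2 of \citet{huang2023provably}, using \Cref{assm:dual P eta}/\Cref{assm:dual T lambda} in the KL cases to keep $\eta^*,\lambda^*>0$ — to get, on $\Ec_o$ and for every $\pi$, $|V^{\pi}_{B^i_{\mathrm u}(\theta^*),R}-V^{\pi}_{B^i_{\mathrm u}(\hat\theta),R}|\le C^i_{\mathrm u}V^{\pi}_{\hat\theta,\hat b}$ with $C^1_{\Pc}=1$, $C^2_{\Pc}=C_{\Pc}^{\KL}$, $C^1_{\Tc}=2C_B$, $C^2_{\Tc}=2C_BC_{\Tc}^{\KL}$. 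Thus $g(\pi):=V^{\pi}_{B^i_{\mathrm u}(\hat\theta),R}-C^i_{\mathrm u}V^{\pi}_{\hat\theta,\hat b}$ satisfies $g(\pi)\le V^{\pi}_{B^i_{\mathrm u}(\theta^*),R}\le g(\pi)+2C^i_{\mathrm u}V^{\pi}_{\hat\theta,\hat b}$. With $\hat\pi=\arg\max_\pi g(\pi)$ the output of \Cref{alg: robust PSR} and $\pi^*$ the optimal robust policy of the setting, the chain $V^{\pi^*}_{B^i_{\mathrm u}(\theta^*),R}-V^{\hat\pi}_{B^i_{\mathrm u}(\theta^*),R}\le V^{\pi^*}_{B^i_{\mathrm u}(\theta^*),R}-g(\hat\pi)\le V^{\pi^*}_{B^i_{\mathrm u}(\theta^*),R}-g(\pi^*)\le 2C^i_{\mathrm u}V^{\pi^*}_{\hat\theta,\hat b}$ reduces the theorem to bounding $V^{\pi^*}_{\hat\theta,\hat b}$.

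\textbf{Step 3: bounding the bonus by type-I concentrability.} Since $\hat b\in[0,1]$, $V^{\pi^*}_{\hat\theta,\hat b}\le V^{\pi^*}_{\theta^*,\hat b}+\|\Db^{\pi^*}_{\hat\theta}-\Db^{\pi^*}_{\theta^*}\|_1\le\alpha\,\Eb^{\pi^*}_{\theta^*}\big[\sqrt{\textstyle\sum_h\|\bar{\hat\psi}(\tau_h)\|^2_{(\hat U_h)^{-1}}}\big]+\|\Db^{\pi^*}_{\hat\theta}-\Db^{\pi^*}_{\theta^*}\|_1$. I would then use Lemma~10 of \citet{huang2023provably} to pass from $\hat U_h$ to the true-feature Gram matrices $U_h=\lambda I+\sum_{\tau_h\in\Dc_h^{\mathrm g}}\bar\psi^*(\tau_h)\bar\psi^*(\tau_h)^\top$, and Lemma~8 of \citet{huang2023provably} to bound $\|\Db^{\pi^*}_{\hat\theta}-\Db^{\pi^*}_{\theta^*}\|_1\lesssim\sqrt{\beta/(H\iota^2\gamma^2)}\sum_h\Eb^{\pi^*}_{\theta^*}[\|\bar\psi^*(\tau_h)\|_{\Lambda_h^{-1}}]$ with $\Lambda_h=\lambda_0 I+\tfrac{N}{2H}\Eb^{\rho}_{\theta^*}[\bar\psi^*(\tau_h)\bar\psi^*(\tau_h)^\top]$, $\lambda_0=\gamma^4/(4Q_A^2d)$. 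Everything is now of the form $\Eb^{\pi^*}_{\theta^*}[\|\bar\psi^*(\tau_h)\|_{M_h^{-1}}]\le\sqrt{\operatorname{tr}(M_h^{-1}\Sigma_h^{\pi^*})}$ with $\Sigma_h^{\pi}=\Eb^{\pi}_{\theta^*}[\bar\psi^*(\tau_h)\bar\psi^*(\tau_h)^\top]$, and the definition of $C_{\mathrm p}(\pi^*|\rho)$ is precisely the operator inequality $\Sigma_h^{\pi^*}\preceq C_{\mathrm p}(\pi^*|\rho)\Sigma_h^\rho$, so $\operatorname{tr}(M_h^{-1}\Sigma_h^{\pi^*})\le C_{\mathrm p}(\pi^*|\rho)\operatorname{tr}(M_h^{-1}\Sigma_h^\rho)$. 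Since the prediction features $\bar\psi^*(\tau_h)$ live in a fixed $r$-dimensional subspace (the row space of $\mathbb M_h^{\theta^*}[\mathcal Q_h]$), $\Sigma_h^\rho$ has rank at most $r$; for $M_h=\Lambda_h$ one gets $\Sigma_h^\rho\preceq\tfrac{2H}{N}\Lambda_h$, hence $\operatorname{tr}(\Lambda_h^{-1}\Sigma_h^\rho)\le 2Hr/N$, and for $M_h=U_h$ I would prove the companion comparison $U_h\succeq c(N/H)\Sigma_h^\rho$ on $\Ec_o$, so $\operatorname{tr}(U_h^{-1}\Sigma_h^\rho)\lesssim Hr/N$. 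Plugging these into the two reductions, summing over $h$, and substituting $\lambda=H^2Q_A^2$, $\lambda_0=\gamma^4/(4Q_A^2d)$, $\alpha=O(Q_A\sqrt{dH}\,\gamma^{-2}\sqrt\lambda+\iota^{-1}\gamma^{-1}\sqrt\beta)$ should collapse to $V^{\pi^*}_{\hat\theta,\hat b}\lesssim C_0\sqrt{C_{\mathrm p}(\pi^*|\rho)/N}$ with $C_0=\frac{H^2Q_A\sqrt{rd\beta}}{\iota\gamma^2}\big(\sqrt r+\tfrac{Q_A\sqrt H}{\gamma}\big)$; combined with $V^{\pi^*}_{B^i_{\mathrm u}(\theta^*),R}-V^{\hat\pi}_{B^i_{\mathrm u}(\theta^*),R}\le 2C^i_{\mathrm u}V^{\pi^*}_{\hat\theta,\hat b}$ and the values of $C^i_{\mathrm u}$ (identity factor for $\Pc$-TV, and the extra $C_{\Pc}^{\KL}$, $C_B$, $C_BC_{\Tc}^{\KL}$ factors for the other three), this yields the four displayed bounds.

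\textbf{Main obstacle.} The delicate point is the empirical-to-population covariance comparison $U_h\succeq c(N/H)\Sigma_h^\rho$ on $\Ec_o$: the distillation step keeps only trajectories with $\Db^{\rho}_{\hat\theta}(\tau_H)\ge p_{\min}$, so membership in $\Dc_h^{\mathrm g}$ depends on $\hat\theta$ and the retained features are not i.i.d.\ draws from $\rho$. I expect to handle this by a matrix-Bernstein bound union-bounded over the bracketing net of $\Theta$ already used in \Cref{def:optimistic net}, plus absorbing the conditioning on an event of probability $\ge\tfrac12$ into constants, in the spirit of the proof of \Cref{lemma:good samples enough}. The remaining effort — tracking the exact powers of $H$, $Q_A$, $1/\gamma$, $1/\iota$ and the $r$-versus-$d$ split through Lemmas~8–10 of \citet{huang2023provably} so that the constant is exactly $C_0$ — is routine bookkeeping.
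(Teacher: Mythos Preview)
Your proposal follows essentially the same route as the paper's proof: both work on the event $\Ec_o$, invoke \Cref{lemma: sub gap PSR} to obtain the LCB property $|V^{\pi}_{B^i_{\mathrm u}(\theta^*),R}-V^{\pi}_{B^i_{\mathrm u}(\hat\theta),R}|\le C^i_{\mathrm u}V^{\pi}_{\hat\theta,\hat b}$, run the standard pessimism chain to reduce to $2C^i_{\mathrm u}V^{\pi^*}_{\hat\theta,\hat b}$, and then bound $V^{\pi^*}_{\hat\theta,\hat b}$ by combining Lemmas~8 and~10 of \citet{huang2023provably} with Cauchy and the operator inequality $\Sigma_h^{\pi^*}\preceq C_{\mathrm p}(\pi^*|\rho)\Sigma_h^{\rho}$.

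The only substantive divergence is your ``main obstacle'': you propose a matrix-Bernstein argument, union-bounded over the bracketing net, to establish $U_h\succeq c(N/H)\Sigma_h^\rho$. The paper does not carry out any explicit matrix concentration at this point; it simply passes in one line from the empirical true-feature matrix $U_h$ to the population matrix $\bar U_h=\lambda I+|\Dc_h^{\mathrm g}|\Sigma_h^\rho$, and then uses the deterministic comparison $\bar U_h\succeq\Lambda_h$ (valid since $|\Dc_h^{\mathrm g}|\ge N/(2H)$ and $\lambda\ge\lambda_0$) together with $\operatorname{tr}(\Lambda_h^{-1}\Sigma_h^\rho)\lesssim Hr/N$. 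So the paper treats the $U_h\to\bar U_h$ step as either absorbed in the statement of Lemma~10 of \citet{huang2023provably} or as a routine concentration it does not spell out; your matrix-Bernstein-over-the-net plan is a reasonable way to make this rigorous, though you should be careful that the distilled set $\Dc_h^{\mathrm g}$ is monotone under the optimistic bracket (i.e., $\Dc^{\mathrm g}(\hat\theta)\subseteq\Dc^{\mathrm g}(\bar\theta)$), which gives an \emph{upper} rather than lower bound on the empirical Gram matrix and so needs a small extra argument.
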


\begin{proof}

We first prove the result for $\Pc$-type uncertainty set with $\ell_1$ distance.

Recall that $\hat{\pi} = \arg\max_{\pi} \left\{  V_{B_{\Pc}^1(\hat{\theta}),R}^{\pi}  -  V_{\hat{\theta}, \hat{b}}^{\pi} \right\}$. Let $\pi^* = \arg\max_{\pi}V_{B_{\Pc}^1(\theta^*), R}^{\pi}$. Then, by utilizing \Cref{lemma: sub gap PSR}, the suboptimal gap of $\hat{\pi}$ can be upper bounded as follows.

\begin{align*}
    V_{B_{\Pc}^1(\theta^*), R}^{\pi^*} &- V_{B_{\Pc}^1(\theta^*), R}^{ \hat{\pi}} \\
    & \leq  V_{B_{\Pc}^1(\theta^*), R}^{\pi^*} - \left(V_{B_{\Pc}^1(\hat{\theta}), R}^{\pi^*} -  V_{\hat{\theta}, \hat{b}}^{\pi^*} \right) + \left(V_{B_{\Pc}^1(\hat{\theta}), R}^{\pi^*} -  V_{\hat{\theta}, \hat{b}}^{\pi^*} \right) - \left(V_{B_{\Pc}^1(\hat{\theta}), R}^{\hat{\pi}} -  V_{\hat{\theta}, \hat{b}}^{\hat{\pi}} \right) \\
    &\leq 2V_{\hat{\theta}, \hat{b}}^{\pi^*}.
\end{align*}

By Lemma 8 and Lemma 10 in \citet{huang2023provably}
we have
\begin{align*}
    V_{\hat{\theta}, \hat{b}}^{\pi^*}&\leq V_{\theta^*,\hat{b}}^{\pi^*} + \left\|  \Db_{\theta^*}^{\pi^*} - \Db_{\hat{\theta}}^{\pi^*} \right\|_1\\
    &\overset{(a)}\lesssim  \alpha\frac{ \sqrt{ r\beta }}{ \iota \sqrt{\lambda}}    \sum_{h=0}^{H-1} \Eb_{\theta^*}^{\pi} \left[\left\|\bar{\psi}_h^* (\tau_h) \right\|_{ U_h ^{-1}} \right] + \frac{\alpha HQ_A}{ \sqrt{\lambda} }   \left\|  \Db_{\theta^*}^{\pi} - \Db_{\hat{\theta}}^{\pi} \right\|_1 \\
    &\lesssim \alpha  \frac{ \sqrt{ r\beta }}{ \iota \sqrt{\lambda}}   \sum_{h=0}^H  \Eb_{\theta^*}^{\pi} \left[ \left\|\bar{\psi}^* (\tau_{h})\right\|_{ \bar{U}_{h}^{-1}}  \right]  +  \frac{ \alpha HQ_A}{ \sqrt{\lambda} }  \sqrt{\frac{ \beta }{H\iota^2\gamma^2}}  \sum_{h=0}^H  \Eb_{\theta^*}^{\pi} \left[ \left\|\bar{\psi}^* (\tau_{h})\right\|_{  \Lambda_{h}^{-1}}  \right],
\end{align*}
where $(a)$ is due to $\lambda<H^2Q_A^2$, $\alpha = \sqrt{\frac{ 4\lambda H Q_A^2d }{\gamma^4} + \frac{ 7\beta }{ \iota^2 \gamma^2}}$, and 
\begin{align*}
&\hat{U}_h = \lambda I + \sum_{\tau_h\in\Dc_h^{\mathrm{g}}} \bar{ \psi }^*(\tau_h)\bar{ \psi }^*(\tau_h)^{\top},\\
&\bar{U}_{h} = \lambda I + |\Dc_h^{\mathrm{g}}|\Eb_{\theta^*}^{\rho}\left[\bar{\psi}^*(\tau_h)\bar{\psi}^*(\tau_h)^{\top} \right], \\
&\Lambda_{h} = \lambda_0 I + \frac{N}{2H}\Eb_{\theta^*}^{\rho}\left[\bar{\psi}^*(\tau_h)\bar{\psi}^*(\tau_h)^{\top} \right],\\
&\lambda_0 = \frac{\gamma^4}{4Q_A^2d}.
\end{align*}

Recall that type-I concentrability coefficient implies that
\[   \Eb_{\theta^*}^{\pi^*}\left[\bar{\psi}^*(\tau_h)\bar{\psi}^*(\tau_h)^{\top} \right] \leq C_{\mathrm{p}}(\pi^*|\rho) \Eb_{\theta^*}^{\rho}\left[\bar{\psi}^*(\tau_h)\bar{\psi}^*(\tau_h)^{\top} \right]. \]

Thus, due to Cauchy's inequality, we have
\begin{align*}
     V_{B_{\Pc}^1(\theta^*), R}^{\pi^*} &- V_{B_{\Pc}^1(\theta^*), R}^{ \hat{\pi}}\\
     &\lesssim \alpha\frac{\sqrt{\beta}}{\iota\sqrt{\lambda}}\left(\sqrt{r} + \frac{Q_A\sqrt{H}}{\gamma} \right) \sum_h \sqrt{ C_{\mathrm{p}}(\pi^*|\rho) \Eb_{\theta^*}^{\rho}\left[ \left\|\bar{\psi}^* (\tau_{h})\right\|^2_{  \Lambda_{h}^{-1}}  \right] }\\
     &\leq \alpha H\frac{\sqrt{\beta}}{\iota\sqrt{\lambda}}\left(\sqrt{r} + \frac{Q_A\sqrt{H}}{\gamma} \right) \sqrt{ \frac{H  C_{\mathrm{p}}(\pi^*|\rho)}{N} }\\
     &\lesssim \frac{H^2 Q_A\sqrt{d\beta}}{\iota \gamma^2} \left(\sqrt{r} + \frac{Q_A\sqrt{H}}{\gamma} \right) \sqrt{ \frac{C_{\mathrm{p}}(\pi^*|\rho)}{N} }.
\end{align*}

For the other three inequalities, we follow the same argument as above and the proof is finished.

\end{proof}

\section{Proof of \Cref{thm:general robust RL}}\label{sec:proof alg 2}
We first provide the pseudo code for our proposed algorithm.

\begin{algorithm}[h]
\caption{Robust Offline RL}
\label{alg: robust}
\begin{algorithmic}[1]
\STATE {\bf Input:} offline dataset $\mathcal{D}$
 \STATE Construct confidence set $  \mathcal{C}$ according to \Cref{eqn: confidence set}.
\STATE Planning $\hat{\pi}$ such that
$\hat{\pi} = \arg\max_{ \pi \in \Pi} \min_{ \hat{\theta}  \in \mathcal{C} }   V_{B(\hat{\theta}), R}^{\pi}$.

\end{algorithmic}
\end{algorithm}

\begin{theorem}
Suppose the type-II coefficient of the behavior policy $\rho$ against the robust policy $\pi^*$ is finite. Let $\varepsilon= \frac{1}{NH}$, $ \beta = O(\log(\mathcal{N}_{\varepsilon}(\Theta)/\delta))$. Then, with probability at least $1-\delta$, the output $\hat{\pi}$ under different settings of \Cref{alg: robust} satisfies that
\begin{align*}
     &V_{B_{\Pc}^1(\theta^*), R}^{\pi^*} - V_{B_{\Pc}^1(\theta^*), R}^{\hat{\pi}}\lesssim H\sqrt{  \frac{C(\pi^*|\rho) \beta }{N}},\\
     &V_{B_{\Pc}^2(\theta^*), R}^{\pi^*} - V_{B_{\Pc}^2(\theta^*), R}^{\hat{\pi}}\lesssim HC_{\Pc}^{\KL}\sqrt{  \frac{C(\pi^*|\rho) \beta }{N}}, \\
     &V_{B_{\Pc}^1(\theta^*), R}^{\pi^*} - V_{B_{\Pc}^1(\theta^*), R}^{\hat{\pi}}\lesssim HC_B\sqrt{  \frac{C(\pi^*|\rho) \beta }{N}},\\
     &V_{B_{\Pc}^2(\theta^*), R}^{\pi^*} - V_{B_{\Pc}^2(\theta^*), R}^{\hat{\pi}}\lesssim HC_B C_{\Pc}^{\KL}\sqrt{  \frac{C(\pi^*|\rho) \beta }{N}}.
\end{align*}
    
\end{theorem}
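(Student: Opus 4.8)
The plan is to follow the \emph{double-pessimism} reduction and then convert a single per-model estimation error into the stated rate via the Type-II coefficient. Throughout, let $c_{\mathrm{u}}$ denote the multiplicative constant produced by the relevant simulation lemma, i.e.\ $c_{\mathrm{u}}=1$ for $B_{\Pc}^1$ (\Cref{lemma:Sim P TV}), $c_{\mathrm{u}}=C_{\Pc}^{\KL}$ for $B_{\Pc}^2$ (\Cref{lemma:Sim P KL}), and $c_{\mathrm{u}}=2C_B$, $c_{\mathrm{u}}=2C_BC_{\Tc}^{\KL}$ for $B_{\Tc}^1,B_{\Tc}^2$ (\Cref{lemma:Sim T}). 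The first step is the confidence-set analysis: with probability at least $1-\delta$ we want $\theta^*\in\mathcal{C}$, and every $\theta\in\mathcal{C}$ close to $\theta^*$ under the behavior distribution. I will re-run the concentration arguments of \Cref{sec:general MLE analysis}, but in a simpler form — since \Cref{alg: robust} uses the full dataset and needs no distillation, there is no $p_{\min}$, no $\Dc^{\mathrm{g}}$ and no need for \Cref{lemma:good samples enough} — to obtain, for every $\theta\in\mathcal{C}$, both $\mathtt{D}_{\mathtt{H}}^2(\Db_\theta^\rho,\Db_{\theta^*}^\rho)\lesssim\beta/N$ and a horizon-wise conditional bound $\sum_h\Eb_{\theta^*}^\rho[\|\Db_{\theta^*}^\rho(\omega_h|\tau_h)-\Db_\theta^\rho(\omega_h|\tau_h)\|_1^2]\lesssim\beta/N$, the latter being the population analogue of \Cref{proposiiton: empirical distance less than log likelihood difference}.

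Next comes the double-pessimism step. Let $g(\pi)=\min_{\theta\in\mathcal{C}}V_{B(\theta),R}^\pi$ be the objective maximized by $\hat\pi$. Since $\theta^*\in\mathcal{C}$, we have $g(\pi)\le V_{B(\theta^*),R}^\pi$ for every $\pi$, hence
\begin{align*}
V_{B(\theta^*),R}^{\pi^*}-V_{B(\theta^*),R}^{\hat\pi}\le V_{B(\theta^*),R}^{\pi^*}-g(\hat\pi)\le V_{B(\theta^*),R}^{\pi^*}-g(\pi^*)=\max_{\theta\in\mathcal{C}}\Big(V_{B(\theta^*),R}^{\pi^*}-V_{B(\theta),R}^{\pi^*}\Big),
\end{align*}
where the middle inequality uses $g(\hat\pi)\ge g(\pi^*)$. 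By the appropriate simulation lemma the right-hand side is at most $c_{\mathrm{u}}\max_{\theta\in\mathcal{C}}\|\Db_{\theta^*}^{\pi^*}-\Db_\theta^{\pi^*}\|_1$, so it remains to control this $\ell_1$ distance for $\theta\in\mathcal{C}$, evaluated along $\pi^*$ rather than along $\rho$.

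The last, and most delicate, step transfers the bound from $\rho$ to $\pi^*$ via the Type-II coefficient. I telescope $\|\Db_{\theta^*}^{\pi^*}-\Db_\theta^{\pi^*}\|_1\le\sum_{h=1}^{H-1}\Eb_{\theta^*}^{\pi^*}[\|\Tc_h^{\theta^*}(\cdot|\tau_h)-\Tc_h^\theta(\cdot|\tau_h)\|_1]$, change measure in each term via $\Eb_{\theta^*}^{\pi^*}[\,\cdot\,]=\Eb_{\theta^*}^\rho[\,\tfrac{\Db_{\theta^*}^{\pi^*}(\tau_h)}{\Db_{\theta^*}^\rho(\tau_h)}\,\cdot\,]$, and apply Cauchy--Schwarz so that the density-ratio factor contributes $\Eb_{\theta^*}^\rho[(\Db_{\theta^*}^{\pi^*}(\tau_h)/\Db_{\theta^*}^\rho(\tau_h))^2]$ — a summand of $C(\pi^*|\rho)$ — while the other factor is $\Eb_{\theta^*}^\rho[\|\Tc_h^{\theta^*}(\cdot|\tau_h)-\Tc_h^\theta(\cdot|\tau_h)\|_1^2]$; because marginalizing the future trajectory onto $o_{h+1}$ only contracts total variation, this is at most $\Eb_{\theta^*}^\rho[\|\Db_{\theta^*}^\rho(\omega_h|\tau_h)-\Db_\theta^\rho(\omega_h|\tau_h)\|_1^2]$, which step one bounds. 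Summing the $H$ terms and inserting the four values of $c_{\mathrm{u}}$ yields the four displayed rates (the $H$ prefactor coming from the $H$ telescoped terms, each individually bounded). The main obstacle is exactly this change of measure: it must be routed so that only the second moment of $\Db_{\theta^*}^{\pi^*}/\Db_{\theta^*}^\rho$ (which $C(\pi^*|\rho)$ is designed to control) and squared distances under $\rho$ (which the likelihood test controls through Hellinger) appear — a naive importance weight of the whole trajectory would instead produce a $\chi^2$-divergence between $\theta$ and $\theta^*$ that the confidence set does not bound. The rest is bookkeeping: checking that $\theta^*\in\mathcal{C}$ and the conditional bound hold on a common $1-\delta$ event under $\varepsilon=1/(NH)$, $\beta=O(\log(\mathcal{N}_\varepsilon(\Theta)/\delta))$, and tracking $C_B$, $C_{\Pc}^{\KL}$, $C_{\Tc}^{\KL}$ through the four cases.
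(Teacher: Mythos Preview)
Your proposal is correct and follows essentially the same approach as the paper: the double-pessimism reduction $V_{B(\theta^*),R}^{\pi^*}-V_{B(\theta^*),R}^{\hat\pi}\le\max_{\theta\in\mathcal{C}}\big(V_{B(\theta^*),R}^{\pi^*}-V_{B(\theta),R}^{\pi^*}\big)$, the simulation lemmas to pass to $\|\Db_{\theta^*}^{\pi^*}-\Db_\theta^{\pi^*}\|_1$, the telescoping into per-step transition TV, Cauchy--Schwarz against the density ratio to bring in the Type-II coefficient, and the MLE/Hellinger guarantee under $\rho$ are exactly the paper's steps. The only cosmetic difference is that the paper bounds $\Eb_{\theta^*}^\rho[\|\Tc_h^{\theta^*}-\Tc_h^\theta\|_1^2]$ directly via $\Eb_{\theta^*}^\rho[\mathtt{D}_{\mathtt{H}}^2(\Tc_h^{\theta^*},\Tc_h^\theta)]\lesssim\mathtt{D}_{\mathtt{H}}^2(\Db_{\theta^*}^\rho,\Db_\theta^\rho)$ (\Cref{lemma:TV and hellinger}) rather than going through the future-trajectory conditional TV first; your contraction step is valid but unnecessary once the Hellinger route is in hand, and your stated bound $\sum_h\Eb_{\theta^*}^\rho[\|\cdot\|_1^2]\lesssim\beta/N$ should really be per-$h$ (or $\lesssim H\beta/N$ for the sum) --- which is exactly what your ``each individually bounded'' accounting uses anyway.
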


\begin{proof}
We first prove the result for the setting when the uncertainty set is $\Pc$-type with $\ell_1$ distance.

Let $\hat{\theta} = \arg\min_{ \theta\in\mathcal{C} } V_{B_{\Pc}^1(\theta), R}^{\hat{\pi}} $. Then, we perform the decomposition as follows.
\begin{align*}
    V_{B_{\Pc}^1(\theta^*), R}^{\pi^*} &- V_{B_{\Pc}^1(\theta^*), R}^{\hat{\pi}} \\
    & = V_{B_{\Pc}^1(\theta^*), R}^{\pi^*} - V_{B_{\Pc}^1(\hat{\theta}), R}^{\pi^*} + \underbrace{V_{B_{\Pc}^1(\hat{\theta}), R}^{\pi^*}  -  V_{B_{\Pc}^1(\hat{\theta}), R}^{\hat{\pi}} }_{I_1} + \underbrace{ V_{B_{\Pc}^1(\hat{\theta}), R}^{\hat{\pi}} - V_{B_{\Pc}^1(\theta^*), R}^{\hat{\pi}} }_{I_2} \\
    &\leq V_{B_{\Pc}^1(\theta^*), R}^{\pi^*} - V_{B_{\Pc}^1(\hat{\theta}), R}^{\pi^*},
\end{align*}
where $I_1$ is due to the optimality $\hat{\pi}$, and $I_2$ follows from the property of $\hat{\theta}$.

By \Cref{lemma:Sim P TV}, we have
\begin{align*}
    V_{B_{\Pc}^1(\theta^*), R}^{\pi^*} &- V_{B_{\Pc}^1(\hat{\theta}), R}^{\pi^*} \\
    &\leq \left\|\Db_{\theta^*}^{\pi^*} - \Db_{\hat{\theta}}^{\pi^*} \right\|_1 \\
    &\overset{(a)}\leq \sum_{h=1}^H \Eb_{\theta^*}^{\pi^*} \left[ \left\|\Tc^{\theta^*}(\cdot|\tau_h) - \Tc^{\hat{\theta}}(\cdot|\tau_h) \right\|_1 \right] \\
    &\overset{(b)}\leq \sum_{h=1}^H \sqrt{ C(\pi^*|\rho)\Eb_{\theta^*}^{\rho} \left[ \left\|\Tc^{\theta^*}(\cdot|\tau_h) - \Tc^{\hat{\theta}}(\cdot|\tau_h) \right\|_1^2 \right] } \\
    &\overset{(c)}\lesssim \sum_{h=1}^H \sqrt{ C(\pi^*|\rho) \Eb_{\theta^*}^{\rho} \left[ \mathtt{D}_{\mathrm{H}}^2 \left(\Tc^{\theta^*}(\cdot|\tau_h) , \Tc^{\hat{\theta}}(\cdot|\tau_h) \right) \right] } \\
    &\overset{(d)}\lesssim H  \sqrt{  C(\pi^*|\rho)  \mathtt{D}_{\mathrm{H}}^2 \left(\Db_{\theta^*}^{\rho} , \Db_{\hat{\theta}}^{\rho}  \right)   } \\
    &\leq H\sqrt{ \frac{\beta C(\pi^*|\rho) }{N}},  
\end{align*}
where $(a)$ follows from \Cref{lemma: E TV < TV}, $(b)$ is due to the definition of type-II concentrability coefficient, and $(c),(d)$ follows from \Cref{lemma:TV and hellinger}.

The other three results follows the same argument by utilizing \Cref{lemma:Sim P KL} and \Cref{lemma:Sim T}.
\end{proof}

\section{Auxiliary Lemmas}

\begin{lemma}\label{lemma: E TV < TV}
    For two distributions $P$ and $Q$, we have
    \[ \Eb_{X\sim P_X} \left[ \left\|P_Y(\cdot|X) - Q_Y(\cdot|X) \right\|_1 \right] \leq 2\|P_{X,Y} - Q_{X,Y}\|_1.\]
    In addition, the following inequality holds.
    \[ \|P_{X,Y} - Q_{X,Y}\|_1\leq \Eb_{P}[\|P_{Y|X}(\cdot|X) - Q_{Y|X}(\cdot|X)\|_1] + \|P_X-Q_X\|_1.\]
\end{lemma}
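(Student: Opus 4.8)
\textbf{Proof plan for Lemma~\ref{lemma: E TV < TV}.}
The plan is to prove the two inequalities separately, both by direct manipulation of the $\ell_1$ norm as a sum over the product space. Throughout I write $P_X(x) = \sum_y P_{X,Y}(x,y)$ for the marginal and $P_{Y|X}(y|x) = P_{X,Y}(x,y)/P_X(x)$ for the conditional (handling $P_X(x)=0$ by convention, so that the corresponding term contributes nothing). The key algebraic identity I will use repeatedly is the ``add and subtract'' decomposition
\[
P_{X,Y}(x,y) - Q_{X,Y}(x,y) = \big(P_{Y|X}(y|x) - Q_{Y|X}(y|x)\big)P_X(x) + Q_{Y|X}(y|x)\big(P_X(x) - Q_X(x)\big),
\]
together with the triangle inequality.

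For the \emph{first} inequality, I would start from $\Eb_{X\sim P_X}[\|P_{Y|X}(\cdot|X) - Q_{Y|X}(\cdot|X)\|_1] = \sum_x P_X(x)\sum_y |P_{Y|X}(y|x) - Q_{Y|X}(y|x)|$, and rewrite the inner difference as $|P_{X,Y}(x,y) - P_X(x)Q_{Y|X}(y|x)|/P_X(x)$ (after multiplying through by $P_X(x)$), so the weighting $P_X(x)$ cancels. Then add and subtract $Q_{X,Y}(x,y)$ inside the absolute value to get $\sum_{x,y}|P_{X,Y}(x,y) - Q_{X,Y}(x,y)| + \sum_{x,y} Q_{Y|X}(y|x)|Q_X(x) - P_X(x)|$; the first term is $\|P_{X,Y}-Q_{X,Y}\|_1$, and the second sums over $y$ to $\sum_x |P_X(x)-Q_X(x)| = \|P_X - Q_X\|_1$, which is at most $\|P_{X,Y}-Q_{X,Y}\|_1$ since marginalization is a contraction in $\ell_1$. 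Adding the two bounds yields the factor $2$. (This also incidentally proves the second inequality along the way, just with the roles of conditioning reorganized.)

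For the \emph{second} inequality, I would apply the displayed ``add and subtract'' identity directly, take absolute values, use the triangle inequality, and sum over all $(x,y)$: the first group becomes $\sum_x P_X(x)\sum_y |P_{Y|X}(y|x)-Q_{Y|X}(y|x)| = \Eb_P[\|P_{Y|X}(\cdot|X) - Q_{Y|X}(\cdot|X)\|_1]$, and the second group becomes $\sum_x |P_X(x)-Q_X(x)|\sum_y Q_{Y|X}(y|x) = \|P_X - Q_X\|_1$, giving exactly the claimed bound.

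I do not anticipate a genuine obstacle here; this is a routine measure-theoretic / summation exercise. The only points requiring minor care are (i) the degenerate cases $P_X(x)=0$ or $Q_X(x)=0$, which should be dispatched with an explicit convention so that the cancellation of the $P_X(x)$ weight is legitimate, and (ii) being consistent about which distribution's conditional is used in the add-and-subtract step, since swapping $P_{Y|X}$ for $Q_{Y|X}$ changes which marginal term appears but not the final bound. For the application in the paper (with $X = \tau_h$, $Y$ the remaining coordinates), both $P$ and $Q$ are genuine probability measures on a finite/countable space, so no integrability issues arise.
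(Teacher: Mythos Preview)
Your proposal is correct and follows essentially the same approach as the paper's proof: for both inequalities you multiply the conditional difference by $P_X(x)$, add and subtract $Q_{Y|X}(y|x)Q_X(x)$ (or $Q_{Y|X}(y|x)P_X(x)$ for the second inequality), apply the triangle inequality, and then use that $\|P_X-Q_X\|_1 \le \|P_{X,Y}-Q_{X,Y}\|_1$ by marginalization. The paper does exactly this, line by line; your description of the computation matches the paper's proof.
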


\begin{proof}
We have
\begin{align*}
    \Eb_{X\sim P_X}&\left[ \left\|P_Y(\cdot|X) - Q_Y(\cdot|X) \right\|_1 \right] \\
    & = \sum_x P_X(x) \sum_y| P_{Y|X}(y|x) - Q_{Y|X}(y|x)  | \\
    & = \sum_x \sum_y| P_{Y|X}(y|x)P_X(x) - Q_{Y|X}(y|x) P_X(x)  | \\
    & = \sum_x \sum_y| P_{Y|X}(y|x)P_X(x) - Q_{Y|X}(y|x) Q_X(x) + Q_{Y|X}(y|x) Q_X(x) -  Q_{Y|X}(y|x) P_X(x)  | \\
    & \leq \sum_x \sum_y| P_{Y|X}(y|x)P_X(x) - Q_{Y|X}(y|x) Q_X(x) | \\
    &\quad + \sum_x \sum_y| Q_{Y|X}(y|x) Q_X(x) -  Q_{Y|X}(y|x) P_X(x)  | \\
    & = \|P_{X,Y} - Q_{X,Y}\|_1 + \sum_x|P_X(x) - Q_X(x)| \\
    & = \|P_{X,Y} - Q_{X,Y}\|_1 + \sum_x|\sum_y P_{Y|X}(y|x)P_X(x) - \sum_yQ_{Y|X}(y|x) Q_X(x)|\\
    &\leq 2\|P_{X,Y} - Q_{X,Y}\|_1.
\end{align*}

On the other hand,
\begin{align*}
    \|P_{X,Y} &- Q_{X,Y}\|_1\\
    &= \sum_x \sum_y| P_{Y|X}(y|x)P_X(x) - Q_{Y|X}(y|x) Q_X(x) | \\
    & = \sum_x \sum_y| P_{Y|X}(y|x)P_X(x) - Q_{Y|X}(y|x) P_X(x) + Q_{Y|X}(y|x) P_X(x) -  Q_{Y|X}(y|x) Q_X(x)  |  \\
    &\leq \Eb_{P}[\|P_{Y|X}(\cdot|X) - Q_{Y|X}(\cdot|X)\|_1] + \|P_X-Q_X\|_1.
\end{align*}
\end{proof}

\begin{lemma}\label{lemma:TV and hellinger}
Given two bounded measures $P$ and $Q$ defined on the set $\mathcal{X}$. Let $|P| = \sum_{x\in\mathcal{X}} P(x)$ and $|Q| = \sum_{x\in\mathcal{X}}Q(x).$ We have
\[ \|P-Q\|_1^2  \leq 4(|P|+|Q|)\mathtt{D}_{\mathtt{H}}^2(P,Q)  \]
In addition, if $P_{Y|X}, Q_{Y|X}$ are two conditional distributions over a random variable $Y$, and $P_{X,Y} = P_{Y|X}P$, $Q_{X,Y}= Q_{Y|X}Q$ are the joint distributions when $X$ follows the distributions $P$ and $Q$, respectively, we have
\[\mathop{\Eb}_{X\sim P }\left[ \mathtt{D}_{\mathtt{H}}^2(P_{Y|X},Q_{Y|X})\right] \leq 8\mathtt{D}_{\mathtt{H}}^2 (P_{X,Y},Q_{X,Y}). \]
\end{lemma}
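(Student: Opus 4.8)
The plan is to treat the two inequalities separately, both relying only on Cauchy--Schwarz and the square-root parametrization behind the Hellinger distance.

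For the first inequality, I would start from the factorization $|P(x)-Q(x)| = \big|\sqrt{P(x)}-\sqrt{Q(x)}\big|\cdot\big(\sqrt{P(x)}+\sqrt{Q(x)}\big)$, sum over $x$, and apply Cauchy--Schwarz to split the sum:
\[
\|P-Q\|_1 \;\le\; \sqrt{\textstyle\sum_x\big(\sqrt{P(x)}-\sqrt{Q(x)}\big)^2}\;\cdot\;\sqrt{\textstyle\sum_x\big(\sqrt{P(x)}+\sqrt{Q(x)}\big)^2}.
\]
The first factor equals $\sqrt{2\,\mathtt{D}_{\mathtt{H}}^2(P,Q)}$ by definition, and the second is at most $\sqrt{2\sum_x(P(x)+Q(x))}=\sqrt{2(|P|+|Q|)}$ since $(a+b)^2\le 2(a^2+b^2)$. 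Squaring the product gives $\|P-Q\|_1^2\le 4(|P|+|Q|)\,\mathtt{D}_{\mathtt{H}}^2(P,Q)$.

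For the second inequality, I would expand the left-hand side as
\[
\mathop{\Eb}_{X\sim P}\!\left[\mathtt{D}_{\mathtt{H}}^2(P_{Y|X},Q_{Y|X})\right]
= \tfrac12\sum_{x,y}\Big(\sqrt{P(x)P_{Y|X}(y|x)}-\sqrt{P(x)Q_{Y|X}(y|x)}\Big)^2,
\]
and then insert the ``correct'' joint term $\sqrt{Q_{X,Y}(x,y)}=\sqrt{Q(x)Q_{Y|X}(y|x)}$ using the elementary bound $(a-b)^2\le 2(a-c)^2+2(c-b)^2$ with $a=\sqrt{P_{X,Y}(x,y)}$, $c=\sqrt{Q_{X,Y}(x,y)}$, $b=\sqrt{P(x)Q_{Y|X}(y|x)}$. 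The first resulting piece sums to $2\,\mathtt{D}_{\mathtt{H}}^2(P_{X,Y},Q_{X,Y})$; the second factors as $\sum_x(\sqrt{P(x)}-\sqrt{Q(x)})^2\sum_y Q_{Y|X}(y|x)=2\,\mathtt{D}_{\mathtt{H}}^2(P_X,Q_X)$ after using $\sum_y Q_{Y|X}(y|x)=1$. Finally, I would invoke the marginalization (data-processing) inequality $\mathtt{D}_{\mathtt{H}}^2(P_X,Q_X)\le \mathtt{D}_{\mathtt{H}}^2(P_{X,Y},Q_{X,Y})$, which itself follows from $\sum_y\sqrt{P_{Y|X}(y|x)Q_{Y|X}(y|x)}\le 1$ (Cauchy--Schwarz) applied to the Bhattacharyya coefficient $\sum_{x,y}\sqrt{P_{X,Y}Q_{X,Y}}$. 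Combining yields the bound with constant $4$, hence a fortiori with $8$.

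There is no genuine obstacle here; the one place requiring a little care is bookkeeping which marginal ($P_X$ versus $Q_X$) appears after the $X$-expectation is absorbed into the sum, and making sure the auxiliary term collapses via $\sum_y Q_{Y|X}(y|x)=1$. Everything else reduces to Cauchy--Schwarz and $(a-b)^2\le 2(a-c)^2+2(c-b)^2$.
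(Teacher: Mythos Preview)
Your proof is correct and follows essentially the same route as the paper: the same square-root factorization plus Cauchy--Schwarz for the first inequality, and the same add-and-subtract of $\sqrt{Q_{X,Y}}$ together with $(a-b)^2\le 2(a-c)^2+2(c-b)^2$ and the data-processing inequality for the second. Your bookkeeping is in fact tighter --- by carrying the $\tfrac12$ in the Hellinger definition through correctly you obtain the constant $4$, whereas the paper drops that factor in the first line of the second argument and arrives at $8$.
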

 
\begin{proof}
We first prove the first inequality.  By the definition of total variation distance, we have
    \begin{align*}
        \|P-Q\|_1^2 & = \left(\sum_x |P(x) - Q(x)| \right)^2\\
        & = \left(\sum_x \left(\sqrt{P(x)} - \sqrt{Q(x)}\right)\left(\sqrt{P(x)} + \sqrt{Q(x)} \right) \right)^2\\
        &\overset{(a)}\leq \left(\sum_x\left(\sqrt{P(x)} - \sqrt{Q(x)}\right)^2\right) \left( 2\sum_{x}\left( P(x) + Q(x)\right) \right)\\
        &\leq 4(|P| + |Q|) \mathtt{D}_{\mathtt{H}}^2(P,Q),
    \end{align*}
where $(a)$ follows from the Cauchy's inequality and because $(a+b)^2\leq 2a^2+2b^2$.

For the second inequality, we have,
\begin{align*}
    \mathop{\Eb}_{X\sim P } &\left[ \mathtt{D}_{\mathtt{H}}^2(P_{Y|X},Q_{Y|X})\right]\\
    &= \sum_{x}P(x)\left( \sum_y \left(\sqrt{P_{Y|X}(y)} - \sqrt{Q_{Y|X}(y)} \right)^2 \right) \\
    & = \sum_{x,y} \left( \sqrt{P_{X,Y}(x,y)} - \sqrt{Q_{X,Y}(x,y)} + \sqrt{Q_{Y|X}(y) Q(x)} - \sqrt{Q_{Y|X}(y) P(x)}  \right)^2 \\
    &\leq 2\sum_{x,y} \left( \sqrt{P_{X,Y}(x,y)} - \sqrt{Q_{X,Y}(x,y)} \right)^2 + 2\sum_{x,y} Q_{Y|X}(y)\left( \sqrt{  Q(x)} - \sqrt{  P(x)}  \right)^2 \\
    & = 4\mathtt{D}_{\mathtt{H}}^2(P_{X,Y},Q_{X,Y}) + 2(|P|+|Q| - 2\sum_x\sqrt{P(x)Q(x)}) \\
    &\overset{(a)}\leq 4\mathtt{D}_{\mathtt{H}}^2(P_{X,Y},Q_{X,Y}) + 2(|P|+|Q| - 2\sum_x\sum_y\sqrt{P_{Y|X}(y)P(x)Q_{Y|X}(y)Q(x)}) \\
    & = 8 \mathtt{D}_{\mathtt{H}}^2(P_{X,Y},Q_{X,Y}),
\end{align*}
where $(a)$ follows from the Cauchy's inequality that applies on $\sum_y\sqrt{P_{Y|X}(y)Q_{Y|X}(y)}$.
\end{proof}

\end{document}